\documentclass{article}

\usepackage[preprint]{neurips_2026}

\usepackage[utf8]{inputenc}
\usepackage[T1]{fontenc}
\usepackage[table]{xcolor}
\usepackage[colorlinks=true,allcolors=blue,bookmarks=true]{hyperref}
\usepackage{url}
\usepackage{booktabs}
\usepackage{amsfonts}
\usepackage{amsmath}
\usepackage{amssymb}
\usepackage{amsthm}
\usepackage{microtype}
\usepackage{graphicx}
\usepackage{xspace}
\usepackage[nameinlink,capitalize]{cleveref}
\usepackage{enumitem}
\usepackage{placeins}
\usepackage{aliascnt}

\crefname{section}{Sec.}{Secs.}
\Crefname{section}{Sec.}{Secs.}
\crefname{subsection}{Sec.}{Secs.}
\Crefname{subsection}{Sec.}{Secs.}
\crefname{figure}{Fig.}{Figs.}
\Crefname{figure}{Fig.}{Figs.}
\crefname{table}{Tab.}{Tabs.}
\Crefname{table}{Tab.}{Tabs.}
\crefname{equation}{Eq.}{Eqs.}
\Crefname{equation}{Eq.}{Eqs.}
\crefname{appendix}{App.}{Apps.}
\Crefname{appendix}{App.}{Apps.}
\crefname{lemma}{Lemma}{Lemmas}
\Crefname{lemma}{Lemma}{Lemmas}
\crefname{theorem}{Theorem}{Theorems}
\Crefname{theorem}{Theorem}{Theorems}
\crefname{proposition}{Proposition}{Propositions}
\Crefname{proposition}{Proposition}{Propositions}
\crefname{corollary}{Corollary}{Corollaries}
\Crefname{corollary}{Corollary}{Corollaries}
\crefname{definition}{Definition}{Definitions}
\Crefname{definition}{Definition}{Definitions}
\crefname{remark}{Remark}{Remarks}
\Crefname{remark}{Remark}{Remarks}

\newaliascnt{lemma}{theorem}
\newtheorem{lemma}[lemma]{Lemma}
\aliascntresetthe{lemma}
\newaliascnt{corollary}{theorem}
\newtheorem{corollary}[corollary]{Corollary}
\aliascntresetthe{corollary}
\newaliascnt{proposition}{theorem}
\newtheorem{proposition}[proposition]{Proposition}
\aliascntresetthe{proposition}

\newtheorem{remark}{Remark}

\newcommand{\X}{\mathcal{X}}
\newcommand{\Y}{\mathcal{Y}}
\newcommand{\Zx}{\mathcal{Z}_x}
\newcommand{\Zy}{\mathcal{Z}_y}
\newcommand{\R}{\mathbb{R}}
\newcommand{\E}{\mathbb{E}}
\newcommand{\Dtrain}{\mathcal{D}_{\mathrm{train}}}
\newcommand{\Dtest}{\mathcal{D}_{\mathrm{test}}}
\newcommand{\eps}{\varepsilon}

\newcommand{\ie}{i.e.,\xspace}
\newcommand{\eg}{e.g.,\xspace}

\title{Does Your Neural Network Extrapolate? \\
Feature Engineering as Identifiability Bias for OOD Generalization}

\author{
Leonel Aguilar \\
\small Chair of Cognitive Science, ETH Zürich \\
\texttt{aleonel@ethz.ch} \\
\And
Jan Nagler \\
\small Centre for Human and Machine Intelligence, Frankfurt School \\
\texttt{j.nagler@fs.de} \\
\And
Christoph Hoelscher \\
\small Chair of Cognitive Science, ETH Zürich \\
\texttt{choelsch@ethz.ch} \\
\And
Nino Antulov-Fantulin \\
\small Aisot Technologies AG, \\
\small D-GESS, ETH Zürich \\
\texttt{nino@aisot.ch} \\
}

\begin{document}

\maketitle

\begin{abstract}
Successful deep neural networks discover salient features of data.
We show when and why they fail to learn out-of-distribution (OOD)-relevant representations from an in-distribution (ID) training window.
This requires decoupling feature learning from data-generating-process (DGP) identifiability.
From a single training window, OOD extrapolation is non-identifiable: infinitely many DGPs are $\varepsilon$-observationally equivalent on the training data but diverge arbitrarily outside it, and no in-distribution criterion alone reliably breaks the tie.
A structural commitment, the feature map, label map, and model class $(\varphi, \psi, \mathcal{M})$, dictates the assumed DGP and governs OOD generalization while leaving ID performance essentially unchanged.
When architecture, pretraining, augmentation, input formats, or domain knowledge implicitly inject the missing commitment, the model succeeds.
When it cannot infer OOD-relevant structure from ID evidence, it fails.
Changing only the representation can make the same architecture, at the same in-distribution loss, differ by ${\sim}520\times$ out of distribution.
When the commitment is correct {\em and} identifiable, OOD error vanishes.
For example, Fourier coordinates turn periodic extrapolation into interpolation on $\mathbb{S}^1$.
The same mechanism predicts outcomes in three natural-science settings (mass-action chemistry; Kepler's-third-law exoplanet prediction, $n=2{,}362$; and cross-species coding-DNA detection) and in a 264-run positional-encoding study across Transformer, Mamba, and S4D.
Finally, a controlled study shows: correct features are necessary but not sufficient.
The model class must express the target, and the transformed training data must cover the relevant representation space.
Thus, feature engineering is not a departure from what makes deep learning successful.
It is the explicit structural commitment that makes extrapolation identifiable.
\end{abstract}

\section{Introduction}
\label{sec:intro}

\begin{figure}[t]
  \centering
  \includegraphics[width=0.8\linewidth]{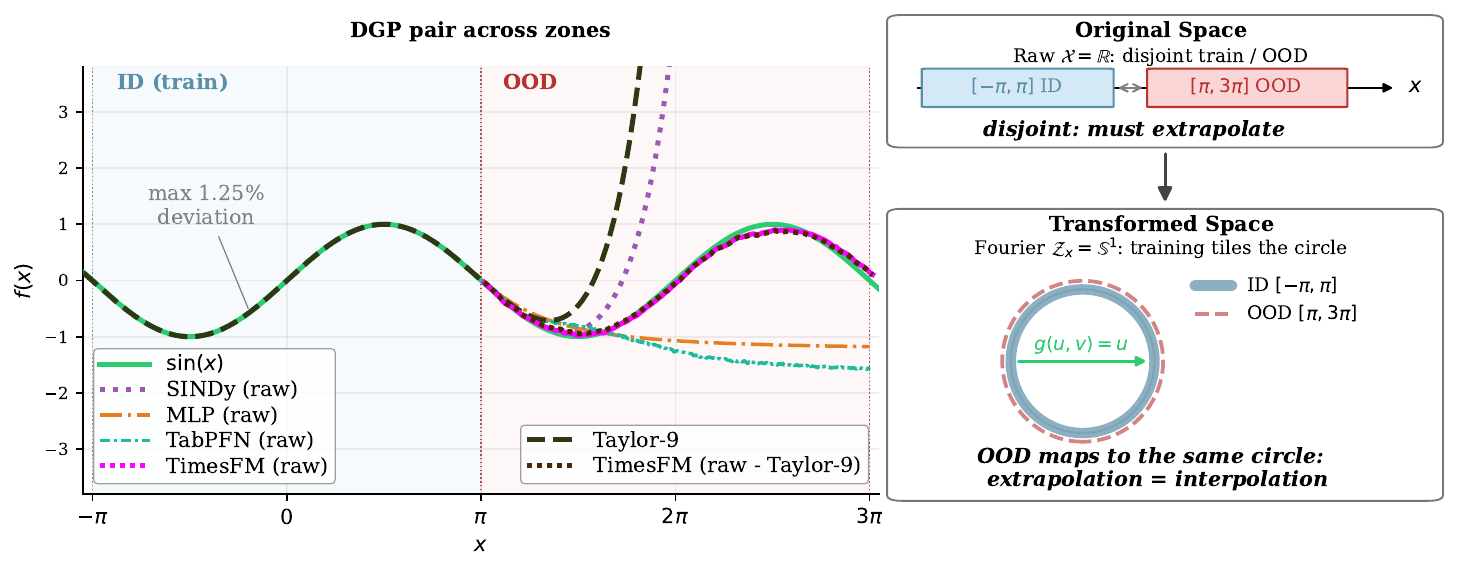}
  \caption{\emph{Left:} $\sin(x)$ (green) and its degree-9 Taylor polynomial (black) can be observationally equivalent (depending on noise and amount of samples) on the training window and diverge on $[\pi, 3\pi]$. Models (MLP, SINDy, TabPFN, TimesFM) trained/evaluated on the Original (raw) space without noise are shown.
    \emph{Right:} in Fourier coordinates $\varphi(x) = (\sin x, \cos x)$, both (ID, OOD) map to the same circle $\mathbb{S}^1$, turning OOD extrapolation into interpolation.\label{fig:exp11_concept}}
\end{figure}

While the Universal Approximation Theorem guarantees that neural networks are exceptional interpolators within the support of their training data, they struggle with out-of-distribution extrapolation. Frameworks from Statistical Learning Theory provide guarantees for in-distribution generalization, however, achieving mathematically guaranteed extrapolation in deep learning remains an unsolved problem. We believe it is the right time to ask the question: When can neural networks extrapolate? We start our journey with a good old $\sin(x)$ function.
A neural network, $5$-hidden-layer MLP with $256$ Tanh units, trained on raw $f(x) = \sin(x)$ on $[-\pi, \pi]$ (\Cref{fig:exp11_concept}) extrapolates as a smooth, eventually-levelling curve with high OOD relative error while an identical architecture trained on $\varphi(x) = (\sin x, \cos x)$ reaches OOD relative error below $1\%$ (\Cref{tab:exp11}). The two have indistinguishable training loss; they differ only in the feature map applied before the model. Two assumptions, periodic and polynomial-like (Taylor is a valid hypothesis on $[-\pi, \pi]$), can be observationally equivalent on the training window and diverge outside it.

\textbf{The indistinguishability problem} is structural. For any finite training window $W$, infinitely many processes agree on $W$ to any fixed tolerance and disagree arbitrarily outside it. \Cref{prop:erm} formalises this as a worst-case lower bound on any in-distribution test, and cross-validation (CV) falls short empirically even outside that worst case (\Cref{sec:exp31}). The modeller's choice of representation space and model class breaks the tie.

\textbf{\textit{Correctness} is relative to the model class.}
$(\sin x, \cos x)$ is the correct $\varphi$ for $\sin(x)$ \emph{only because} the downstream class favours linear combinations: it renders the target linearly expressible. A Fourier-series regressor would collapse $\varphi$ to the identity. Correctness is a joint property of $(\varphi,\psi, \mathcal{M})$, which is why we treat them as a single commitment (\Cref{sec:theory}). Concrete failures of correct $\varphi$ paired with non-matching $\mathcal{M}$ appear in \Cref{sec:controlled}. Three conditions are jointly necessary: (a) a representation $(\varphi, \psi)$ aligned with the DGP, (b) a model class that expresses the target between $\Zx$ and $\Zy$, and (c) coverage of $\Zx$ sufficient to identify the model. (a)+(b) constitute the DGP commitment; (c) is its evidential prerequisite (\Cref{sec:scope}). The headline comparisons hold (b) and (c) fixed, isolating (a).

\textbf{Contributions:}
(i) We formalise \textbf{observational equivalence} on a finite window, identify $(\varphi, \psi, \mathcal{M})$ as the DGP commitment, prove ERM on $W$ is structurally blind to the choice, and characterise near-boundary data of width $\delta \ge \eps/r$ as the minimum heuristic that breaks the blindness (\Cref{sec:theory}).
(ii) We isolate $\varphi$, $\psi$, and $\mathcal{M}$ in \textbf{controlled experiments} (\Cref{sec:controlled}) on minimal pairs and on pretrained foundation models (TimesFM~\citep{das2024decoder}, TabPFN~\citep{hollmann2025accurate}, governed by the same mechanism via input format).
(iii) We \textbf{validate cross-domain} (\Cref{sec:cross_domain}) on chemistry (mass-action kinetics), physics (Kepler exoplanets, $n{=}2{,}362$), and biology (cross-species coding-DNA-sequence (CDS) detection on six organisms), same prediction, all confirmed.
(iv) We deliver \textbf{three practitioner artefacts} (\Cref{sec:tools}): near-boundary selection, the SINDy (sparse identification of nonlinear dynamics)~\citep{brunton2016sindy} $\delta_{\mathrm{OOD}}$ diagnostic, and a content/position-embedding (CE/PE) reframing verified architecture-invariant on Transformer~\citep{vaswani2017attention} / Mamba~\citep{gu2023mamba} / S4D~\citep{gu2022s4d} ($264$ runs).
(v) We map \textbf{scope and limitations} on a $d$-torus benchmark, a periodic synthetic testbed (\Cref{sec:scope}): correct $\varphi$ is necessary but not sufficient; model class and coverage continue to constrain OOD generalisation. 
Extended versions of tables are given in the appendix.

\section{Related Work}
\label{sec:related_work}

Multi-environment domain generalisation, formalised by IRM~\citep{arjovsky2019irm} and catalogued by \citet{kaur2022modeling}, breaks ties through invariance penalties across training distributions. Our single-window setting offers no such auxiliary environments, leaving OOD risk non-identifiable from data alone, and the commitment $(\varphi, \psi, \mathcal{M})$ supplies the structural information that multi-environment access supplies elsewhere. The Rashomon set~\citep{breiman2001statistical, xin2022exploring} captures the same multiplicity at the model side: members agree on training data but disagree off-distribution, and $(\varphi, \psi)$ acts as a canonicaliser that selects a Rashomon element whose extension matches the assumed DGP.

A second body of work supplies the commitment implicitly via architecture or input format. PINNs~\citep{raissi2019physics} and Fourier Neural Operators~\citep{li2020fourier} bake in a differential operator (constraint on $\psi$) or a Fourier basis (a $\varphi$ that turns extrapolation into spectral interpolation), helping precisely when the assumed structure matches the DGP and degrading otherwise (\Cref{sec:scope}). Sinusoidal PE~\citep{vaswani2017attention}, RoPE~\citep{su2021roformer}, and ALiBi~\citep{press2022train} are partial periodic feature maps at the token-position level. \Cref{sec:exp33} evaluates state-of-the-art architectures: Mamba~\citep{gu2023mamba} and S4D~\citep{gu2022s4d}. On the activation side, \citet{ziyin2020neural} addresses the periodic-extrapolation failures by crafting an activation function with explicit periodic inductive bias, an $\mathcal{M}$-side commitment. In our controlled experiments, we show foundation models, TabPFN~\citep{hollmann2025accurate} and TimesFM~\citep{das2024decoder}, and how they inherit the same commitment problem through their input format: identical pretrained weights swing from chance to near-perfect when the user supplies a transformation (\Cref{sec:fm_probes}), reframing input-format engineering as the practitioner-side delivery of the DGP commitment to a frozen model. Finally, we adopt SINDy~\citep{brunton2016sindy} as a diagnostic: $\delta_{\mathrm{OOD}}$, the OOD derivative residual after sparse fitting, correlates with the canonically-correct $\varphi$ across our toy battery and four real-data experiments (\Cref{sec:exp32}).

\section{Observational Equivalence and the DGP Commitment}
\label{sec:theory}

\textbf{Setup.}
We build our mathematical framework under the assumption of noise-dominated regimes, evaluate it with varying degrees of noise in our experiments, and finally, in \Cref{sec:scope}, test the empirical limits with respect to coverage, dimensionality, target complexity order, and noise.

Target $f: \X \to \R$; observations $y_i = f(x_i) + \eta_i$ with $x_i \stackrel{\mathrm{iid}}{\sim} \Dtrain$ supported on a window $W \subseteq \X$ and $\eta_i \stackrel{\mathrm{iid}}{\sim} \mathcal{N}(0, \sigma^2)$.
A \emph{feature map} $\varphi: \X \to \Zx$ and a \emph{label map} $\psi: \Y \to \Zy$ recast learning as finding $h: \Zx \to \R$ with $h(\varphi(x)) \approx \psi(f(x))$; the predictor on the original scale is $\hat{f} = \psi^{-1} \circ h \circ \varphi$.
The pair $(\varphi, \psi)$ together with a model class $\mathcal{M}$ over $\Zx$ jointly form the \emph{DGP commitment} $(\varphi, \psi, \mathcal{M})$.
For a test distribution $\Dtest$, the OOD risk is the population functional $\eps_{\mathrm{OOD}}(h) = \E_{x \sim \Dtest}\bigl[\bigl(\hat{f}(x) - f(x)\bigr)^2\bigr]$. We do not require $\Dtest$ to be constructed with an i.i.d. process, and several experiments use deterministic grids, permitting worst-case $W_{\mathrm{OOD}}$ under the distribution shift $\Dtrain \ne \Dtest$.
Reported OOD numbers are mean absolute percentage error (MAPE) of $\hat{f}$ vs.\ $f$ on $\Dtest$ (the squared form above is the canonical definition). The $\delta_{\mathrm{OOD}}$ used in \Cref{sec:exp32} is a separate coordinate residual in $\Zx$, not an instance of $\eps_{\mathrm{OOD}}$.

\begin{proposition}[ERM is blind to observationally-equivalent processes]
\label{prop:erm}
Let $D_n = \{(x_i, y_i)\}_{i=1}^n$ with $x_i \stackrel{\mathrm{iid}}{\sim} \Dtrain$ on $W$ and $y_i = P_k(x_i) + \eta_i$, $\eta_i \stackrel{\mathrm{iid}}{\sim} \mathcal{N}(0, \sigma^2)$, for unknown $k \in \{1, 2\}$ where $P_1, P_2$ are $\eps$-observationally-equivalent on $W$.
For any (deterministic or randomised) test $T: D_n \to \{1, 2\}$,
\begin{equation}
  \min_{k \in \{1,2\}} \Pr_{D_n \sim \mathcal{D}_k^n}[T(D_n) = k]
  \;\le\; \frac{1}{2} + \frac{\eps\sqrt{n}}{4\sigma}.
\end{equation}
In particular, in the regime $\eps\sqrt{n} \ll \sigma$, no $W$-supported test reliably distinguishes $P_1$ from $P_2$.
Selection criteria such as cross-validation, training loss, or held-out loss on $W$ are special cases.
\end{proposition}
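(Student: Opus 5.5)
The proof is a standard two-point (Le Cam) argument. Throughout, $\eps$-observational equivalence on $W$ means $\sup_{x\in W}|P_1(x)-P_2(x)|\le\eps$. Let $\mathcal{D}_k^n$ be the joint law of $D_n$ under hypothesis $k$. Both laws share the same design marginal $\Dtrain^n$ on $W^n$. Conditionally on $x_{1:n}$, the labels are independent Gaussians $\mathcal{N}(P_k(x_i),\sigma^2)$.

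\textbf{Step 1: reduce to total variation.} For any randomised test $T$ we have
\begin{equation*}
\min_k \Pr_k[T=k]\le \tfrac12\bigl(\Pr_1[T=1]+\Pr_2[T=2]\bigr)\le \tfrac12\bigl(1+\mathrm{TV}(\mathcal{D}_1^n,\mathcal{D}_2^n)\bigr).
\end{equation*}
This holds because $\Pr_1[T=1]-\Pr_2[T=1]\le \mathrm{TV}$. For randomised $T$, apply this to the acceptance function $\E[\mathbf 1\{T=1\}\mid D_n]\in[0,1]$. It therefore suffices to show $\mathrm{TV}\le \eps\sqrt n/(2\sigma)$.

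\textbf{Step 2: bound TV via KL and Pinsker.} The designs have identical marginals, so the chain rule gives
\begin{equation*}
\mathrm{KL}(\mathcal{D}_1^n\|\mathcal{D}_2^n)=\E_{x_{1:n}}\sum_{i=1}^n \frac{(P_1(x_i)-P_2(x_i))^2}{2\sigma^2}\le \frac{n\eps^2}{2\sigma^2},
\end{equation*}
using $x_i\in W$ and $|P_1-P_2|\le\eps$ on $W$. Pinsker's inequality then gives $\mathrm{TV}\le\sqrt{\mathrm{KL}/2}\le \eps\sqrt n/(2\sigma)$. Combined with Step 1, this yields the claimed $\tfrac12+\eps\sqrt n/(4\sigma)$.

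An alternative route gives the same constant and is a useful check. Conditionally on $x_{1:n}$, the TV between two isotropic Gaussians with mean difference $\Delta$ equals $2\Phi(\|\Delta\|/2\sigma)-1\le \|\Delta\|/(\sigma\sqrt{2\pi})$. Since $\|\Delta\|\le\eps\sqrt n$, this conditional TV is also at most $\eps\sqrt n/(2\sigma)$. Averaging over $x_{1:n}$ via convexity of TV finishes the bound.

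\textbf{Step 3: the corollaries.} The regime statement is immediate: when $\eps\sqrt n\ll\sigma$, the bound is close to $1/2$, i.e.\ chance level. Cross-validation, training loss and held-out loss on $W$ are all measurable, possibly randomised, functions of $D_n$ followed by an argmin over $\{1,2\}$. Each is therefore a test $T$ of the form covered by the bound.

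The main point requiring care is the definition of $\eps$-observational equivalence. If it is a sup-norm condition on $W$, the argument above goes through directly. If instead it is an $L^2(\Dtrain)$ condition $\E_{x \sim \Dtrain}[(P_1(x)-P_2(x))^2]\le\eps^2$, the KL step still works because only the expectation over the design enters. In that case I would use the KL route, not the conditional Gaussian route, so that no pointwise bound is needed.
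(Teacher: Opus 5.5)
Your proposal is correct and follows the paper's own proof: Le Cam's two-point reduction to total variation, a KL computation that collapses to the conditional Gaussian term because both hypotheses share the design marginal $\Dtrain$, the bound $n\eps^2/(2\sigma^2)$ from the sup-norm agreement on $W$, and Pinsker's inequality to reach $\tfrac12+\eps\sqrt n/(4\sigma)$. Two small asides: your exact-Gaussian check actually gives the slightly sharper TV constant $1/\sqrt{2\pi}$ in place of $1/2$, not the same one; and your remark that only $\E_{x\sim\Dtrain}[(P_1(x)-P_2(x))^2]\le\eps^2$ is needed is a valid, mild generalisation of the sup-norm condition the paper uses.
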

The feature map determines which process the model assumes: a $\varphi_i$ for which $P_i \circ \varphi_i^{-1}$ is globally simple commits the model to $P_i$. Both $\varphi_1$ and $\varphi_2$ reach near-identical training error when $P_1, P_2$ agree on $W$; only the matching commitment extrapolates. The pair $(\varphi, \psi)$ thus canonicalises a subset of the Rashomon set $\mathcal{R}(\eps) = \{h: \mathcal{L}(h) \le \mathcal{L}^* + \eps\}$~\citep{breiman2001statistical} that can be unique on $W_{\mathrm{OOD}}$ even when many elements agree on $W$.

\begin{proposition}[Near-boundary data exits the agreement zone]
\label{prop:nearboundary}
Let $P_1, P_2$ be $\eps$-observationally equivalent on $W = [a, b]$ and diverge at rate $r > 0$ outside: $|P_1(b + \delta) - P_2(b + \delta)| \ge r\delta$ for $\delta > 0$ small.
Then for any near-boundary segment $V = [b, b + \delta]$ with $\delta \ge \eps/r$,
\begin{equation}
  \sup_{x \in V} |P_1(x) - P_2(x)| \;\ge\; \eps,
\end{equation}
\ie $V$ exits the $\eps$-agreement zone of $P_1, P_2$, while by \Cref{prop:erm} $W$ supports no reliable test for $\eps\sqrt{n} \ll \sigma$. The threshold $\delta \ge \eps/r$ is tight.
\end{proposition}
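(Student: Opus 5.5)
The inequality itself is essentially immediate from the divergence hypothesis. Take $x^\ast = b + \delta \in V$. By hypothesis $|P_1(x^\ast) - P_2(x^\ast)| \ge r\delta$, and since $\delta \ge \eps/r$ this gives $r\delta \ge \eps$. Hence
\[
\sup_{x \in V} |P_1(x) - P_2(x)| \;\ge\; |P_1(b+\delta) - P_2(b+\delta)| \;\ge\; r\delta \;\ge\; \eps .
\]
The one point to check is that $\delta$ lies within the range where the divergence hypothesis holds ("$\delta > 0$ small"). So I would state the result for $\delta \in [\eps/r, \delta_0]$, where $\delta_0$ is the radius of validity of the divergence bound. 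If $\delta > \delta_0$, I would apply the argument to the sub-segment $[b, b+\delta_0] \subseteq V$, assuming $\delta_0 \ge \eps/r$. The supremum over $V$ only grows under enlarging the segment, so the conclusion persists.

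The clause relating this to \Cref{prop:erm} needs no new work. By the standing assumption, $P_1$ and $P_2$ are $\eps$-observationally equivalent on $W$, so \Cref{prop:erm} applies verbatim to any test built from $W$-supported data. This yields the bound $\tfrac12 + \eps\sqrt n/(4\sigma)$ and hence no reliable test when $\eps\sqrt n \ll \sigma$. The contrast is that on $V$ the gap is at least $\eps$ at some point. I would not claim more than this "exits the zone" statement, since turning it into a testing guarantee would require a separate analysis of sampling density on $V$.

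The main obstacle is the tightness claim, which requires a construction showing that $\delta < \eps/r$ can fail. I would take $W = [a,b]$, $P_1 \equiv 0$, and $P_2(x) = r\,(x-b)_+$, the continuous piecewise-linear ramp. These two functions agree exactly on $W$, so they are trivially $\eps$-equivalent there. They satisfy $|P_1(b+\delta) - P_2(b+\delta)| = r\delta$ with equality for every $\delta > 0$. For any $\delta < \eps/r$, the supremum over $[b, b+\delta]$ equals $r\delta < \eps$, so $V$ stays inside the agreement zone and the threshold cannot be lowered.

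If smooth functions are preferred (for example polynomials, matching the $\sin$/Taylor motivation), I would instead take $P_2(x) = r(x-b) + c$ on a window where the difference is controlled. A smooth, monotone transition that is $\le \eps$ on $W$ and has slope exactly $r$ to the right of $b$ also works. Either way, the sup on $[b,b+\delta]$ is at most $\max(\eps, r\delta)$, so with the ramp choice (difference $0$ on $W$) it is exactly $r\delta$, and tightness follows.
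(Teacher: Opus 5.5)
Your argument is correct and is essentially the paper's proof: evaluate the divergence bound at the endpoint $b+\delta\in V$ to get $r\delta\ge\eps$, and show tightness with the same ramp $P_1\equiv 0$, $P_2(x)=r(x-b)_+$; your point that $\eps/r$ must lie inside the range where the ``$\delta$ small'' divergence bound holds is a genuine care point the paper passes over. Drop the smooth aside in your last paragraph, because it is wrong as written: any nonzero gap $c>0$ at $b$ (e.g.\ $P_2=r(x-b)+c$) gives $\sup_{[b,b+\delta]}|P_1-P_2|=r\delta+c$, which reaches $\eps$ before $\delta=\eps/r$, and a bound of the form $\max(\eps,r\delta)$ can never certify $\sup<\eps$; rely on the ramp instead, or on the linear $P_2=r(x-b)$ when $r(b-a)\le\eps$ if you want a polynomial example.
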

Proofs for propositions are given in \Cref{app:proof_prop_erm,app:proof_prop_nearboundary}.

\textbf{Constructive existence.}
\emph{When $\mathcal{M}$ contains $g^\star$ with $g^\star \circ \varphi = f$ on all of $\X$ (not just $W$) and ERM identifies $g^\star$ from $n$ samples, $\eps_{\mathrm{OOD}}(\hat{g}_n) = 0$ on any $\Dtest$.}
The sinusoidal map $\varphi(x) = (\sin x, \cos x)$ instantiates this: ordinary least squares on $n \ge 2$ samples recovers $\mathbf{w} = (1, 0)$ with probability $1$ and extrapolates exactly, as also shown for log-log and log-y transformations (\Cref{lem:canon,lem:sin,lem:exp,lem:powerlaw,cor:canon-recovery}, proofs in \Cref{app:proof_canon}).

\section{Controlled Experiments}
\label{sec:controlled}

Each experiment explores different factors of the commitment $(\varphi, \psi, \mathcal{M})$.
All MLPs use $5$ hidden layers with $256$ Tanh units, trained $3{,}000$ epochs with Adam; OOD relative error is $\overline{|\hat{y} - y|}\,/\,\overline{|y|} \times 100\%$, mean $\pm$ std over $3$ seeds.

\subsection{Exp.~1.1: $\sin(x)$ vs.\ Taylor-9 ($\varphi$ alone)}
\label{sec:exp11}

\noindent\textit{Same models, same training data, only $\varphi$ varies.}
We start with a simple toy model without noise. $f(x) = \sin(x)$ on $W = [-\pi, \pi]$, OOD $[\pi, 3\pi]$, and the degree-$9$ Taylor expansion can be observationally equivalent on $W$ ($1.09\%$ max relative deviation) and diverge on OOD; in this example, since there is no noise, the functions are fully identifiable.
We contrast raw $x$ with $\varphi(x) = (\sin x, \cos x)$ (biasing to a periodic DGP); see \Cref{fig:exp11_concept}.

\begin{table}[ht]
  \centering
  \small
  \caption{Exp.~1.1: $P_1{=}\sin(x)$ vs.\ $P_2{=}\mathrm{Taylor}_9(x)$, train $[-\pi, \pi]$ (max rel.\ dev.\ ${<}1.1\%$), OOD $[\pi, 3\pi]$. \colorbox{gray!12}{\strut Shaded rows} use a feature map. OOD relative error (\%), mean $\pm$ std over 3 seeds; best per column in bold.}
  \label{tab:exp11}
  \begin{tabular}{lllrrr}
    \toprule
    \textbf{DGP} & \textbf{Feature map} & \textbf{Model} & \textbf{ID (\%)} & \textbf{OOD $P_1$ (\%)} & \textbf{OOD $P_2$ (\%)} \\
    \midrule
    \rowcolor{gray!12}
    $P_1$: $\sin(x)$ & Fourier & OLS & 0.0 $\pm$ 0.0 & \textbf{0.0 $\pm$ 0.0} & 99.8 $\pm$ 0.0 \\
    \rowcolor{gray!12}
     & Fourier & SINDy & 0.0 $\pm$ 0.0 & \textbf{0.0 $\pm$ 0.0} & 99.8 $\pm$ 0.0 \\
    \rowcolor{gray!12}
     & Fourier & MLP & 0.3 $\pm$ 0.3 & 0.3 $\pm$ 0.3 & 99.8 $\pm$ 0.0 \\
    \rowcolor{gray!12}
     & Fourier & TabPFN & 0.1 $\pm$ 0.0 & 0.1 $\pm$ 0.0 & 99.8 $\pm$ 0.0 \\
    \rowcolor{gray!12}
     & Fourier & TimesFM (cov) & -- & 1.0 $\pm$ 0.0 & 99.8 $\pm$ 0.0 \\
    \cmidrule(lr){2-6}
     & None & SINDy & 0.0 $\pm$ 0.0 & $\gg 9{,}999$ & 42.5 $\pm$ 0.0 \\
     & None & MLP & 0.3 $\pm$ 0.1 & 155.9 $\pm$ 4.0 & 100.6 $\pm$ 0.0 \\
     & None & TabPFN & 0.2 $\pm$ 0.0 & 226.2 $\pm$ 0.0 & 101.0 $\pm$ 0.0 \\
     & None & TimesFM & -- & 8.9 $\pm$ 0.0 & 99.8 $\pm$ 0.0 \\
    \midrule
    $P_2$: Taylor-9 & None & SINDy & 0.0 $\pm$ 0.0 & $\gg 9{,}999$ & \textbf{0.0 $\pm$ 0.0} \\
     & None & MLP & 0.6 $\pm$ 0.5 & 154.3 $\pm$ 2.5 & 100.5 $\pm$ 0.0 \\
     & None & TabPFN & -- & 224.0 $\pm$ 0.0 & 101.0 $\pm$ 0.0 \\
     & None & TimesFM & -- & 9.5 $\pm$ 0.0 & 99.8 $\pm$ 0.0 \\
    \bottomrule
  \end{tabular}
\end{table}

The main takeaway of \Cref{tab:exp11} is Fourier MLP vs. Raw MLP: identical training error, yet ${\sim}520\times$ in OOD ($0.3\%$ vs.\ $155.9\%$). OLS on Fourier features attains $0.0\%$, matching \Cref{lem:sin}. Foundation models inherit the representation via input format: TimesFM and TabPFN succeed on Fourier columns and fail on raw $x$, with no change in weights. TimesFM is biased towards $\sin(x)$ even when the DGP is in fact Taylor-$9$. For a similar nonlinear toy example, see \Cref{app:exp13}, \Cref{fig:exp13_concept}. (TimesFM uses the official \texttt{forecast\_with\_covariates} API; SINDy uses a polynomial library throughout.)

\subsection{Exp.~1.2: power law vs.\ exponential (joint $(\varphi, \psi)$)}
\label{sec:exp12}

\noindent\textit{Adds a nonlinear label map $\psi$ and noise ($\sigma=0.2$).}
$P_1(x) = x^2$ and $P_2(x) = e^{(\ln 4)(x-1)}$ match exactly at the endpoints $x{=}1, 2$ and stay within ${\approx}11\%$ relative deviation across $W = [1, 2]$, then diverge on $[2, 10]$.
We pair $\psi = \log$ with $\varphi = \log$ (correct for $P_1$) or $\varphi = \mathrm{id}$ (correct for $P_2$). TabPFN applies $\phi$ to inputs and target and TimesFM uses the matching coordinate as a dynamic covariate to forecast $\log y$.

\begin{table}[t]
  \centering
  \small
  \caption{Exp.~1.2 (excerpt). $P_1 = x^2$ vs.\ $P_2 = e^{(\ln 4)(x-1)}$, training $[1, 2]$, OOD $[2, 10]$ (divergence $\sim 10^5$). OOD $P_1$ / OOD $P_2$ columns evaluate the same predictions against $x^2$ / $e^{\alpha(x-1)}$. \colorbox{gray!12}{\strut Shaded rows} use the correct feature map for OLS. Mean $\pm$ std over 3 seeds; best per column in bold; full grid in App.~\ref{app:exp12_full}.}
  \label{tab:exp12_main}
  \begin{tabular}{lllrrr}
    \toprule
    \textbf{DGP} & \textbf{Feature map} & \textbf{Model} & \textbf{ID (\%)} & \textbf{OOD $P_1$ (\%)} & \textbf{OOD $P_2$ (\%)} \\
    \midrule
    $P_1$: $x^2$ & None & MLP & 1.2 $\pm$ 0.2 & 85.1 $\pm$ 1.0 & 100.0 $\pm$ 0.0 \\
    \rowcolor{gray!12}
     & log-log & OLS & 1.0 $\pm$ 0.5 & \textbf{3.2 $\pm$ 1.7} & 99.8 $\pm$ 0.0 \\
    \rowcolor{gray!12}
     & log-log & TabPFN & 1.3 $\pm$ 0.3 & 85.4 $\pm$ 1.0 & 100.0 $\pm$ 0.0 \\
    \rowcolor{gray!12}
     & log-log & TimesFM & -- & 28.3 $\pm$ 0.0 & 99.8 $\pm$ 0.0 \\
     & log-$y$ & OLS & 3.2 $\pm$ 0.3 & $\gg 9{,}999$ & 13.5 $\pm$ 3.1 \\
    \midrule
    $P_2$: $e^{\alpha(x-1)}$ & None & MLP & 1.4 $\pm$ 0.2 & 88.3 $\pm$ 1.2 & 100.0 $\pm$ 0.0 \\
     & log-log & OLS & 3.1 $\pm$ 0.2 & 5.6 $\pm$ 3.1 & 99.8 $\pm$ 0.0 \\
    \rowcolor{gray!12}
     & log-$y$ & OLS & 1.1 $\pm$ 0.5 & $\gg 9{,}999$ & \textbf{12.8 $\pm$ 8.8} \\
    \rowcolor{gray!12}
     & log-$y$ & TabPFN & 1.4 $\pm$ 0.3 & 80.6 $\pm$ 3.3 & 100.0 $\pm$ 0.0 \\
    \rowcolor{gray!12}
     & log-$y$ & TimesFM & -- & $\gg 9{,}999$ & 29.6 $\pm$ 0.0 \\
    \bottomrule
  \end{tabular}
\end{table}

The correct commitments in \Cref{tab:exp12_main}, achieve $3.2\%$ ($P_1$, log-log OLS) and $12.8\%$ ($P_2$, log-$y$ OLS) while the incorrect collapse ($\gg 9{,}999\%$ for log-$y$ OLS on $P_1$, $99.8\%$ for log-log OLS on $P_2$). The residual $12.8\%$ is OLS slope-bias on $\log(y+\varepsilon)$ amplified exponentially to $x{=}10$. Two further failure modes are visible: wrong label map fails catastrophically on $P_1$ and a correct $(\varphi, \psi)$ with the wrong model class fails (log-log MLP reaches $83.3\%$ OOD vs.\ $3.2\%$ for OLS, because the MLP does not exploit linearity in $(\log x, \log y)$), full grid in \Cref{app:exp12_full}.

\subsection{Foundation models inherit DGP commitment via input format}
\label{sec:fm_probes}

Pretrained foundation models do not escape the commitment; they inherit it via input format. On Exp.~1.1, TimesFM zero-shot reaches $8.9\%$ OOD on the $\sin(x)$ context (vs.\ $155.9\%$ for raw-$x$ MLP) and drops to $1.0\%$ with a Fourier covariate; TabPFN goes from $226.2\%$ on raw $x$ to $0.1\%$ on Fourier columns. The mechanism is geometric: the Fourier map wraps train and OOD onto the same compact manifold $S^1$, so OOD becomes interpolation in feature space and TabPFN (a tabular regressor that does not extrapolate beyond the training input distribution) succeeds. The $\log$ map compresses but does not bound (Exp.~1.2: $\log x_{\mathrm{train}} \in [0, 0.69]$, $\log x_{\mathrm{OOD}} \in [0.69, 2.30]$), so TabPFN with the correct $(\log,\log)$ map still clamps to the training $y$-range and fails ($85.4\%$ on $P_1$, comparable to its raw failure $80.3\%$); TimesFM with the matching covariate improves but lags ($28.3\%$ on $P_1$, $29.6\%$ on $P_2$). Linear OLS in the correct transformed space still wins by an order of magnitude. The right $\varphi$ is necessary but not sufficient: the commitment requires a matching model class that extrapolates linearly in $\varphi$-space, which OLS and SINDy do by construction.

\section{Cross-Domain Validation}
\label{sec:cross_domain}

Does the same commitment mechanism predict OOD behaviour on independent natural-science domains?

\begin{figure}[t]
  \centering
  \begin{minipage}[t]{0.64\linewidth}
    \centering
    \includegraphics[width=\linewidth, trim=9pt 9pt 7pt 17pt, clip]{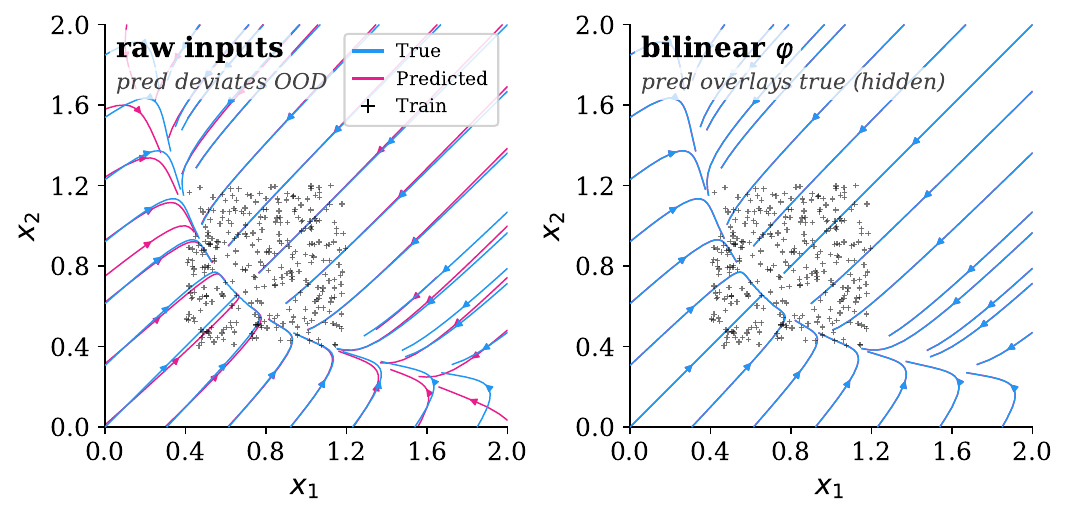}\\
    \scriptsize\textbf{(a) Chemistry: MAK}
  \end{minipage}
  \begin{minipage}[t]{0.30\linewidth}
    \centering
    \includegraphics[width=\linewidth, trim=9pt 9pt 7pt 22pt, clip]{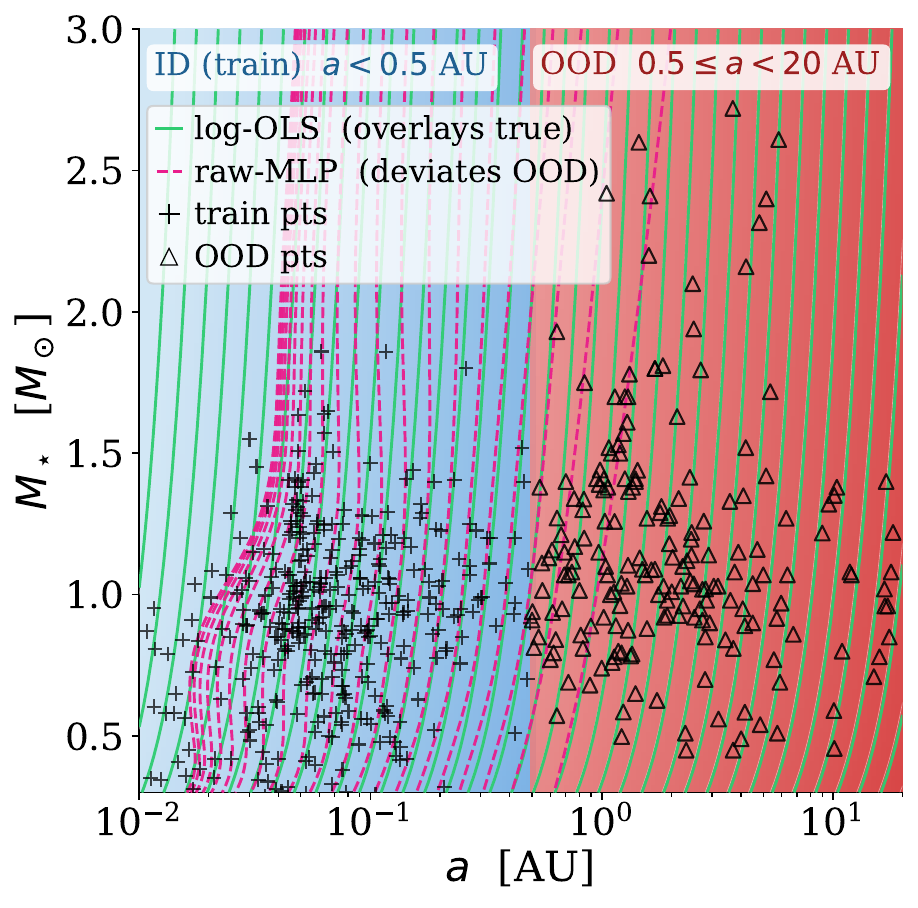}\\
    \scriptsize\textbf{(b) Physics: Kepler}
  \end{minipage}
  \vspace{0.3em}
  \begin{minipage}[t]{0.95\linewidth}
    \centering
    \includegraphics[width=0.55\linewidth]{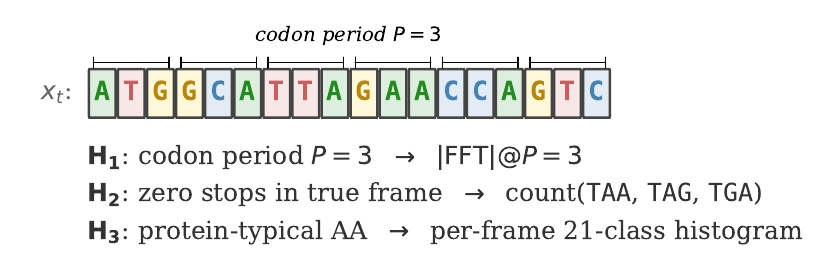}\\
    \scriptsize\textbf{(c) Biology: CDS DGP}
  \end{minipage}
  \caption{Concept figures for the three natural-science domains.
    \emph{(a) MAK}: predicted streamlines (pink) over the true vector field (blue) for raw inputs (left) and bilinear features (right).
    \emph{(b) Kepler}: iso-$T$ contours over the true $T(a, M_{\mathrm{star}})$ surface (ID region shaded blue, OOD shaded red); log-OLS predictions (green) overlay the true levels everywhere, while raw-MLP predictions (pink dashed) deviate from the true levels for $a \geq 0.5$~AU.
    \emph{(c) CDS}: three universal-genetic-code feature maps ($|\mathrm{FFT}|@P{=}3$, in-frame stop counts, per-frame 21-class amino-acid composition) transfer zero-shot across organisms.\label{fig:cross_domain_concept}}
\end{figure}

\subsection{Exp.~2.1: mass-action kinetics (chemistry)}
\label{sec:exp21}

Mass-action kinetics (MAK) on a synthetic 2-node graph $\dot{x}_i = F - B x_i - R x_i x_j$ for $(i,j) \in \{(1,2),(2,1)\}$ ($F{=}0.5, B{=}0.1, R{=}1.0$)~\citep{vasiliauskaite2024generalization}; training $(x_1, x_2) \in [0.4, 1.2]^2$, OOD $[0, 2.0]^2 \setminus [0.4, 1.2]^2$ (\Cref{fig:cross_domain_concept}(a)). Bilinear features $\varphi(x) = [1,\, x_i,\, x_i x_j]$ globally linearise the target.

\begin{table}[t]
  \centering
  \small
  \caption{Exp.~2.1 MAK (excerpt). 2-node mass-action system, training $(x_1, x_2) \in [0.4, 1.2]^2$, OOD $[0, 2.0]^2 \setminus [0.4, 1.2]^2$. \colorbox{gray!12}{\strut Shaded rows} use the canonically-correct bilinear feature map $[1, x_i, x_i x_j]$. Mean $\pm$ std over 3 seeds; best OOD in bold; full grid in App.~\ref{app:exp21_full}.}
  \label{tab:exp21_main}
  \begin{tabular}{llrr}
    \toprule
    \textbf{Feature map} & \textbf{Model} & \textbf{ID (\%)} & \textbf{OOD (\%)} \\
    \midrule
    None & MLP & 0.5 $\pm$ 0.0 & 24.4 $\pm$ 0.9 \\
    \cmidrule(lr){1-4}
    \rowcolor{gray!12}
    Bilinear & OLS & 0.0 $\pm$ 0.0 & \textbf{0.0 $\pm$ 0.0} \\
    \rowcolor{gray!12}
    Bilinear & MLP & 1.4 $\pm$ 1.2 & 16.9 $\pm$ 1.3 \\
    \bottomrule
  \end{tabular}
\end{table}

\Cref{tab:exp21_main} isolates the $(\varphi, \mathcal{M})$ interaction: \texttt{OLS} recovers $(F, -B, -R)$ to machine precision at $0.0\%$ OOD. The same $\varphi$ with an MLP head reaches $16.9\%$ (model class does not exploit linearity) and raw inputs reach $24.4\%$.

\subsection{Exp.~2.2: Kepler's third law on the NASA Exoplanet Archive (physics)}
\label{sec:exp22}

Kepler's third law: $\log T = \tfrac{3}{2}\log a - \tfrac{1}{2}\log(M_{\mathrm{star}}/M_\odot) + \mathrm{const}$.
On the NASA Exoplanet Archive ($n_{\mathrm{train}} = 1{,}881$ at $a<0.5$~AU; $n_{\mathrm{OOD}} = 481$ at $0.5\le a<20$~AU; \Cref{fig:cross_domain_concept}(b)), the correct $(\log a, \log M_{\mathrm{star}})$ + OLS recovers the Kepler exponents to three decimals ($\hat{b}_a = 1.493$, $\hat{b}_{M_{\mathrm{star}}} = -0.493$ vs.\ theory $1.5, -0.5$) at $3.8\%$ OOD; with SINDy, $2.1\%$ (\Cref{tab:exp22_main}). The same $(a, M_{\mathrm{star}})$ inputs fed to a flexible MLP without the log transform overfit ID at $6.9\%$ but break to $79.0\%$ OOD; in raw $(a, M_{\mathrm{star}})$ space the predicted iso-$T$ contours bend the wrong way past the training boundary (\Cref{fig:cross_domain_concept}(b)).

\begin{table}[t]
  \centering
  \small
  \caption{Exp.~2.2 Kepler (excerpt). NASA Exoplanet Archive, $n_{\mathrm{train}}=1{,}881$ ($a < 0.5$ AU), $n_{\mathrm{OOD}}=481$. \colorbox{gray!12}{\strut Shaded rows} use the canonically-correct $(\log a, \log M_{\mathrm{star}})$ feature map (Kepler's law). Mean $\pm$ std over 3 seeds; best OOD in bold; full grid in App.~\ref{app:exp22_full}.}
  \label{tab:exp22_main}
  \begin{tabular}{lllrr}
    \toprule
    \textbf{Variables} & \textbf{Feature map} & \textbf{Model} & \textbf{ID (\%)} & \textbf{OOD (\%)} \\
    \midrule
    $a,\,M_{\mathrm{star}}$ & $a,\,M_{\mathrm{star}}$ & MLP & 6.9 $\pm$ 0.2 & 79.0 $\pm$ 2.1 \\
    \cmidrule(lr){1-5}
    \rowcolor{gray!12}
    $a,\,M_{\mathrm{star}}$ & $\log a,\,\log M_{\mathrm{star}}$ & OLS & 4.2 $\pm$ 0.0 & 3.8 $\pm$ 0.0 \\
    \rowcolor{gray!12}
    $a,\,M_{\mathrm{star}}$ & $\log a,\,\log M_{\mathrm{star}}$ & SINDy & 3.9 $\pm$ 0.0 & \textbf{2.1 $\pm$ 0.0} \\
    \rowcolor{gray!12}
    $a,\,M_{\mathrm{star}}$ & $\log a,\,\log M_{\mathrm{star}}$ & MLP & 4.4 $\pm$ 0.1 & 60.9 $\pm$ 2.5 \\
    \bottomrule
  \end{tabular}
\end{table}

\subsection{Exp.~2.3: cross-species coding-DNA detection (biology)}
\label{sec:exp23}

We classify $300$\,bp (base-pair) DNA reads as coding-DNA-sequence (CDS) vs.\ non-CDS. Genomes vary in GC content (\% guanine--cytosine bases; the complement is AT, adenine--thymine); coding regions exhibit a 3-nucleotide \emph{codon} period (H1), depletion of in-frame stop codons (H2), and a 21-class amino-acid (AA) composition (H3). We train on \textit{S.\,cerevisiae} chromosomes~I–IX and evaluate zero-shot on five organisms spanning $19$--$65\%$ GC (\Cref{fig:cross_domain_concept}(c)). The shared base model is a ${\sim}0.3$M-parameter random-init Transformer encoder ($2$~layers, $4$~heads, $d{=}128$). We evaluate two non-FE heads (learned absolute positional encoding (APE), the DNABERT~\citep{ji2021dnabert} default; sinusoidal PE~\citep{vaswani2017attention}, denoted ``sin'') and two FE heads (the codon-Fourier magnitude $|\mathrm{FFT}(h)|@P{=}3$ at the codon period (codon), capturing H1; and the biology head, which adds H2 in-frame stop counts and H3 amino-acid composition to the codon channel). Both Non-FE heads collapse on most OOD species and fall below chance on \textit{M.\,tuberculosis} ($0.459$ APE, $0.445$ sin at $65\%$ GC: the AT-rich shortcut inverts). The biologically inspired heads transfer at area under the ROC curve (AUROC) $0.718$--$0.960$ zero-shot across the five OOD organisms. Stacking three genetic-code features (codon period, in-frame stop depletion, AA composition) lifts AUROC and wins on every organism (\Cref{tab:exp23_species_main}). \textit{Hardest case}: \textit{M.\,tuberculosis} (codon+biology $0.750$, the lowest across organisms) is the GC-skewed outlier ($65\%$ GC vs.\ yeast's $38\%$); yeast-trained AA-composition weights are stretched here, and multi-organism training is the natural rescue.

\begin{table}[ht]
  \centering
  \small
  \caption{Cross-species CDS-vs-nonCDS classification, AUROC (mean $\pm$ std over 3 seeds). Two standard heads (APE and sin) and two FE heads (codon, codon+biology). Best variant per organism in bold.}
  \label{tab:exp23_species_main}
  \begin{tabular}{l r rrrr}
    \toprule
    \textbf{Organism} & \textbf{GC} & \textbf{APE} & \textbf{sin} & \textbf{codon} & \textbf{codon\,+\,biology} \\
    \midrule
    \textit{S.\,cerevisiae} & 38\% & 0.745\,$\pm$\,0.012 & 0.776\,$\pm$\,0.009 & 0.903\,$\pm$\,0.006 & \textbf{0.962\,$\pm$\,0.005} \\
    \textit{E.\,coli} & 50\% & 0.599\,$\pm$\,0.011 & 0.606\,$\pm$\,0.019 & 0.783\,$\pm$\,0.030 & \textbf{0.839\,$\pm$\,0.017} \\
    \textit{B.\,subtilis} & 44\% & 0.667\,$\pm$\,0.023 & 0.697\,$\pm$\,0.006 & 0.891\,$\pm$\,0.007 & \textbf{0.926\,$\pm$\,0.006} \\
    \textit{S.\,pombe} & 36\% & 0.823\,$\pm$\,0.010 & 0.824\,$\pm$\,0.021 & 0.917\,$\pm$\,0.008 & \textbf{0.953\,$\pm$\,0.005} \\
    \textit{P.\,falciparum} & 19\% & 0.606\,$\pm$\,0.098 & 0.483\,$\pm$\,0.112 & 0.905\,$\pm$\,0.030 & \textbf{0.960\,$\pm$\,0.010} \\
    \textit{M.\,tuberculosis} & 65\% & 0.459\,$\pm$\,0.004 & 0.445\,$\pm$\,0.016 & 0.718\,$\pm$\,0.035 & \textbf{0.750\,$\pm$\,0.068} \\
    \bottomrule
  \end{tabular}
\end{table}

\noindent
Three mechanisms (mass-action kinetics, gravity, and biological invariants) and one prediction: correct $\varphi$ + matching $\mathcal{M}$ wins, wrong commitment fails, structurally rather than for want of samples.

\section{Practitioner Artefacts}
\label{sec:tools}

The framework yields three practitioner artefacts: two methodological tools and one conceptual reframing.

\subsection{Exp.~3.1: near-boundary validation as a DGP-selection heuristic}
\label{sec:exp31}

\Cref{prop:nearboundary} prescribes a selection heuristic: split into $D_{\mathrm{train}} \cup D_{\mathrm{val}}$ with $D_{\mathrm{val}}$ a near-boundary segment $V$ of width $\delta \ge \eps/r$ outside $D_{\mathrm{train}}$, train each candidate $(\varphi_i, \psi_i)$ on $D_{\mathrm{train}}$, score $\hat{\eps}_i$ on $D_{\mathrm{val}}$, pick the minimum. The val window exits the agreement zone of the candidate hypotheses, so the bias gap visible on $V$ reflects the OOD divergence the candidates would have past $V$; CV folds, drawn from $D_{\mathrm{train}}$ itself, cannot see this gap.

We compare on $100$ trials of $f(x) = \sin(x)$, library $\{$\texttt{fourier}, \texttt{poly7}, \texttt{poly9}, \texttt{raw}$\}$. \textbf{Near-boundary picks \texttt{fourier} $100\%$ of the time; CV picks it $81.3\%$} (full grid in \Cref{app:exp31_full}); CV's $56$ failures land on the observationally-near-equivalent polynomial confounds ($37$ \texttt{poly7}, $19$ \texttt{poly9}, $0$ \texttt{raw}). The CV failure mode is overfitting on the polynomial bases in the signal-dominated regime. A regime sweep over $\sigma$ across decades from $10^{-3}$ to $1$ confirms the contrast is robust to noise level (near-boundary stays at $100\%$, CV stays at ${\sim}80\%$).

\subsection{Exp.~3.2: the SINDy $\delta_{\mathrm{OOD}}$ correlation diagnostic}
\label{sec:exp32}

In the \emph{SINDy regime}~\citep{brunton2016sindy}, where the DGP admits a sparse polynomial representation in some coordinate, $\delta_{\mathrm{OOD}}$, the OOD derivative residual after sparse fitting, flags coordinates that admit such a representation. It is useful when the candidate set of $\varphi$'s is too large for full near-boundary scoring. On a toy battery of $8$ univariate DGPs ($35$ pairs over Fourier and log-polynomial families; $G_1 = \sin(x)\cos(2x)$ and per-row data in \Cref{app:exp13}), Spearman $\rho = 0.82$, $p = 1.3{\times}10^{-9}$. The diagnostic extends to every SINDy-regime experiment here: in Exp.~1.1, Exp.~1.2, Kepler, and MAK, the canonically-correct commitment yields $\delta_{\mathrm{OOD}} \le 0.012$ while wrong feature maps yield $\delta_{\mathrm{OOD}}$ from ${\sim}0.04$ up to $6.7{\times}10^{5}$ (\Cref{tab:sindy_delta_consolidated}). Failure mode: non-polynomial dynamics (\eg $G_2 = \tanh(\cdot)$) have no sparse coordinate, so $\delta > 0$ everywhere correctly signals ``no sparse representation'' but cannot rank.

\subsection{Exp.~3.3: CE/PE reframing (architecture-invariance check)}
\label{sec:exp33}

In modern sequence architectures the \emph{Content Embedding} (CE) and \emph{Position Embedding} (PE) are the feature-engineering component of the DGP commitment.
On a periodic-sequence task $y_t = f(t \bmod P)$ with random tokens, we sweep three causal backbones (Transformer, Mamba, S4D), $P \in \{64, 128, 256, 512\}$, $L_{\mathrm{train}} = 4P$, $L_{\mathrm{OOD}} = 2P$, $3$ seeds (RoPE on Transformer only); $264$ runs.
This toy design isolates the PE: random tokens make the CE label-independent, and a $t \bmod P$ label leaves position (hence the PE) as the only predictive channel.
State-space models (SSMs) do not natively use an explicit PE (recurrence and depthwise causal convolution already carry position), so for a controlled comparison we graft the PE onto all three backbones identically: summed into the token embedding before the first block, exactly as on Transformer. The added PE coexists with the SSM's implicit positional signal, isolating the contribution of the PE family from the backbone.
The exact-Fourier PE $\varphi(t) = (\sin 2\pi t/P, \cos 2\pi t/P)$ maps every OOD position to the same $\mathbb{S}^1$ point as some training position (\Cref{lem:sin}).

\begin{table}[ht]
  \centering
  \small
  \caption{Exp.~3.3 Arch-Inv (excerpt). ID/OOD error (\%) on the periodic annotation task across Transformer, Mamba, S4D and $P \in \{64, 256, 512\}$ ($L_\mathrm{train}{=}4P$, $L_\mathrm{OOD}{=}2P$; chance ${\approx}99.61\%$). \colorbox{gray!12}{\strut Shaded rows} use the correct PE; the OOD guarantee belongs to the feature map, not the backbone. Best OOD per (architecture, $P$) in bold. Mean over 3 seeds; 81 runs. Full $8$-PE $\times$ $4$-period grid in \Cref{tab:exp33_main}.}
  \label{tab:exp33_minitext}
  \resizebox{\textwidth}{!}{%
  \begin{tabular}{llrrrrrr}
    \toprule
    \textbf{Architecture} & \textbf{PE} & \multicolumn{2}{c}{$P{=}64$} & \multicolumn{2}{c}{$P{=}256$} & \multicolumn{2}{c}{$P{=}512$} \\
    \cmidrule(lr){3-4} \cmidrule(lr){5-6} \cmidrule(lr){7-8}
     & & ID & OOD & ID & OOD & ID & OOD \\
    \midrule
    \rowcolor{gray!12}
    Transformer & \textbf{Exact Fourier} & 0.0 $\pm$ 0.0 & \textbf{0.0 $\pm$ 0.0} & 0.0 $\pm$ 0.0 & \textbf{0.0 $\pm$ 0.0} & 0.6 $\pm$ 0.5 & \textbf{0.2 $\pm$ 0.3} \\
     & Sinusoidal & 0.0 $\pm$ 0.0 & 98.4 $\pm$ 0.6 & 0.0 $\pm$ 0.0 & 99.1 $\pm$ 0.4 & 0.0 $\pm$ 0.0 & 99.3 $\pm$ 0.1 \\
     & Learned APE & 0.0 $\pm$ 0.0 & 97.7 $\pm$ 1.2 & 0.0 $\pm$ 0.0 & 99.4 $\pm$ 0.3 & 0.0 $\pm$ 0.0 & 99.2 $\pm$ 0.2 \\
    \rowcolor{gray!12}
    Mamba & \textbf{Exact Fourier} & 0.0 $\pm$ 0.0 & \textbf{0.0 $\pm$ 0.0} & 0.0 $\pm$ 0.0 & \textbf{0.0 $\pm$ 0.0} & 0.0 $\pm$ 0.0 & \textbf{0.0 $\pm$ 0.0} \\
     & Sinusoidal & 0.0 $\pm$ 0.0 & 99.0 $\pm$ 1.0 & 0.0 $\pm$ 0.0 & 98.3 $\pm$ 0.8 & 0.0 $\pm$ 0.0 & 99.0 $\pm$ 0.2 \\
     & Learned APE & 0.0 $\pm$ 0.0 & 96.4 $\pm$ 1.1 & 0.0 $\pm$ 0.0 & 98.8 $\pm$ 0.3 & 0.0 $\pm$ 0.0 & 99.5 $\pm$ 0.1 \\
    \rowcolor{gray!12}
    S4D & \textbf{Exact Fourier} & 0.0 $\pm$ 0.0 & \textbf{0.0 $\pm$ 0.0} & 0.0 $\pm$ 0.0 & \textbf{0.0 $\pm$ 0.0} & 0.0 $\pm$ 0.0 & \textbf{0.0 $\pm$ 0.0} \\
     & Sinusoidal & 0.0 $\pm$ 0.0 & 97.9 $\pm$ 0.3 & 0.0 $\pm$ 0.0 & 99.4 $\pm$ 0.2 & 0.0 $\pm$ 0.0 & 99.2 $\pm$ 0.2 \\
     & Learned APE & 0.0 $\pm$ 0.0 & 98.4 $\pm$ 1.1 & 0.0 $\pm$ 0.0 & 98.7 $\pm$ 0.8 & 0.0 $\pm$ 0.0 & 99.4 $\pm$ 0.0 \\
    \bottomrule
  \end{tabular}
  }
\end{table}

\textbf{Exact-Fourier PE achieves $\le 0.2\%$ OOD in every $(\text{model}, P)$ cell} ($0.0\%$ in 11 of 12, $0.2\%$ on Transformer at $P{=}512$); every other PE (sinusoidal, RoPE, learned APE, learnable sinusoidal, learned Fourier $k \in \{1, 4\}$, no-PE) sits between ${\sim}5.5\%$ and ${\sim}99\%$ OOD against the $99.61\%$ chance baseline (\Cref{tab:exp33_minitext}; full $8$-PE grid in \Cref{app:exp33_sweeps}). The OOD guarantee is a property of the feature map, not the backbone. Sinusoidal PE, RoPE, and ALiBi are partial commitments that approximate but do not attain the theoretical limit. The learned-Fourier rows show that even when the correct DGP is parametrically expressible, SGD does not reliably select it from random init (the parametric-but-unreachable Rashomon phenomenon). Robustness sweeps over context length, OOD horizon, SSM state dimension, and PE initialisation appear in \Cref{app:exp33_sweeps}.

\section{Scope: Three Necessary Conditions and the Limits}
\label{sec:scope}

\begin{figure}[t]
  \centering
  \includegraphics[width=0.36\linewidth, trim=8pt 7pt 12pt 11pt, clip]{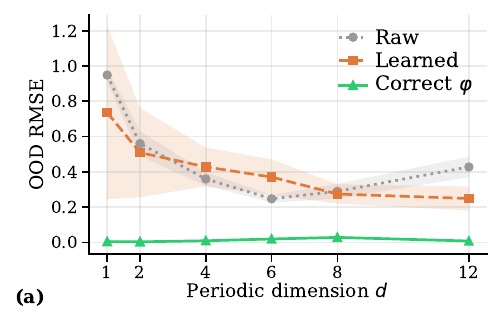}\hfill
  \includegraphics[width=0.36\linewidth, trim=8pt 7pt 12pt 11pt, clip]{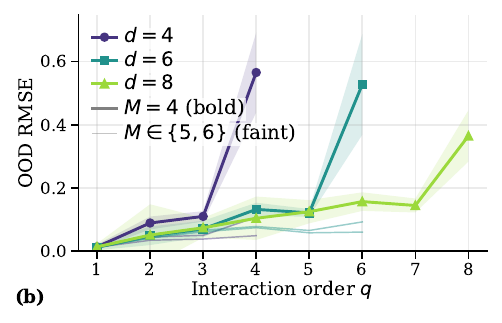}\\
  \includegraphics[width=0.36\linewidth, trim=8pt 7pt 12pt 11pt, clip]{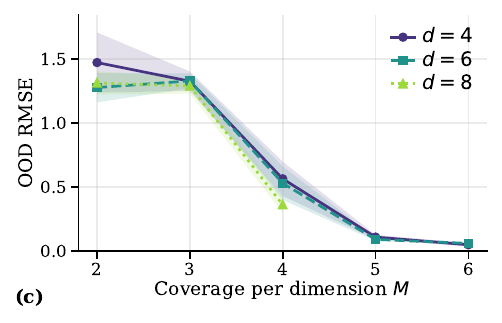}\hfill
  \includegraphics[width=0.36\linewidth, trim=8pt 7pt 12pt 11pt, clip]{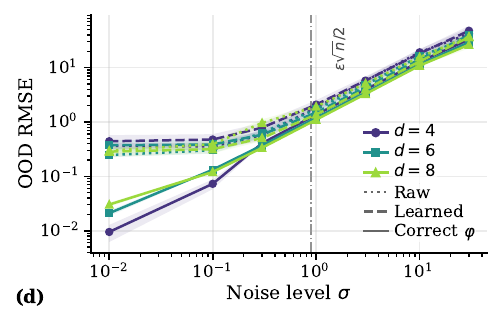}
  \caption{Four faces of the DGP commitment on the periodic $d$-torus with exact $\varphi(x){=}(\sin x_j,\cos x_j)_{j=1}^d$. Panels: (a) representation alignment, (b) model-class capacity, (c) coverage, (d) noise robustness. Bands: $\pm 1$ std over 100 seeds.\label{fig:torus_scope}}
\end{figure}

\emph{When is the right representation not enough?} We construct a periodic $d$-torus benchmark in which the correct feature map is known exactly, then vary the model class $\mathcal{M}$, the sample budget, and noise levels (\Cref{fig:torus_scope}). Three conditions emerge as jointly necessary: \textbf{(a)} representation alignment, \textbf{(b)} a model class expressing the residual interaction order $q$ of the target in $\Zx$, and \textbf{(c)} coverage $N \asymp M_{\mathrm{crit}}^d$ with $M_{\mathrm{crit}} \approx 4$. A practitioner with correct $\varphi$ but no capacity for $q$ fails OOD; one with both but $N < M_{\mathrm{crit}}^d$ also fails. The torus benchmark localises where the commitment runs out and the rest of the OOD problem (identifiability, capacity, sampling) takes over.

\textbf{Noise $\sigma$ does not break the commitment (d).}
\label{sec:exp44}
\Cref{prop:erm} bounds the worst-case correct-decision probability of any $W$-supported test by $\tfrac{1}{2} + \eps\sqrt{n}/(4\sigma)$; the bound is informative when $\eps\sqrt{n}\ll\sigma$ and vacuous below. Measuring the empirical in-window gap as $\eps \approx 0.03$ at $N{=}4{,}096$ places the noise threshold near $\sigma \approx 0.9$. Sweeping $\sigma \in \{0, 10^{-2}, 10^{-1}, 3{\cdot}10^{-1}, 1, 3, 10, 30\}$ at $d \in \{4, 6, 8\}$ populates both regimes (\Cref{fig:torus_scope}d). The canonical $\varphi$ tracks Bayes-optimal scaling (log-log slope $\approx 1$ across the full range, per \Cref{lem:canon}); raw and learned-Fourier baselines are bias-floored in the vacuous regime, then rise with $\sigma$ in the informative regime as the over-parameterised MLPs fit noise. Critically, the canonical commitment retains a clear OOD advantage in both regimes ($\sim 40\times$ lower OOD error at low $\sigma$, ${\sim}30\%$ smaller OOD error at high $\sigma$): no in-window rule can detect this difference, only the choice of $\varphi$ recovers it.

\section{Discussion and Conclusion}
\label{sec:conclusion}

\textbf{Conclusion.}
OOD risk on a single training window is non-identifiable from data alone (\Cref{prop:erm}); the structural commitment $(\varphi, \psi, \mathcal{M})$ is the only information the modeller supplies that resolves it.
A fixed architecture at identical ID loss can differ by ${\sim}520\times$ OOD when only $\varphi$ changes (\Cref{tab:exp11}), and the mechanism transfers, untouched, to chemistry ($0.0\%$ vs.\ $24.4\%$ OOD, \Cref{tab:exp21_main}), physics ($2.1\%$ vs.\ $79.0\%$, \Cref{tab:exp22_main}), biology (AUROC up to $0.96$ vs.\ below-chance phase-equivariant readouts, \Cref{tab:exp23_species_main}), and Transformer/Mamba/S4D over $264$ runs (\Cref{sec:exp33}).
This sharpens ``feature engineering is dead'': deep learning supersedes hand-crafted features ID once samples and noise permit, but $(\varphi, \psi, \mathcal{M})$ remains the only ID-invisible signal that selects one extrapolation from the observationally-equivalent Rashomon class.
Pretraining hides the commitment in the input format (\Cref{sec:fm_probes}) without removing it; near-boundary loss and the SINDy $\delta_{\mathrm{OOD}}$ recover it as operational selection signals (\Cref{sec:tools}).

\textbf{Limitations.}
\Cref{prop:erm}'s information-theoretic bound is tight only when $\eps\sqrt{n}\ll\sigma$, our experiments sit in the signal-dominated regime where the bound is vacuous, though the multiplicity it characterises still binds CV in practice (\Cref{sec:exp31}).
Near-boundary validation requires accessible boundary samples, the SINDy diagnostic is silent on non-polynomial dynamics (\eg $G_2{=}\tanh(\cdot)$): $\delta_{\mathrm{OOD}}{>}0$ flags ``no sparse coordinate'' but cannot rank.
Correct $\varphi$ is necessary, not sufficient: at $N\lesssim M_{\mathrm{crit}}^d$ ($M_{\mathrm{crit}}{\approx}4$) or with a model class missing the residual interaction order $q$, OOD collapses regardless (\Cref{sec:scope}).
The CDS head fails on high-GC organisms (yeast-trained amino-acid weights mis-fire on \textit{M.\,tuberculosis}'s skewed codon usage).

\textbf{Future work.}
Two directions stand out: (i) a finite-sample OOD bound parametrised by sparse identifiability ($\delta_{\mathrm{OOD}}$ of \Cref{sec:exp32}), extending \Cref{lem:sin} beyond exact realisability and (ii) robust automatic $\varphi$ discovery with near boundary information connecting with multi-environment domain advancements.

\begin{ack}
\end{ack}

\bibliographystyle{plainnat}
\bibliography{references}

\appendix
\setcounter{theorem}{0}
\section{ERM is blind to observationally-equivalent processes}
\label{app:proof_prop_erm}

\begin{proposition}[ERM is blind to observationally-equivalent processes, restated]
Let $D_n = \{(x_i, y_i)\}_{i=1}^n$ with $x_i \stackrel{\mathrm{iid}}{\sim} \Dtrain$ on $W$ and $y_i = P_k(x_i) + \eta_i$, $\eta_i \stackrel{\mathrm{iid}}{\sim} \mathcal{N}(0, \sigma^2)$, for unknown $k \in \{1, 2\}$ where $P_1, P_2$ are $\eps$-observationally-equivalent on $W$.
For any (deterministic or randomised) test $T: D_n \to \{1, 2\}$,
\begin{equation}
  \min_{k \in \{1,2\}} \Pr_{D_n \sim \mathcal{D}_k^n}[T(D_n) = k]
  \;\le\; \frac{1}{2} + \frac{\eps\sqrt{n}}{4\sigma}
  \;=\; \frac{1}{2} + O(\eps\sqrt{n}).
\end{equation}
In particular, in the regime $\eps\sqrt{n} \ll \sigma$, no $W$-supported test reliably distinguishes $P_1$ from $P_2$.
Selection criteria such as cross-validation, training loss, or held-out loss on $W$ are special cases.
\end{proposition}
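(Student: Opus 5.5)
The plan is to reduce to a two-point hypothesis test and apply Le Cam's method, bounding total variation through KL divergence via Pinsker's inequality. I will read $\eps$-observational equivalence as $\sup_{x\in W}|P_1(x)-P_2(x)|\le\eps$. Let $Q_k$ denote the law of $D_n$ under hypothesis $k$. For any randomised test $T$, write $\phi(D_n)=\Pr[T(D_n)=1\mid D_n]\in[0,1]$. Then
\[
\Pr_{Q_1}[T=1]+\Pr_{Q_2}[T=2]=1+\E_{Q_1}\phi-\E_{Q_2}\phi\le 1+\mathrm{TV}(Q_1,Q_2).
\]
Since a minimum is at most an average, we get $\min_k\Pr_{Q_k}[T=k]\le \tfrac12+\tfrac12\mathrm{TV}(Q_1,Q_2)$. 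It therefore suffices to show $\mathrm{TV}(Q_1,Q_2)\le \eps\sqrt n/(2\sigma)$.

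Next I compute the divergence. Both hypotheses share the marginal $\Dtrain$ of each $x_i$ and differ only in the conditional law $y_i\mid x_i$, which is $\mathcal N(P_k(x_i),\sigma^2)$. The chain rule for KL together with independence across the $n$ samples gives
\[
\mathrm{KL}(Q_1\|Q_2)=n\,\E_{x\sim\Dtrain}\frac{(P_1(x)-P_2(x))^2}{2\sigma^2}\le \frac{n\eps^2}{2\sigma^2},
\]
where the inequality uses that $x$ is supported on $W$, on which $|P_1-P_2|\le\eps$. Pinsker's inequality then yields $\mathrm{TV}\le\sqrt{\mathrm{KL}/2}\le \eps\sqrt n/(2\sigma)$. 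Substituting into the Le Cam bound gives exactly $\tfrac12+\eps\sqrt n/(4\sigma)$.

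The remaining claims follow quickly. Cross-validation, training loss and held-out loss on $W$ are all measurable, possibly randomised, functions of $D_n$, so each is a particular test $T$ and the bound applies to it. In the regime $\eps\sqrt n\ll\sigma$, the right-hand side is $\tfrac12+o(1)$, which is the "no reliable test" statement.

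The main obstacle is making the definition of $\eps$-observational equivalence precise. If it is instead defined through an $L^2(\Dtrain)$ or relative deviation, the KL step must use that quantity in place of the sup norm; any definition that bounds $\E_{\Dtrain}(P_1-P_2)^2\le\eps^2$ suffices. A second point to handle carefully is the conditioning on randomised $x_i$: the chain rule is what justifies ignoring the shared design distribution.
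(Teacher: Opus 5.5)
Your proposal is correct and follows the paper's proof essentially step for step: Le Cam's two-point reduction to $\min_k \Pr_k[T=k]\le \tfrac12+\tfrac12\,\mathrm{TV}$, then a KL computation that uses the shared $\Dtrain$ marginal and tensorises to $n\eps^2/(2\sigma^2)$, then Pinsker to get $\mathrm{TV}\le \eps\sqrt{n}/(2\sigma)$. Your explicit derivation of the two-point bound through $\phi(D_n)$, and your remark that an $L^2(\Dtrain)$ notion of $\eps$-equivalence would suffice, are minor refinements within the same argument.
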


\begin{proof}
We use Le Cam's two-point method~\citep{tsybakov2009nonparametric}.
Let $\mathcal{D}_k$ denote the joint distribution of a single observation $(x, y)$ under hypothesis $P_k$, \ie $x \sim \Dtrain$ and $y \mid x \sim \mathcal{N}(P_k(x), \sigma^2)$, and let $\mathcal{D}_k^n$ be the product measure over $n$ i.i.d.\ samples.

\emph{Le Cam's two-point bound.}
For any (possibly randomised) test $T: D_n \to \{1, 2\}$,
\begin{equation}
  \tfrac{1}{2}\bigl(\Pr_1[T = 2] + \Pr_2[T = 1]\bigr)
  \;\ge\; \tfrac{1}{2}\bigl(1 - \mathrm{TV}(\mathcal{D}_1^n, \mathcal{D}_2^n)\bigr),
\end{equation}
hence the worst-case correct-decision probability satisfies
\begin{equation}
  \min_k \Pr_k[T = k] \;\le\; \tfrac{1}{2} + \tfrac{1}{2}\,\mathrm{TV}(\mathcal{D}_1^n, \mathcal{D}_2^n).
\end{equation}

\emph{KL between joint distributions.}
The $x$-marginal of $\mathcal{D}_k$ is $\Dtrain$ (independent of $k$), so the joint KL reduces to a conditional KL:
\begin{equation}
  \mathrm{KL}(\mathcal{D}_1 \,\|\, \mathcal{D}_2)
  = \E_{x \sim \Dtrain}\bigl[\mathrm{KL}\bigl(\mathcal{N}(P_1(x), \sigma^2) \,\|\, \mathcal{N}(P_2(x), \sigma^2)\bigr)\bigr]
  = \E_{x \sim \Dtrain}\!\left[\frac{(P_1(x) - P_2(x))^2}{2\sigma^2}\right].
\end{equation}
Since $\Dtrain$ is supported on $W$ and $\sup_{x \in W} |P_1(x) - P_2(x)| \le \eps$,
\begin{equation}
  \mathrm{KL}(\mathcal{D}_1 \,\|\, \mathcal{D}_2) \;\le\; \frac{\eps^2}{2\sigma^2},
\end{equation}
and tensorisation gives
\begin{equation}
  \mathrm{KL}(\mathcal{D}_1^n \,\|\, \mathcal{D}_2^n) = n \cdot \mathrm{KL}(\mathcal{D}_1 \,\|\, \mathcal{D}_2) \;\le\; \frac{n\eps^2}{2\sigma^2}.
\end{equation}

\emph{Pinsker's inequality}~\citep[Lemma 2.5]{tsybakov2009nonparametric} then yields
\begin{equation}
  \mathrm{TV}(\mathcal{D}_1^n, \mathcal{D}_2^n)
  \;\le\; \sqrt{\tfrac{1}{2}\,\mathrm{KL}(\mathcal{D}_1^n \,\|\, \mathcal{D}_2^n)}
  \;\le\; \sqrt{\frac{n\eps^2}{4\sigma^2}}
  \;=\; \frac{\eps\sqrt{n}}{2\sigma}.
\end{equation}

Combining the bounds,
\begin{equation}
  \min_k \Pr_k[T = k] \;\le\; \frac{1}{2} + \frac{\eps\sqrt{n}}{4\sigma}. \qedhere
\end{equation}
\end{proof}

\paragraph{Remark on the regime.}
The quantity $\eps\sqrt{n}/\sigma$ is the effective signal-to-noise ratio after $n$ samples (signal scale $\eps$, noise scale $\sigma/\sqrt{n}$).
The bound is informative when $\eps\sqrt{n} \ll \sigma$ (the \emph{noise-dominated} regime, where the per-sample gap is drowned by noise; this is the proposition's content and the regime characteristic of finite-window OOD problems) and vacuous (trivially $\le 1$) when $\eps\sqrt{n} \gg \sigma$ (the \emph{signal-dominated} regime, where $n$ samples \emph{can} reveal the gap).
Selection criteria (cross-validation, training loss, held-out loss on $W$) are functions of the same $W$-supported sample, hence special cases of a $W$-supported test, and the bound applies to them by inclusion.

\paragraph{Provenance.}
The proof technique (Le Cam two-point + Pinsker) is classical \citep{tsybakov2009nonparametric} and the multiplicity it characterises is the Rashomon effect of \citet{breiman2001statistical, xin2022exploring}. Our contribution is the application to single-window OOD identifiability under the $(\varphi, \psi, \mathcal{M})$ commitment, not the inequality.
The toy battery (Exp.~1.1, 1.2) and Exp.~3.1 sit in the signal-dominated regime where the bound is vacuous; the multiplicity it captures still binds in practice because the candidate hypothesis classes are observationally near-equivalent on $W$, and ERM/CV select among them with the residual ambiguity that follows (Exp.~3.1, \Cref{sec:exp31}).

\section{Near-boundary data exits the agreement zone}
\label{app:proof_prop_nearboundary}

\begin{proposition}[Near-boundary data exits the agreement zone, restated]
Let $P_1, P_2$ be $\eps$-observationally equivalent on $W = [a, b]$ and diverge at rate $r > 0$ outside: $|P_1(b + \delta) - P_2(b + \delta)| \ge r\delta$ for $\delta > 0$ small.
Then for any near-boundary segment $V = [b, b + \delta]$ with $\delta \ge \eps/r$,
\begin{equation}
  \sup_{x \in V} |P_1(x) - P_2(x)| \;\ge\; \eps,
\end{equation}
\ie $V$ exits the $\eps$-agreement zone of $P_1, P_2$, while by \Cref{prop:erm} $W$ supports no reliable test for $\eps\sqrt{n} \ll \sigma$. The threshold $\delta \ge \eps/r$ is tight.
\end{proposition}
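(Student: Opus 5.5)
The argument is essentially one line, followed by a tightness construction.

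\emph{Lower bound.} Fix $\delta \ge \eps/r$ with $\delta$ in the range where the divergence hypothesis applies. The right endpoint $b+\delta$ lies in $V = [b, b+\delta]$. By the divergence assumption,
\begin{equation}
  \sup_{x \in V} |P_1(x) - P_2(x)| \;\ge\; |P_1(b+\delta) - P_2(b+\delta)| \;\ge\; r\delta \;\ge\; \eps .
\end{equation}
So $V$ leaves the $\eps$-agreement zone.

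One caveat needs handling. The hypothesis holds only for ``$\delta>0$ small'', so it applies on some interval $(0,\delta_0]$. For any $\delta \ge \eps/r$ in that interval the display applies directly. For larger $\delta$ the claim still follows, since $V$ contains $[b, b+\eps/r]$ and the supremum is monotone in the segment. This works provided $\eps/r \le \delta_0$, which I will state as the standing interpretation of ``small'' (the regime of interest, with $\eps$ small relative to $r\delta_0$).

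The contrast with $W$ is a citation, not a new argument. \Cref{prop:erm} gives that any $W$-supported test has worst-case success at most $\tfrac12 + \eps\sqrt{n}/(4\sigma)$, which is near $\tfrac12$ when $\eps\sqrt n \ll \sigma$.

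\emph{Tightness.} The plan is an explicit pair attaining equality. Take $P_1 \equiv 0$ on $\R$. Let $P_2(x) = \eps$ on $W$ (or any function bounded by $\eps$ in absolute value there), and set
\begin{equation}
  P_2(x) = \eps + r(x-b) \quad \text{for } x>b .
\end{equation}
Then $P_1$ and $P_2$ are $\eps$-observationally equivalent on $W$, and $|P_1(b+\delta)-P_2(b+\delta)| = \eps + r\delta \ge r\delta$. This, however, gives $\sup_V |P_1 - P_2| = \eps + r\delta > \eps$ for every $\delta$, so it does not show tightness.

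For sharpness I need the divergence to be exactly $r\delta$, with the difference vanishing at $b$. Choose $P_2(x) = r(x-b)$ for $x \ge b$, and keep $P_2$ within $\eps$ of $P_1$ on $W$, for example $P_2 = 0$ there. Then
\begin{equation}
  \sup_{x\in[b,b+\delta]} |P_1(x)-P_2(x)| = r\delta ,
\end{equation}
which is $< \eps$ exactly when $\delta < \eps/r$. Hence no smaller threshold suffices uniformly over admissible pairs, and $\delta \ge \eps/r$ is tight.

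\emph{Main obstacle.} The only delicate point is interpretive rather than technical: reconciling ``$\delta$ small'' with $\delta \ge \eps/r$ via the monotonicity remark, and making ``tight'' precise as a uniform statement over all $\eps$-equivalent, rate-$r$ pairs. The extremal example witnesses that uniform statement, while particular pairs (such as the first construction) may exit the zone earlier.
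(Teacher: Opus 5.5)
Your proof is correct and matches the paper's: you lower-bound the supremum by evaluating the divergence hypothesis at the endpoint $b+\delta\in V$, and your second tightness construction ($P_1\equiv 0$, $P_2(x)=\max(0,r(x-b))$) is the paper's extremal pair. Your monotonicity remark, which requires $\eps/r\le\delta_0$ to reconcile ``$\delta>0$ small'' with $\delta\ge\eps/r$, makes explicit an assumption the paper leaves unstated.
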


\begin{proof}
By the divergence assumption, $|P_1(b + \delta) - P_2(b + \delta)| \ge r\delta$. Substituting the threshold width $\delta \ge \eps/r$:
\begin{equation}
  |P_1(b + \delta) - P_2(b + \delta)| \;\ge\; r \cdot (\eps/r) \;=\; \eps.
\end{equation}
Since $b + \delta \in V$, $\sup_{x \in V} |P_1(x) - P_2(x)| \ge \eps$, so $V \not\subseteq \{x : |P_1(x) - P_2(x)| < \eps\}$.
\end{proof}

\paragraph{Tightness (existence).}
The threshold $\delta = \eps/r$ is sharp.
Take $P_1 \equiv 0$ on $\R$ and $P_2(x) = \max(0,\, r(x - b))$. Then
\begin{equation}
  \sup_{x \in W} |P_1(x) - P_2(x)| = 0 \le \eps,
  \qquad
  |P_1(b + \delta) - P_2(b + \delta)| = r\delta,
\end{equation}
saturating the divergence assumption with rate $r$.
For any $\delta' < \eps/r$, $V' = [b, b + \delta')$ has $\sup_{x \in V'} |P_1(x) - P_2(x)| = r\delta' < \eps$, so $V'$ remains in the $\eps$-agreement zone. The width $\delta = \eps/r$ is therefore the smallest near-boundary segment guaranteed to exit the agreement zone, validating the heuristic threshold of \Cref{sec:exp31}.

\section{Realisable-case canonicalisation and three linearising instances}
\label{app:proof_canon}

These three lemmas instantiate the $(\varphi, \psi, \mathcal{M})$ commitment of \Cref{sec:theory} with classical examples: the math is textbook (linear OLS in transformed coordinates), and the contribution is the unified framing under realisability.

\begin{lemma}[Realisable-case canonicalisation]
\label{lem:canon}
Let $f: \X \to \R$, $\varphi: \X \to \Zx$ a feature map, $\psi: f(\X) \to \R$ a strictly monotonic (hence invertible) label map, and $\mathcal{M}$ a hypothesis class on $\Zx$. Suppose
\begin{enumerate}[label=(\roman*),leftmargin=*,topsep=2pt,itemsep=1pt]
  \item \emph{realisability}: there exists $g^\star \in \mathcal{M}$ such that
    \begin{equation}
      g^\star(\varphi(x)) = \psi(f(x)) \quad \forall\, x \in \X
      \quad \text{(not merely on $W$);}
    \end{equation}
  \item \emph{noise-free finite-sample identifiability}: ERM on $\mathcal{M}$ from $n$ i.i.d.\ samples $\{(\varphi(x_i), \psi(y_i))\}_{i=1}^n$ with $x_i \sim \Dtrain$ and $y_i = f(x_i)$ recovers $g^\star$ with probability $1$, $\hat{g}_n = g^\star$.
\end{enumerate}
Then the predictor $\hat{f}_n = \psi^{-1} \circ \hat{g}_n \circ \varphi$ satisfies
\begin{equation}
  \eps_{\mathrm{OOD}}(\hat{f}_n) = 0 \quad \text{for any test distribution } \Dtest \text{ on } \X.
\end{equation}
The case $\psi = \mathrm{id}$ recovers the standard formulation.
\end{lemma}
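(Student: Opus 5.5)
The plan is a direct substitution argument; no probabilistic estimate is needed beyond the almost-sure event in hypothesis (ii). First I fix the event $\Omega_n$ of probability $1$ on which ERM returns $\hat{g}_n = g^\star$, and work on $\Omega_n$ throughout. On this event, for every $x \in \X$ (not only $x \in W$), hypothesis (i) gives
\begin{equation}
  \hat{g}_n(\varphi(x)) = g^\star(\varphi(x)) = \psi(f(x)).
\end{equation}
The point to stress is that realisability is assumed on all of $\X$. Consequently, the fact that the samples only cover $W$ plays no role once $g^\star$ has been identified.

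Next I apply $\psi^{-1}$. Since $\psi$ is strictly monotonic on $f(\X)$, it is a bijection onto its image $\psi(f(\X))$. The value $\psi(f(x))$ lies in that image, so $\psi^{-1}$ is well defined there, and $\hat{f}_n(x) = \psi^{-1}(\psi(f(x))) = f(x)$ for all $x \in \X$. The only subtlety I expect is this domain issue. The estimator $\psi^{-1} \circ \hat{g}_n$ must only be evaluated on points of $\psi(f(\X))$, and this is exactly guaranteed by the identity $\hat{g}_n \circ \varphi = \psi \circ f$ on $\X$. I would remark that without the event $\Omega_n$, $\hat{g}_n(\varphi(x))$ could fall outside the range of $\psi$ and $\hat{f}_n$ could be undefined; on $\Omega_n$ this cannot happen.

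Finally, for any $\Dtest$ on $\X$, the integrand $(\hat{f}_n(x) - f(x))^2$ vanishes identically. Hence $\eps_{\mathrm{OOD}}(\hat{f}_n) = \E_{x \sim \Dtest}[0] = 0$ on $\Omega_n$, i.e.\ almost surely over the draw of the sample. No measurability or support assumptions on $\Dtest$ are needed, because the equality is pointwise on $\X$. For the last sentence of the lemma, taking $\psi = \mathrm{id}$ makes $\psi^{-1} = \mathrm{id}$, and the statement reduces to $\hat{g}_n \circ \varphi = f$ on $\X$, the standard formulation. I expect no step to be a genuine obstacle: the lemma transfers the entire difficulty into hypothesis (ii), which is verified separately in the concrete instances (\Cref{lem:sin,lem:exp,lem:powerlaw}).
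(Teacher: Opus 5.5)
Your proposal is correct and follows essentially the same substitution argument as the paper: realisability on all of $\X$ together with $\hat{g}_n = g^\star$ almost surely gives $\hat{f}_n = \psi^{-1}\circ g^\star\circ\varphi = f$ pointwise, so the OOD risk vanishes for every $\Dtest$. Your explicit check that $\psi^{-1}$ is only evaluated on $\psi(f(\X))$ is a point the paper leaves implicit.
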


\begin{proof}
By realisability, $\psi \circ f = g^\star \circ \varphi$ on $\X$. Composing with $\psi^{-1}$ gives
\begin{equation}
  f = \psi^{-1} \circ g^\star \circ \varphi \quad \text{on } \X,
\end{equation}
so $\eps_{\mathrm{OOD}}(\psi^{-1} \circ g^\star \circ \varphi) = 0$ for every $\Dtest$.
By identifiability, $\hat{g}_n = g^\star$ with probability $1$, hence
\begin{equation}
  \hat{f}_n = \psi^{-1} \circ \hat{g}_n \circ \varphi = \psi^{-1} \circ g^\star \circ \varphi = f \quad \text{w.p.~$1$,}
\end{equation}
and $\eps_{\mathrm{OOD}}(\hat{f}_n) = 0$.
\end{proof}

\paragraph{Discussion.}
Realisability is the load-bearing assumption: absent it, $\hat{g}_n$ converges to the $\mathcal{M}$-best approximant on $\Dtrain$, which need not extrapolate.
Identifiability is a generic regularity condition; for the linear instances below it reduces to invertibility of the design Gram matrix.
The label map $\psi$ canonicalises targets that are not naturally linear in $\varphi(x)$ (\eg $\psi = \log$ for multiplicative families), at the cost of a noise-model caveat addressed in \Cref{rem:noise-log}.

\begin{lemma}[Sinusoidal instance: Fourier features]
\label{lem:sin}
Let $f(x) = \sin x$ on $\R$ and $\varphi(x) = (\sin x, \cos x): \R \to \mathbb{S}^1 \subset \R^2$.
The linear function $g(u, v) = \mathbf{w}^\top (u, v)$ with $\mathbf{w} = (1, 0)$ satisfies
\begin{equation}
  g(\varphi(x)) = \sin x = f(x) \quad \forall\, x \in \R,
\end{equation}
so \Cref{lem:canon} applies with $\psi = \mathrm{id}$, $\mathcal{M}$ the linear functions on $\R^2$, and $g^\star = (1, 0)^\top$; hence $\eps_{\mathrm{OOD}}(g^\star) = 0$ for any $\Dtest$ supported on $\R$.
\end{lemma}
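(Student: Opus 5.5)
The argument verifies the pointwise identity and then invokes \Cref{lem:canon}. The identity is immediate: for every $x \in \R$, $g(\varphi(x)) = \mathbf{w}^\top(\sin x, \cos x) = 1\cdot\sin x + 0\cdot\cos x = \sin x = f(x)$. We also note that $\varphi$ takes values in $\mathbb{S}^1$ because $\sin^2 x + \cos^2 x = 1$. With $\psi = \mathrm{id}$, which is trivially strictly monotone, this gives condition (i) of \Cref{lem:canon}, realisability on all of $\X = \R$ and not merely on $W$, with $g^\star = (1,0)^\top$ in the class $\mathcal{M}$ of linear functions on $\R^2$.

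The claimed conclusion $\eps_{\mathrm{OOD}}(g^\star) = 0$ concerns $g^\star$ itself. It follows directly by integrating the zero pointwise error $(\hat f(x) - f(x))^2 = 0$ against any $\Dtest$ on $\R$; no sampling argument is needed.

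To match the surrounding text, which asserts that OLS on $n \ge 2$ samples recovers $\mathbf{w}$, we would also check condition (ii) of \Cref{lem:canon}. In the noise-free case the targets are $y_i = \sin x_i = \Phi_i \mathbf{w}^\star$, where $\Phi$ is the $n\times 2$ design matrix with rows $(\sin x_i, \cos x_i)$. Hence $\mathbf{w}^\star$ is a zero-residual least-squares solution, and it is the unique one iff $\Phi^\top\Phi$ is invertible, that is, iff some pair of rows is linearly independent. Two rows $\varphi(x_i), \varphi(x_j)$ on the unit circle are dependent iff $x_i - x_j \in \pi\mathbb{Z}$. When $\Dtrain$ has a density (or merely no atoms), the event that all pairs collide has probability $0$.

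This last point is the only step that is not purely algebraic, so it is the main potential obstacle. The rank condition fails if $\Dtrain$ has atoms, and I would state explicitly that $\Dtrain$ is assumed non-atomic. Granting that, \Cref{lem:canon} yields $\hat f_n = f$ almost surely and hence zero OOD risk for every $\Dtest$.
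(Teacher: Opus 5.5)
Your proposal is correct and follows the paper's route. The paper's proof of this lemma is only the verification of the displayed identity as condition (i) of \Cref{lem:canon} with $\psi = \mathrm{id}$. Your rank argument for condition (ii) is exactly the content of \Cref{cor:canon-recovery}, where collinearity holds iff $x_i \equiv x_j \pmod{\pi}$.

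Your version is slightly sharper there. You note correctly that $\Phi^\top\Phi$ is singular only when \emph{all} pairs of rows are collinear. You also note that a non-atomic $\Dtrain$, not just an absolutely continuous one, already makes this a null event, because each set $x_1 + \pi\mathbb{Z}$ is countable.
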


\begin{proof}
Realisability is the displayed identity; \Cref{lem:canon}(i) is satisfied with $\psi = \mathrm{id}$.
\end{proof}

\begin{remark}[Phase-amplitude generalisation]
\label{rem:sin-general}
The same instance with $g^\star = (A\cos\phi, A\sin\phi)^\top$ realises any $f(x) = A\sin(x + \phi)$, since
\begin{equation}
  A\sin(x + \phi) = A\cos\phi \cdot \sin x + A\sin\phi \cdot \cos x = g^{\star\top}\varphi(x).
\end{equation}
The 2-parameter family $\{A\sin(x + \phi) : A \in \R, \phi \in [0, 2\pi)\}$ is exactly the linear span of $\{\sin x, \cos x\}$, so realisability is automatic for any $A, \phi$.
\end{remark}

\begin{lemma}[Power-law instance: log-log features]
\label{lem:powerlaw}
Let $f(x) = a x^\beta$ for $a > 0$, $\beta \in \R$, on $\X = \R_{>0}$, with $\varphi(x) = (1, \log x): \R_{>0} \to \R^2$ and $\psi(y) = \log y: \R_{>0} \to \R$.
The linear function $g^{\star\top}(u, v) = (\log a)\,u + \beta\,v$, with $g^\star = (\log a, \beta)^\top$, satisfies
\begin{equation}
  g^{\star\top}\varphi(x) = \log a + \beta \log x = \log(a x^\beta) = \psi(f(x)) \quad \forall\, x \in \R_{>0},
\end{equation}
so \Cref{lem:canon} applies with $\mathcal{M}$ the linear functions on $\R^2$; hence $\eps_{\mathrm{OOD}}(\psi^{-1} \circ g^\star \circ \varphi) = 0$ for any $\Dtest$ supported on $\R_{>0}$.
\end{lemma}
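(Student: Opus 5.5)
The argument mirrors the proof of \Cref{lem:sin}: we verify the realisability identity in \Cref{lem:canon}(i) directly and then invoke \Cref{lem:canon}. The ingredients are that $\psi = \log$ is strictly increasing on $\R_{>0}$, that $f(\X) \subseteq \R_{>0}$, and that $\log$ turns a product into a sum. We carry out the steps in the following order.

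\emph{Step 1: well-definedness.} Since $a > 0$ and $x > 0$, we have $f(x) = a x^\beta > 0$ for every $\beta \in \R$. Hence $\psi(f(x))$ is defined. Moreover $\psi$ is a strictly monotonic bijection from $\R_{>0}$ onto $\R$, with inverse $\psi^{-1} = \exp$, so the label-map hypothesis of \Cref{lem:canon} holds. Likewise, $\varphi$ is defined on all of $\X = \R_{>0}$.

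\emph{Step 2: realisability.} For every $x > 0$,
\begin{equation}
\log(a x^\beta) = \log a + \beta \log x = g^{\star\top}(1, \log x) = g^{\star\top}\varphi(x).
\end{equation}
This is exactly the identity $g^\star \circ \varphi = \psi \circ f$ on all of $\X$. Because $g^\star$ is linear, it lies in $\mathcal{M}$.

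\emph{Step 3: conclusion.} Composing the identity with $\psi^{-1} = \exp$ gives $\psi^{-1} \circ g^\star \circ \varphi = f$ pointwise on $\R_{>0}$. The integrand of $\eps_{\mathrm{OOD}}$ therefore vanishes identically, so $\eps_{\mathrm{OOD}}(\psi^{-1} \circ g^\star \circ \varphi) = 0$ for every $\Dtest$ supported on $\R_{>0}$. This is the first half of the argument of \Cref{lem:canon}, and it does not need hypothesis (ii).

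The one point needing care is the scope of the claim. The statement concerns $g^\star$ itself, not the fitted $\hat{g}_n$, so identifiability is not required. If one also wants the ERM version, the plan is to note the following: from noise-free samples with at least two distinct $x_i$, the design matrix with rows $(1, \log x_i)$ has full column rank, because $\log$ is injective. Least squares then returns $g^\star$ uniquely. Under a continuous $\Dtrain$, two distinct points are drawn with probability $1$ whenever $n \ge 2$. I expect no real obstacle beyond stating the positivity domain correctly.
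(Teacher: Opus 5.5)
Your proposal is correct and follows the paper's route. The paper's proof simply observes that the identity $\log a + \beta\log x = \log(ax^\beta)$ is the realisability condition of \Cref{lem:canon}(i) with $\psi = \log$, then invokes \Cref{lem:canon}. Your extra checks (positivity of $f$, $\psi^{-1} = \exp$, the remark that the claim about $g^\star$ itself needs only the first half of \Cref{lem:canon}'s argument, and the optional OLS step that mirrors \Cref{cor:canon-recovery}) are accurate refinements rather than a different approach.
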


\begin{proof}
Realisability is the displayed identity; \Cref{lem:canon}(i) is satisfied with $\psi = \log$.
\end{proof}

\begin{lemma}[Exponential instance: log-$y$ features]
\label{lem:exp}
Let $f(x) = a e^{\beta x}$ for $a > 0$, $\beta \in \R$, on $\X = \R$, with $\varphi(x) = (1, x): \R \to \R^2$ and $\psi(y) = \log y: \R_{>0} \to \R$.
The linear function $g^{\star\top}(u, v) = (\log a)\,u + \beta\,v$, with $g^\star = (\log a, \beta)^\top$, satisfies
\begin{equation}
  g^{\star\top}\varphi(x) = \log a + \beta x = \log(a e^{\beta x}) = \psi(f(x)) \quad \forall\, x \in \R,
\end{equation}
so \Cref{lem:canon} applies with $\mathcal{M}$ the linear functions on $\R^2$; hence $\eps_{\mathrm{OOD}}(\psi^{-1} \circ g^\star \circ \varphi) = 0$ for any $\Dtest$ supported on $\R$.
\end{lemma}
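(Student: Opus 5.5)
The plan mirrors the proofs of \Cref{lem:sin} and \Cref{lem:powerlaw}: reduce the statement to the two hypotheses of \Cref{lem:canon}. For realisability (i), I will check the displayed identity directly. Since $a>0$, the function $f(x)=ae^{\beta x}$ takes values in $\R_{>0}$ for every $x\in\R$, so $\psi=\log$ is defined, strictly increasing, and invertible on $f(\X)$ with inverse $\exp$. The identity $\log(ae^{\beta x})=\log a+\beta x$ then holds for all $x\in\R$, not only on $W$. This gives $g^\star=(\log a,\beta)^\top\in\mathcal{M}$ with $g^\star\circ\varphi=\psi\circ f$ on all of $\X$.

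Next I will apply the first half of the proof of \Cref{lem:canon}, which uses realisability alone. Composing with $\psi^{-1}=\exp$ gives $\exp\circ g^\star\circ\varphi=f$ pointwise on $\R$. The integrand $(\hat f(x)-f(x))^2$ is therefore identically zero, so $\eps_{\mathrm{OOD}}(\psi^{-1}\circ g^\star\circ\varphi)=0$ for every $\Dtest$ supported on $\R$. This is exactly the conclusion as stated, which concerns $g^\star$ rather than the ERM output.

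If one also wants the finite-sample form (hypothesis (ii) of \Cref{lem:canon}), I will argue as in \Cref{cor:canon-recovery}. In the noise-free case the transformed targets $\log y_i=\log a+\beta x_i$ lie exactly on a line, so OLS on the design rows $(1,x_i)$ returns $g^\star$ whenever the $n\times 2$ design matrix has full column rank. Full rank fails only if all $x_i$ coincide. That event has probability zero once $n\ge 2$ and $\Dtrain$ has no atoms, or more generally is not a point mass.

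I expect no real obstacle here. The only point needing care is that $\psi=\log$ requires $f>0$ on all of $\X$, which the assumption $a>0$ guarantees. The noise caveat (log of noisy $y$, possibly nonpositive) lies outside the noise-free realisable setting of \Cref{lem:canon} and is deferred to \Cref{rem:noise-log}.
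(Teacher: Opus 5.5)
Your proposal is correct and follows the paper's route. The paper's proof is just the observation that $\log(ae^{\beta x}) = \log a + \beta x$ on all of $\R$, so \Cref{lem:canon}(i) holds with $\psi = \log$; your remark that the stated conclusion about $g^\star$ needs only realisability, with OLS recovery handled as in \Cref{cor:canon-recovery}, is also right.

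One small slip is in your optional OLS step. The event that all $x_i$ coincide has probability zero when $\Dtrain$ is atomless (the paper assumes absolute continuity), but not merely when $\Dtrain$ is not a point mass: for example, $\Dtrain = \tfrac12\delta_0 + \tfrac12\delta_1$ gives probability $2^{1-n} > 0$.
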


\begin{proof}
Realisability is the displayed identity; \Cref{lem:canon}(i) is satisfied with $\psi = \log$.
\end{proof}

\begin{corollary}[Unified OLS recovery in canonical coordinates]
\label{cor:canon-recovery}
For each instance \Cref{lem:sin,lem:powerlaw,lem:exp}, let $\Dtrain$ be absolutely continuous with respect to Lebesgue measure on a window $W$ of positive Lebesgue measure (with $W \subseteq \R_{>0}$ for \Cref{lem:powerlaw} and $W \subseteq \R$ otherwise).
Then ordinary least squares on $n \ge 2$ i.i.d.\ samples $\{(\varphi(x_i), \psi(y_i))\}_{i=1}^n$ recovers $g^\star$ with probability $1$:
\begin{equation}
  \hat{g}_n = g^\star \;\;\text{w.p.~$1$},
  \qquad
  \eps_{\mathrm{OOD}}(\hat{f}_n) = 0 \quad \text{for any } \Dtest,
\end{equation}
where $\hat{f}_n = \psi^{-1} \circ \hat{g}_n \circ \varphi$.
\end{corollary}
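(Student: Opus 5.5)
The plan is to reduce everything to \Cref{lem:canon}. Realisability (i) is already supplied by \Cref{lem:sin,lem:powerlaw,lem:exp}, so the only remaining work is condition (ii): noise-free OLS in the transformed coordinates returns exactly $g^\star$ with probability $1$. In each instance the model class consists of the linear functions on $\R^2$. The labels are noise-free and satisfy $\psi(y_i) = g^{\star\top}\varphi(x_i)$ exactly. Write $\Phi \in \R^{n\times 2}$ for the design matrix with rows $\varphi(x_i)^\top$ and $z = (\psi(y_i))_i$. Then $z = \Phi g^\star$, and $g^\star$ attains zero empirical loss. Whenever $\Phi^\top\Phi$ is invertible, i.e.\ $\Phi$ has rank $2$, the OLS minimiser is unique and equals $(\Phi^\top\Phi)^{-1}\Phi^\top\Phi g^\star = g^\star$. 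Once that is shown, \Cref{lem:canon} yields $\hat f_n = f$ and hence $\eps_{\mathrm{OOD}}(\hat f_n)=0$ for every $\Dtest$.

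The main step is therefore to show that $\operatorname{rank}\Phi = 2$ almost surely when $n\ge 2$. It suffices to consider the first two samples: I will show that the $2\times 2$ determinant $\det[\varphi(x_1),\varphi(x_2)]$ is nonzero with probability $1$.

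\begin{itemize}
\item \emph{Exponential instance} ($\varphi(x)=(1,x)$): the determinant is $x_2-x_1$.
\item \emph{Power-law instance} ($\varphi(x)=(1,\log x)$): the determinant is $\log x_2-\log x_1$, which vanishes only when $x_1=x_2$, because $\log$ is injective on $\R_{>0}$.
\item \emph{Fourier instance}: the determinant is $\sin x_1\cos x_2-\cos x_1\sin x_2=\sin(x_1-x_2)$, which vanishes exactly when $x_1-x_2\in\pi\mathbb{Z}$.
\end{itemize}

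In every case the bad set is a countable union of lines $\{x_1-x_2=c\}$ in $\R^2$, which has two-dimensional Lebesgue measure zero. Since $x_1,x_2$ are i.i.d.\ with an absolutely continuous law, the product law is absolutely continuous on $\R^2$, so the bad set has probability zero. Adding further rows cannot reduce the rank, so the full-rank conclusion holds for all $n\ge2$.

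I expect the Fourier case to be the only slightly delicate step, because $\varphi$ is not injective. The fix is to note that the relevant singular set is $\{\sin(x_1-x_2)=0\}$, not just the diagonal, and that this set is still countable and hence null. I will also state explicitly that $\psi(y_i)$ is well defined, since $y_i=f(x_i)>0$ in the log instances. Finally, I will note that the positive-measure assumption on $W$ is used only to make absolute continuity meaningful; no coverage beyond two generic points is needed.
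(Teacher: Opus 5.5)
Your proposal is correct and follows essentially the same route as the paper: reduce to \Cref{lem:canon}, observe that $g^\star$ is a zero-residual OLS solution, and show $\Phi^\top\Phi$ is invertible almost surely. The collinearity events ($x_i = x_j$, $\log x_i = \log x_j$, $\sin(x_i - x_j) = 0$) are Lebesgue-null under an absolutely continuous $\Dtrain$. The only cosmetic difference is that you certify rank $2$ using the first two rows alone, whereas the paper takes a union over all pairs of rows; both are equally valid.
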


\begin{proof}
We verify identifiability (\Cref{lem:canon}(ii)) for OLS on $n \ge 2$ samples in each instance.
The OLS estimator is
\begin{equation}
  \hat{g}_n = (\Phi^\top \Phi)^{-1} \Phi^\top \mathbf{z},
\end{equation}
with design matrix $\Phi \in \R^{n \times 2}$ having rows $\varphi(x_i)^\top$ and target $\mathbf{z}_i = \psi(y_i) = \psi(f(x_i))$.
By realisability, $\mathbf{z}_i = g^{\star\top}\varphi(x_i)$, so $g^\star$ is a zero-residual solution; uniqueness holds whenever $\Phi^\top \Phi$ is invertible, which fails only if two rows of $\Phi$ are collinear.
The collinearity event in each instance is:
\begin{itemize}[leftmargin=*,topsep=2pt,itemsep=1pt]
  \item \emph{Sinusoidal} (\Cref{lem:sin}): $\sin x_i \cos x_j - \sin x_j \cos x_i = \sin(x_i - x_j) = 0$, \ie $x_i \equiv x_j \pmod{\pi}$;
  \item \emph{Power-law} (\Cref{lem:powerlaw}): $\log x_i = \log x_j$, \ie $x_i = x_j$;
  \item \emph{Exponential} (\Cref{lem:exp}): $x_i = x_j$.
\end{itemize}
Each collinearity event is a finite union of measure-$0$ subsets of the joint sample space and so has probability $0$ under absolutely-continuous $\Dtrain$.
Hence $\Phi^\top\Phi$ is invertible w.p.~$1$, $\hat{g}_n = g^\star$, and \Cref{lem:canon} closes the argument.
\end{proof}

\begin{remark}[Reparameterisation between power-law and exponential]
\label{rem:powerlaw-exp}
\Cref{lem:powerlaw,lem:exp} are the same log-linear instance up to the choice of feature map ($\varphi(x) = (1, \log x)$ versus $\varphi(x) = (1, x)$); we present them separately because they correspond to two distinct practitioner recipes (log-log and log-$y$ regression) and to the two diagonal cells of \Cref{tab:exp12_main} in Exp.~1.2.
\end{remark}

\begin{remark}[Noise model under $\psi = \log$]
\label{rem:noise-log}
When $\psi = \log$, additive Gaussian noise on $y$ is non-Gaussian on $\psi(y)$, so OLS in canonical coordinates is the maximum-likelihood estimator only under a multiplicative log-normal noise model, $\log y_i = \log f(x_i) + \eta_i$ with $\eta_i \sim \mathcal{N}(0, \sigma^2)$.
\Cref{lem:canon} and \Cref{cor:canon-recovery} are noise-free realisable identifiability statements: they assert that OLS on the transformed sample has zero residual at $g^\star$ when the noise is zero, and that this remains true with probability $1$ under absolutely-continuous sampling.
The noisy version (rates of convergence under either Gaussian or log-Gaussian noise) is a standard exercise in linear regression and is omitted.
\end{remark}

\FloatBarrier
\section{Exp.~1.3: Correct Transformation ($(\varphi,\psi)$ with Matching $\mathcal{M}$)}
\label{app:exp13}

\begin{figure}[t]
  \centering
  \includegraphics[width=0.8\linewidth]{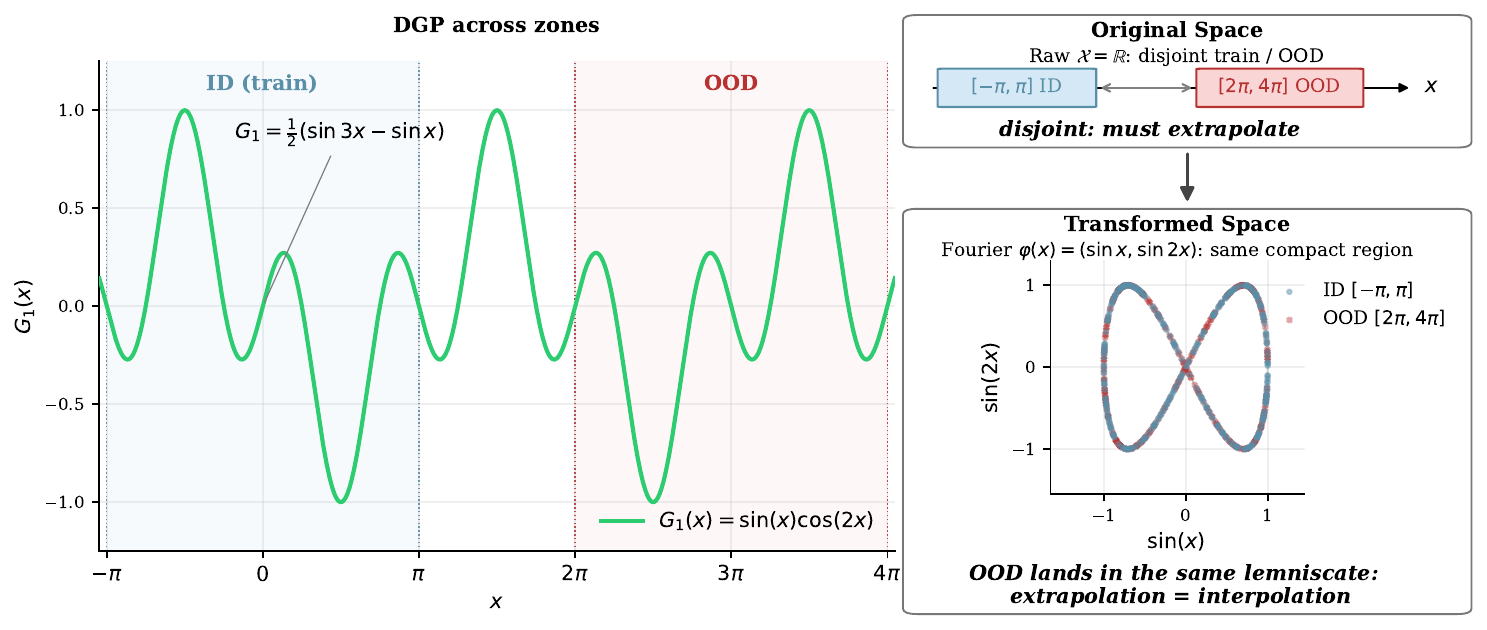}
  \caption{Exp.~1.3 Non linear DGP in the transformed space, mirroring \Cref{fig:exp11_concept}.
    \emph{Left:} $G_1(x) = \sin(x)\cos(2x) = \frac{1}{2}(\sin 3x - \sin x)$ on training $W = [-\pi, \pi]$ and OOD $[2\pi, 4\pi]$.
    \emph{Centre:} in raw coordinates, training and OOD occupy disjoint segments of $\mathbb{R}$.
    \emph{Right:} the Fourier feature map $\varphi(x) = (\sin x, \sin 2x)$ sends both regions onto the same lemniscate; OOD inputs land where training inputs already do, so extrapolation in $x$ becomes interpolation in $\varphi$.\label{fig:exp13_concept}}
\end{figure}

\noindent\textit{Example G1 tests when the target is \emph{nonlinear} in the transformed space.}
For the Fourier family $G_1(x) = \sin(x)\cos(2x)$ on $W = [-\pi, \pi]$, OOD $[2\pi, 4\pi]$ (\Cref{fig:exp13_concept}), apply the product-to-sum identity $\sin A \cos B = \tfrac{1}{2}[\sin(A{+}B) + \sin(A{-}B)]$ with $A = x$, $B = 2x$ to rewrite the target as
\begin{equation*}
  G_1(x) \;=\; \sin(x)\cos(2x) \;=\; \tfrac{1}{2}\bigl[\sin(3x) + \sin(-x)\bigr] \;=\; \tfrac{1}{2}\bigl[\sin 3x - \sin x\bigr].
\end{equation*}
The feature map $\varphi(x) = (\sin x, \cos x, \sin 2x, \cos 2x)$ collapses training $[-\pi,\pi]$ and OOD $[2\pi,4\pi]$ onto the same compact image (the lemniscate of \Cref{fig:exp13_concept}, right), so OOD becomes interpolation in $\varphi$.
But the linear-head condition fails: the third-harmonic component $\sin 3x$ in the rewritten target is \emph{not} a coordinate of $\varphi$ and cannot be reached by any linear combination of $\{\sin x, \cos x, \sin 2x, \cos 2x\}$, so OLS on $\varphi$ is parametrically blocked from representing $G_1$ on $W$ in the first place, let alone OOD.
This makes the experiment a joint test of the correct feature map and the model class: the target lies outside the linear span and requires a nonlinear model.
Empirically: OLS fails at ${\sim}82\%$ OOD on the same features at which MLP achieves $2.5\%$, because the MLP can multiply basis entries to synthesise $\sin 3x = \sin x\,(1 - 4\sin^2 x) \cdot (\text{sign})$ (or any other nonlinear route to the missing harmonic) and so closes the gap that the linear head cannot.
For the log-polynomial families ($\log y = 2u - u^2$ where $u = \log x$; $\log y = 1 - 2 e^{-x}$), the target is quadratic in the transformed coordinate; OLS fails on the same nonlinearity-in-$\varphi$ ground.

\section{Full SINDy $\delta_{\mathrm{OOD}}$ Diagnostic Battery}
\label{app:sindy_battery}

A degree-$2$ polynomial SINDy library in the \emph{transformed state} recovers the governing dynamics exactly (\eg $\frac{dv}{du} = 2 - 2u$ for the first log-polynomial family, identified to ${<}0.01\%$ OOD).
This motivates the diagnostic of \Cref{sec:exp32}: $\delta_{\mathrm{OOD}}$ as the OOD derivative residual after sparse identification.
\Cref{tab:sindy_delta_consolidated} consolidates the diagnostic across every SINDy-regime experiment in the paper: the toy battery of $35$ (DGP, feature map) pairs across the Fourier and log-polynomial families, Exp.~1.1 ($\sin(x)$ vs.\ Taylor-9 with Fourier and poly-9 libraries, \Cref{sec:exp11}), Exp.~1.2 (power law vs.\ exponential with log-log, log-y, raw maps under a degree-1 polynomial library, \Cref{sec:exp12}), Kepler's third law (\Cref{sec:exp22}, real exoplanet data, $8$ multivariate feature maps), and the bilinear MAK vector field (\Cref{sec:exp21}, synthetic chemistry ODE, $3$ feature maps).
Across the toy battery, $\delta_{\mathrm{OOD}} = 0$ for every canonically-correct $(\text{DGP}, \varphi)$ pair and $\delta_{\mathrm{OOD}} > 0$ for every wrong pair: \textbf{Spearman $\rho(\delta_{\mathrm{OOD}}, \text{wrong-label}) = 0.82$ ($p = 1.3 \times 10^{-9}$, $n = 35$ pairs).}
The four real-data / multivariate sections (Exp.~1.1, 1.2, Kepler, MAK) add independent test points where every canonical commitment achieves $\delta_{\mathrm{OOD}} \le 0.012$ and wrong feature maps reach $\delta_{\mathrm{OOD}}$ between $0.037$ and $6.7{\times}10^{5}$.
We treat $\delta_{\mathrm{OOD}}$ as \emph{shown to correlate} with the canonically-correct $\varphi$, not as a guaranteed selector.
The diagnostic is silent on non-polynomial dynamics: for $\tanh(5\sin x \cos 2x)$ no polynomial coordinate exists and $\delta_{\mathrm{OOD}} > 0$ everywhere, correctly signalling ``no sparse representation'' without ranking the candidates.

\begin{table}[ht]
  \centering
  \small
  \caption{Consolidated SINDy $\delta_\mathrm{OOD}$ across all SINDy-regime experiments \citep{brunton2016sindy}. $\delta$: squared-NMSE residual in the SINDy fit space. \colorbox{gray!12}{\strut Shaded rows} use the correct coordinate; bolded $\delta_\mathrm{OOD} \approx 0$ whenever a sparse polynomial coordinate exists. $G_2 = \tanh(\cdot)$ has none (diagnostic silent). Mean over 3 seeds. Spearman $\rho(\delta_\mathrm{OOD},\,\text{wrong-label}) = 0.82$ ($p=1.32\times 10^{-9}$, $n=35$ pairs in toy battery). Toy battery; real-data continuation below.}
  \label{tab:sindy_delta_consolidated}
  \begin{tabular}{llrrr}
    \toprule
    \textbf{DGP / Experiment} & \textbf{Feature map} & $\boldsymbol{k}$ & $\boldsymbol{\delta_\mathrm{tr}}$ & $\boldsymbol{\delta_\mathrm{OOD}}$ \\
    \midrule
    $\sin(x)$ & Raw $x$ & 2 & 0.0006 & 0.9710 \\
    \rowcolor{gray!12}
    $\sin(x)$ & Fourier-2 (deg.\ 1) & 4 & 0.0000 & \textbf{0.0000} \\
    $\sin(x)$ & Fourier-2 (deg.\ 2) & 6 & 0.0000 & 3.62 \\
    $\sin(x)$ & Poly-2 & 2 & 0.0000 & 54.31 \\
    $\sin(x)$ & Poly-3 & 3 & 0.0000 & 11.61 \\
    \midrule
    $\sin(2x)$ & Raw $x$ & 2 & 0.0070 & 26.65 \\
    $\sin(2x)$ & Fourier-2 (deg.\ 1) & 6 & 0.0000 & 0.0958 \\
    \rowcolor{gray!12}
    $\sin(2x)$ & Fourier-2 (deg.\ 2) & 4 & 0.0000 & \textbf{0.0000} \\
    $\sin(2x)$ & Poly-2 & 3 & 0.0003 & 332 \\
    $\sin(2x)$ & Poly-3 & 4 & 0.0000 & 1285 \\
    \midrule
    \rowcolor{gray!12}
    $x^2$ & Raw $x$ & 1 & 0.0000 & \textbf{0.0000} \\
    $x^2$ & Fourier-2 (deg.\ 1) & 6 & 0.0000 & 0.8548 \\
    \rowcolor{gray!12}
    $x^2$ & Poly-2 & 1 & 0.0000 & \textbf{0.0000} \\
    \rowcolor{gray!12}
    $x^2$ & Poly-3 & 1 & 0.0000 & \textbf{0.0000} \\
    \midrule
    $x^3+x$ & Raw $x$ & 2 & 0.0050 & 0.5606 \\
    $x^3+x$ & Fourier-2 (deg.\ 1) & 6 & 0.0000 & 0.5641 \\
    \rowcolor{gray!12}
    $x^3+x$ & Poly-2 & 2 & 0.0000 & \textbf{0.0000} \\
    \rowcolor{gray!12}
    $x^3+x$ & Poly-3 & 2 & 0.0000 & \textbf{0.0000} \\
    \midrule
    $\sin(x)\cos(2x)$ & Raw $x$ & 0 & 1.0000 & 1.0000 \\
    \rowcolor{gray!12}
    $\sin(x)\cos(2x)$ & Fourier-4 (deg.\ 2) & 3 & 0.0000 & \textbf{0.0000} \\
    \rowcolor{gray!12}
    $\sin(x)\cos(2x)$ & Fourier-6 (deg.\ 1) & 4 & 0.0000 & \textbf{0.0000} \\
    $\sin(x)\cos(2x)$ & Poly-3 & 2 & 0.9638 & 26.46 \\
    \midrule
    $\tanh(5\sin(x)\cos(2x))$ & Raw $x$ & 0 & 1.0000 & 1.0000 \\
    $\tanh(5\sin(x)\cos(2x))$ & Fourier-4 (deg.\ 2) & 9 & 0.5186 & 0.5226 \\
    $\tanh(5\sin(x)\cos(2x))$ & Fourier-6 (deg.\ 1) & 16 & 0.1851 & 0.1894 \\
    $\tanh(5\sin(x)\cos(2x))$ & Poly-3 & 0 & 1.0000 & 1.0000 \\
    \midrule
    $\log y=2u-u^2$ & Raw $x{\to}y$ & 2 & 0.0359 & 4.80 \\
    $\log y=2u-u^2$ & $\log x$, raw $y$ & 3 & 0.0045 & 0.7504 \\
    \rowcolor{gray!12}
    $\log y=2u-u^2$ & Log-log & 2 & 0.0000 & \textbf{0.0000} \\
    $\log y=2u-u^2$ & Poly-2 $x{\to}y$ & 3 & 0.0041 & 9587 \\
    \midrule
    $\log y=1-2e^{-x}$ & Raw $x{\to}y$ & 2 & 0.0128 & 1.54 \\
    $\log y=1-2e^{-x}$ & $\log y$, feat.\ $x$ & 2 & 0.0147 & 1254 \\
    \rowcolor{gray!12}
    $\log y=1-2e^{-x}$ & $\log y$, state $v$ & 2 & 0.0000 & \textbf{0.0000} \\
    \rowcolor{gray!12}
    $\log y=1-2e^{-x}$ & $\log y$, feat.\ $e^{-x}$ & 1 & 0.0000 & \textbf{0.0000} \\
    $\log y=1-2e^{-x}$ & Poly-2 $x{\to}y$ & 3 & 0.0008 & 1660 \\
    \bottomrule
  \end{tabular}
\end{table}

\addtocounter{table}{-1}
\begin{table}[ht]
  \centering
  \small
  \caption[Consolidated SINDy $\delta_\mathrm{OOD}$ diagnostic (continued).]{Consolidated SINDy $\delta_\mathrm{OOD}$ (continued): real-data and multivariate experiments. \colorbox{gray!12}{\strut Shaded rows} use the correct coordinate; bold $\delta_\mathrm{OOD}\approx 0$ whenever a sparse polynomial coordinate exists. Mean over 3 seeds.}
  \begin{tabular}{llrrr}
    \toprule
    \textbf{DGP / Experiment} & \textbf{Feature map} & $\boldsymbol{k}$ & $\boldsymbol{\delta_\mathrm{tr}}$ & $\boldsymbol{\delta_\mathrm{OOD}}$ \\
    \midrule
    \rowcolor{gray!12}
    $\sin(x)$ \;(Exp.~1.1) & Fourier $[\sin x,\cos x,1]$ & 15 & 3.31e-30 & \textbf{3.29e-30} \\
    $\sin(x)$ \;(Exp.~1.1) & Poly-$9$ $[1,x,\dots,x^9]$ & 5 & 3.09e-11 & $3.7{\times}10^{4}$ \\
    \midrule
    \rowcolor{gray!12}
    Taylor-$9(x)$ \;(Exp.~1.1) & Poly-$9$ $[1,x,\dots,x^9]$ & 5 & 5.65e-26 & \textbf{3.89e-23} \\
    Taylor-$9(x)$ \;(Exp.~1.1) & Fourier $[\sin x,\cos x,1]$ & 25 & 2.73e-06 & 0.9979 \\
    \midrule
    \rowcolor{gray!12}
    $y = x^2$ power law \;(Exp.~1.2) & $(\log x) \to \log y$ & 1 & 0.0134 & \textbf{1.15e-04} \\
    $y = x^2$ power law \;(Exp.~1.2) & $x \to \log y$ & 2 & 0.0148 & 1.43 \\
    $y = x^2$ power law \;(Exp.~1.2) & $x \to y$ \;(raw) & 2 & 0.0072 & 0.4440 \\
    \midrule
    $y = e^{\alpha(x-1)}$ exp \;(Exp.~1.2) & $(\log x) \to \log y$ & 2 & 0.0204 & 0.3146 \\
    \rowcolor{gray!12}
    $y = e^{\alpha(x-1)}$ exp \;(Exp.~1.2) & $x \to \log y$ & 2 & 0.0181 & \textbf{1.40e-04} \\
    $y = e^{\alpha(x-1)}$ exp \;(Exp.~1.2) & $x \to y$ \;(raw) & 2 & 0.0117 & 0.9996 \\
    \midrule
    \rowcolor{gray!12}
    Kepler (real, $n_\mathrm{train}=1{,}881$, $n_\mathrm{OOD}=481$) & $(\log a,\log M_{\mathrm{star}})$ & 2 & 7.12e-04 & \textbf{0.0120} \\
    Kepler (real, $n_\mathrm{train}=1{,}881$, $n_\mathrm{OOD}=481$) & $(\log a,\log M_{\mathrm{star}},\log R_{\mathrm{star}},\log T_{\mathrm{eff}})$ & 8 & 7.01e-04 & 0.0105 \\
    Kepler (real, $n_\mathrm{train}=1{,}881$, $n_\mathrm{OOD}=481$) & $(\log a,\log T_{\mathrm{eff}})$ & 5 & 0.0011 & 0.0370 \\
    Kepler (real, $n_\mathrm{train}=1{,}881$, $n_\mathrm{OOD}=481$) & $(\log a,\log R_{\mathrm{star}})$ & 2 & 0.0012 & 0.1530 \\
    Kepler (real, $n_\mathrm{train}=1{,}881$, $n_\mathrm{OOD}=481$) & $(\log a)$ & 3 & 0.0041 & 0.1496 \\
    Kepler (real, $n_\mathrm{train}=1{,}881$, $n_\mathrm{OOD}=481$) & $(\log a,M_{\mathrm{star}})$ & 5 & 8.27e-04 & 2.74 \\
    Kepler (real, $n_\mathrm{train}=1{,}881$, $n_\mathrm{OOD}=481$) & $(a,M_{\mathrm{star}})$ & 3 & 0.1113 & 9.53 \\
    Kepler (real, $n_\mathrm{train}=1{,}881$, $n_\mathrm{OOD}=481$) & $(a,\log M_{\mathrm{star}})$ & 5 & 0.0060 & $6.7{\times}10^{5}$ \\
    \midrule
    \rowcolor{gray!12}
    MAK $\dot x_i = F - B x_i - R x_i x_j$ (synthetic) & $(x_i,\,x_j)$ & 3 & 5.66e-14 & \textbf{2.84e-14} \\
    MAK $\dot x_i = F - B x_i - R x_i x_j$ (synthetic) & $(\tanh x_i,\,\tanh x_j)$ & 6 & 0.0013 & 0.1296 \\
    MAK $\dot x_i = F - B x_i - R x_i x_j$ (synthetic) & $(x_i)$ & 3 & 0.2714 & 0.4195 \\
    \bottomrule
  \end{tabular}
\end{table}

\FloatBarrier
\section{Full Versions of Table Excerpts}
\label{app:full_tables}

The main text shows excerpts of several tables to fit the page budget, retaining only the rows that anchor the headline contrast. The full grids are reproduced here with all model classes, variable subsets, and baselines.

\subsection{Exp.~1.2 full grid (DGP specificity, all models, cross-DGP OOD)}
\label{app:exp12_full}

Full version of \Cref{tab:exp12_main} (\Cref{sec:exp12}), reproduced as \Cref{tab:exp12}. The grid crosses both candidate feature maps (\texttt{log-log}, \texttt{log-}$y$) and the no-transform baseline against three model classes (OLS, MLP, SINDy); each row's predictions are evaluated against \emph{both} the $x^2$ targets (OOD $P_1$ column) and the $e^{\alpha x}$ targets (OOD $P_2$ column) on $[2, 10]$, mirroring the layout of Exp.~1.1 (\Cref{tab:exp11}). The diagonal of (DGP $\times$ correct feature map) is the only configuration that succeeds OOD against the matching target: linear-in-feature models (OLS and SINDy) on the correct transform recover the right family up to noise, while every off-diagonal cell collapses against its own DGP. MLPs collapse on every cell, including the correct feature map, because they do not exploit the residual linearity in feature space.

\begin{table}[ht]
  \centering
  \small
  \caption{Exp.~1.2: DGP specificity. $P_1(x){=}x^2$ vs.\ $P_2(x){=}e^{(\ln 4)(x-1)}$ agree on $[1, 2]$ and diverge by ${\sim}10^5$ on $[2, 10]$. OOD $P_1$ / OOD $P_2$ columns evaluate the same predictions against each target on $[2, 10]$. \colorbox{gray!12}{\strut Shaded rows} use the correct feature map for OLS. TabPFN / TimesFM are zero-shot; the feature map is applied to inputs and (where applicable) outputs. OOD relative error (\%), mean $\pm$ std over 3 seeds; best per column in bold.}
  \label{tab:exp12}
  \begin{tabular}{lllrrr}
    \toprule
    \textbf{DGP} & \textbf{Feature map} & \textbf{Model} & \textbf{ID (\%)} & \textbf{OOD $P_1$ (\%)} & \textbf{OOD $P_2$ (\%)} \\
    \midrule
    \rowcolor{gray!12}
    $P_1$: $x^2$ & log-log & OLS & 1.0 $\pm$ 0.5 & \textbf{3.2 $\pm$ 1.7} & 99.8 $\pm$ 0.0 \\
    \rowcolor{gray!12}
     & log-log & MLP & 1.4 $\pm$ 0.3 & 83.3 $\pm$ 2.2 & 100.0 $\pm$ 0.0 \\
    \rowcolor{gray!12}
     & log-log & SINDy & 0.9 $\pm$ 0.4 & 3.8 $\pm$ 1.7 & 99.8 $\pm$ 0.0 \\
    \rowcolor{gray!12}
     & log-log & TabPFN & 1.3 $\pm$ 0.3 & 85.4 $\pm$ 1.0 & 100.0 $\pm$ 0.0 \\
    \rowcolor{gray!12}
     & log-log & TimesFM & -- & 28.3 $\pm$ 0.0 & 99.8 $\pm$ 0.0 \\
    \cmidrule(lr){2-6}
     & log-$y$ & OLS & 3.2 $\pm$ 0.3 & $\gg 9{,}999$ & 13.5 $\pm$ 3.1 \\
     & log-$y$ & MLP & 1.2 $\pm$ 0.4 & 78.5 $\pm$ 1.0 & 100.0 $\pm$ 0.0 \\
     & log-$y$ & SINDy & 3.2 $\pm$ 0.3 & $\gg 9{,}999$ & 13.5 $\pm$ 3.1 \\
     & log-$y$ & TabPFN & 1.3 $\pm$ 0.3 & 85.2 $\pm$ 2.1 & 100.0 $\pm$ 0.0 \\
     & log-$y$ & TimesFM & -- & $\gg 9{,}999$ & 20.6 $\pm$ 0.0 \\
    \cmidrule(lr){2-6}
     & None & MLP & 1.2 $\pm$ 0.2 & 85.1 $\pm$ 1.0 & 100.0 $\pm$ 0.0 \\
     & None & SINDy & 2.8 $\pm$ 0.2 & 61.8 $\pm$ 0.3 & 99.9 $\pm$ 0.0 \\
     & None & TabPFN & 1.2 $\pm$ 0.1 & 80.3 $\pm$ 2.1 & 100.0 $\pm$ 0.0 \\
     & None & TimesFM & -- & 77.7 $\pm$ 0.0 & 100.0 $\pm$ 0.0 \\
    \midrule
    \rowcolor{gray!12}
    $P_2$: $e^{\alpha(x-1)}$ & log-$y$ & OLS & 1.1 $\pm$ 0.5 & $\gg 9{,}999$ & \textbf{12.8 $\pm$ 8.8} \\
    \rowcolor{gray!12}
     & log-$y$ & MLP & 1.5 $\pm$ 0.3 & 72.9 $\pm$ 5.4 & 100.0 $\pm$ 0.0 \\
    \rowcolor{gray!12}
     & log-$y$ & SINDy & 1.1 $\pm$ 0.5 & $\gg 9{,}999$ & \textbf{12.8 $\pm$ 8.8} \\
    \rowcolor{gray!12}
     & log-$y$ & TabPFN & 1.4 $\pm$ 0.3 & 80.6 $\pm$ 3.3 & 100.0 $\pm$ 0.0 \\
    \rowcolor{gray!12}
     & log-$y$ & TimesFM & -- & $\gg 9{,}999$ & 29.6 $\pm$ 0.0 \\
    \cmidrule(lr){2-6}
     & log-log & OLS & 3.1 $\pm$ 0.2 & 5.6 $\pm$ 3.1 & 99.8 $\pm$ 0.0 \\
     & log-log & MLP & 1.5 $\pm$ 0.3 & 85.6 $\pm$ 4.4 & 100.0 $\pm$ 0.0 \\
     & log-log & SINDy & 3.1 $\pm$ 0.2 & 5.6 $\pm$ 3.1 & 99.8 $\pm$ 0.0 \\
     & log-log & TabPFN & 1.3 $\pm$ 0.3 & 83.5 $\pm$ 1.6 & 100.0 $\pm$ 0.0 \\
     & log-log & TimesFM & -- & 52.4 $\pm$ 0.0 & 99.7 $\pm$ 0.0 \\
    \cmidrule(lr){2-6}
     & None & MLP & 1.4 $\pm$ 0.2 & 88.3 $\pm$ 1.2 & 100.0 $\pm$ 0.0 \\
     & None & SINDy & 6.0 $\pm$ 0.2 & 63.2 $\pm$ 0.3 & 99.9 $\pm$ 0.0 \\
     & None & TabPFN & 1.3 $\pm$ 0.1 & 74.5 $\pm$ 3.3 & 100.0 $\pm$ 0.0 \\
     & None & TimesFM & -- & 77.1 $\pm$ 0.0 & 100.0 $\pm$ 0.0 \\
    \bottomrule
  \end{tabular}
\end{table}

\subsection{Exp.~3.1 full grid (selection-protocol counts)}
\label{app:exp31_full}

Full version including the per-criterion selection-counts column. Across $300$ selections ($100$ trials $\times$ $3$ seeds), CV picks Fourier OLS $244$ times and fails on the remaining $56$, distributed as Poly-7 OLS: $37$, Poly-9 OLS: $19$, raw OLS: $0$, confirming that the wrong picks land on the observationally-equivalent polynomial confounds rather than on the raw baseline.

\begin{table}[ht]
  \centering
  \small
  \caption{Exp.~3.1: Selection protocol. 100 trials, DGP $\sin(x)$, $K=4$ candidate feature maps (all OLS; random baseline 25\%). \colorbox{gray!12}{\strut Shaded row} is the near-boundary protocol (Prop.~\ref{prop:nearboundary}), querying the $\eps$-divergence boundary outside the agreement zone. Selection counts are over 300 (trial $\times$ seed) draws; the correct $\varphi(x){=}(\sin x, \cos x)$ (bold header) maximises the count. Accuracy (\%), mean $\pm$ std; best in bold.}
  \label{tab:exp31}
  \begin{tabular}{lrrrrr}
    \toprule
    \textbf{Criterion} & \textbf{Accuracy (\%)} & \multicolumn{4}{c}{\textbf{Selection counts}} \\
    \cmidrule(lr){3-6}
     & & \textbf{Fourier} & Poly-7 & Poly-9 & Raw $x$ \\
    \midrule
    5-fold CV & 81.3 $\pm$ 3.4 & \textbf{244} & 37 & 19 & 0 \\
    \rowcolor{gray!12}
    Near-boundary val. & \textbf{100.0 $\pm$ 0.0} & \textbf{300} & 0 & 0 & 0 \\
    \bottomrule
  \end{tabular}
\end{table}

\subsection{Exp.~2.1 (MAK) vector-field full grid}
\label{app:exp21_full}

Full version of \Cref{tab:exp21_main} (\Cref{sec:exp21}), reproduced as \Cref{tab:exp21}, with the extended caption for the mass-action-kinetics vector-field experiment.

\begin{table}[ht]
  \centering
  \small
  \caption{Exp.~2.1 (MAK): vector field on a canonical synthetic 2-node graph \citep{vasiliauskaite2024generalization}. True process: $\dot{x}_i = F - Bx_i - Rx_ix_j$ with textbook parameter values $F=0.5,\,B=0.1,\,R=1.0$ (dimensionless, representative of the CSTR mass-action regime: $F$ is constant inflow, $B$ first-order outflow, $R$ bimolecular rate). Training: $(x_1, x_2) \in [0.4, 1.2]^2$; OOD: $[0, 2.0]^2 \setminus [0.4, 1.2]^2$. \colorbox{gray!12}{\strut Shaded rows} use the canonically-correct bilinear feature map $[1, x_i, x_i x_j]$, which linearises the target globally: OLS on these features recovers $(F, -B, -R)$ exactly. OOD relative error (\%), mean $\pm$ std over 3 seeds. Best OOD in bold.}
  \label{tab:exp21}
  \begin{tabular}{llrr}
    \toprule
    \textbf{Feature map} & \textbf{Model} & \textbf{ID (\%)} & \textbf{OOD (\%)} \\
    \midrule
    None & MLP & 0.5 $\pm$ 0.0 & 24.4 $\pm$ 0.9 \\
    \cmidrule(lr){1-4}
    \rowcolor{gray!12}
    Bilinear & OLS & 0.0 $\pm$ 0.0 & \textbf{0.0 $\pm$ 0.0} \\
    \rowcolor{gray!12}
    Bilinear & MLP & 1.4 $\pm$ 1.2 & 16.9 $\pm$ 1.3 \\
    \bottomrule
  \end{tabular}
\end{table}

\subsection{Exp.~2.2 Kepler full ablation}
\label{app:exp22_full}

Full version of \Cref{tab:exp22_main} (\Cref{sec:exp22}), reproduced as \Cref{tab:exp22}, with the complete variable-subset $\times$ feature-map $\times$ model ablation. The correct $(\log a, \log M_{\mathrm{star}})$ representation paired with OLS or SINDy is the OOD winner; MLPs and TabPFN collapse on every variable subset, including the correct one. The four-variable extension $(\log a, \log M_{\mathrm{star}}, \log R_{\mathrm{star}}, \log T_\mathrm{eff})$ matches the two-variable Kepler result, consistent with the irrelevance of $R_{\mathrm{star}}, T_\mathrm{eff}$ to the third-law DGP.

\begin{table}[ht]
  \centering
  \small
  \caption{Exp.~2.2 (Kepler): third law on NASA exoplanets. $n_\mathrm{train}=1{,}881$ ($a<0.5$~AU), $n_\mathrm{OOD}=481$ ($\leq20$~AU). OOD rel.\ error (\%), mean $\pm$ std over 3 seeds; best OOD per variable group in bold. Correct OLS recovers $b_a=1.493$, $b_{M_{\mathrm{star}}}=-0.493$ (theory $1.500, -0.500$); SINDy on the correct map gives $k=2$ active terms. \colorbox{gray!12}{\strut Shaded rows} use the correct feature map.}
  \label{tab:exp22}
  \begin{tabular}{lllrr}
    \toprule
    \textbf{Variables} & \textbf{Feature map} & \textbf{Model} & \textbf{ID (\%)} & \textbf{OOD (\%)} \\
    \midrule
    $a$ & $a$ & OLS & 28.3 $\pm$ 0.0 & 63.5 $\pm$ 0.0 \\
     &  & SINDy & 18.8 $\pm$ 0.0 & 158.1 $\pm$ 0.0 \\
     &  & MLP & 18.2 $\pm$ 1.0 & 85.8 $\pm$ 0.4 \\
     &  & TabPFN & 16.1 $\pm$ 0.0 & 69.5 $\pm$ 0.0 \\
    \cmidrule(lr){2-5}
     & $\log a$ & OLS & 16.4 $\pm$ 0.0 & \textbf{21.2 $\pm$ 0.0} \\
     &  & SINDy & 16.4 $\pm$ 0.0 & 107.9 $\pm$ 0.0 \\
     &  & MLP & 15.9 $\pm$ 0.0 & 75.0 $\pm$ 0.8 \\
     &  & TabPFN & 15.6 $\pm$ 0.0 & 48.9 $\pm$ 0.0 \\
    \midrule
    $a,\,M_{\mathrm{star}}$ & $a,\,M_{\mathrm{star}}$ & OLS & 26.0 $\pm$ 0.0 & 63.1 $\pm$ 0.0 \\
     &  & SINDy & 13.5 $\pm$ 0.0 & 165.2 $\pm$ 0.0 \\
     &  & MLP & 6.9 $\pm$ 0.2 & 79.0 $\pm$ 2.1 \\
     &  & TabPFN & 4.4 $\pm$ 0.0 & 70.3 $\pm$ 0.0 \\
    \cmidrule(lr){2-5}
     & $\log a,\,M_{\mathrm{star}}$ & OLS & 7.6 $\pm$ 0.0 & 14.1 $\pm$ 0.0 \\
     &  & SINDy & 5.3 $\pm$ 0.0 & $\gg 9{,}999$ \\
     &  & MLP & 4.7 $\pm$ 0.2 & 58.2 $\pm$ 2.1 \\
     &  & TabPFN & 4.4 $\pm$ 0.0 & 20.8 $\pm$ 0.0 \\
    \cmidrule(lr){2-5}
     & $a,\,\log M_{\mathrm{star}}$ & OLS & 83.1 $\pm$ 0.0 & $\gg 9{,}999$ \\
     &  & SINDy & 30.9 $\pm$ 0.0 & 99.9 $\pm$ 0.0 \\
     &  & MLP & 12.3 $\pm$ 5.5 & 86.1 $\pm$ 3.5 \\
     &  & TabPFN & 4.4 $\pm$ 0.0 & 7{,}188.4 $\pm$ 0.0 \\
    \cmidrule(lr){2-5}
    \rowcolor{gray!12}
     & $\log a,\,\log M_{\mathrm{star}}$ & OLS & 4.2 $\pm$ 0.0 & 3.8 $\pm$ 0.0 \\
    \rowcolor{gray!12}
     &  & SINDy & 3.9 $\pm$ 0.0 & \textbf{2.1 $\pm$ 0.0} \\
    \rowcolor{gray!12}
     &  & MLP & 4.4 $\pm$ 0.1 & 60.9 $\pm$ 2.5 \\
    \rowcolor{gray!12}
     &  & TabPFN & 4.3 $\pm$ 0.0 & 27.6 $\pm$ 0.0 \\
    \midrule
    $a,\,R_{\mathrm{star}}$ & $\log a,\,\log R_{\mathrm{star}}$ & OLS & 8.4 $\pm$ 0.0 & \textbf{16.3 $\pm$ 0.0} \\
     &  & SINDy & 8.3 $\pm$ 0.0 & 18.6 $\pm$ 0.0 \\
     &  & MLP & 6.4 $\pm$ 0.2 & 56.1 $\pm$ 3.1 \\
     &  & TabPFN & 5.6 $\pm$ 0.0 & 40.5 $\pm$ 0.0 \\
    \midrule
    $a,\,T_\mathrm{eff}$ & $\log a,\,\log T_\mathrm{eff}$ & OLS & 9.2 $\pm$ 0.0 & 17.3 $\pm$ 0.0 \\
     &  & SINDy & 8.5 $\pm$ 0.0 & \textbf{16.1 $\pm$ 0.0} \\
     &  & MLP & 10.9 $\pm$ 2.1 & 66.1 $\pm$ 6.8 \\
     &  & TabPFN & 7.0 $\pm$ 0.0 & 26.0 $\pm$ 0.0 \\
    \midrule
    $a,\,M_{\mathrm{star}},\,R_{\mathrm{star}},\,T_\mathrm{eff}$ & $\log a,\,\log M_{\mathrm{star}},\,\log R_{\mathrm{star}},\,\log T_\mathrm{eff}$ & OLS & 4.3 $\pm$ 0.0 & 4.4 $\pm$ 0.0 \\
     &  & SINDy & 4.3 $\pm$ 0.0 & \textbf{3.5 $\pm$ 0.0} \\
     &  & MLP & 5.5 $\pm$ 0.8 & 56.7 $\pm$ 2.6 \\
     &  & TabPFN & 4.0 $\pm$ 0.0 & 30.9 $\pm$ 0.0 \\
    \bottomrule
  \end{tabular}
\end{table}

\subsection{Full $d$-torus tables (\Cref{sec:scope})}
\label{app:torus}

Per-cell results for all four torus benchmarks of \Cref{fig:torus_scope}: representation alignment under fixed budget (condition (a), \Cref{tab:exp41}), capacity at full interaction order (condition (b), \Cref{tab:exp43}), coverage at full interaction order (condition (c), \Cref{tab:exp42}), and noise robustness across $d \in \{4, 6, 8\}$ (condition (d), \Cref{tab:exp44}).

\begin{table}[ht]
  \centering
  \small
  \caption{Torus benchmark, condition (a). Fixed budget $N=4096$ on the $d$-torus with target $g_\mathrm{int}$. The exact map $\varphi(x)=(\sin x_j, \cos x_j)$ dominates raw and learned-Fourier baselines. OOD RMSE, mean$\pm$std over 100 seeds; best per row in \textbf{bold}.}
  \label{tab:exp41}
  \begin{tabular}{lrrr}
    \toprule
    $d$ & raw MLP & learned-Fourier MLP & exact-$\varphi$ MLP \\
    \midrule

    1 & 0.949 $\pm$ 0.038 & 0.736 $\pm$ 0.492 & \textbf{0.004 $\pm$ 0.010} \\
    2 & 0.560 $\pm$ 0.070 & 0.509 $\pm$ 0.253 & \textbf{0.003 $\pm$ 0.003} \\
    4 & 0.360 $\pm$ 0.039 & 0.428 $\pm$ 0.110 & \textbf{0.009 $\pm$ 0.005} \\
    6 & 0.247 $\pm$ 0.021 & 0.370 $\pm$ 0.101 & \textbf{0.019 $\pm$ 0.003} \\
    8 & 0.289 $\pm$ 0.042 & 0.274 $\pm$ 0.055 & \textbf{0.028 $\pm$ 0.002} \\
    12 & 0.428 $\pm$ 0.058 & 0.249 $\pm$ 0.068 & \textbf{0.008 $\pm$ 0.001} \\
    \bottomrule
  \end{tabular}
\end{table}

\begin{table}[ht]
  \centering
  \small
  \caption{Torus benchmark, condition (b). Exact $\varphi$ held fixed; target $g_{d,q}$ has interaction order $q$. OOD $R^2$ collapses as $q$ approaches $d$, even when ID $R^2\!\approx\!1$, exposing a capacity bottleneck at finite coverage. Best OOD $R^2 \geq 0.95$ per row group in \textbf{bold}; $N=M^d$. Mean$\pm$std over 100 seeds.}
  \label{tab:exp43}
  \begin{tabular}{rrrrr}
    \toprule
    $d$ & $M$ & $q$ & ID $R^2$ & OOD $R^2$ \\
    \midrule

    4 & 4 & 1 & 1.000 $\pm$ 0.000 & \textbf{1.000 $\pm$ 0.000} \\
    4 & 4 & 2 & 0.991 $\pm$ 0.004 & \textbf{0.992 $\pm$ 0.004} \\
    4 & 4 & 3 & 0.988 $\pm$ 0.004 & \textbf{0.988 $\pm$ 0.004} \\
    4 & 4 & 4 & 0.666 $\pm$ 0.158 & 0.666 $\pm$ 0.157 \\
    4 & 5 & 1 & 1.000 $\pm$ 0.000 & \textbf{1.000 $\pm$ 0.000} \\
    4 & 5 & 2 & 0.998 $\pm$ 0.002 & \textbf{0.998 $\pm$ 0.002} \\
    4 & 5 & 3 & 0.997 $\pm$ 0.003 & \textbf{0.997 $\pm$ 0.003} \\
    4 & 5 & 4 & 0.988 $\pm$ 0.004 & \textbf{0.988 $\pm$ 0.004} \\
    4 & 6 & 1 & 1.000 $\pm$ 0.000 & \textbf{1.000 $\pm$ 0.000} \\
    4 & 6 & 2 & 0.998 $\pm$ 0.002 & \textbf{0.998 $\pm$ 0.002} \\
    4 & 6 & 3 & 0.998 $\pm$ 0.002 & \textbf{0.998 $\pm$ 0.003} \\
    4 & 6 & 4 & 0.997 $\pm$ 0.002 & \textbf{0.997 $\pm$ 0.002} \\
    \midrule
    6 & 4 & 1 & 1.000 $\pm$ 0.000 & \textbf{1.000 $\pm$ 0.000} \\
    6 & 4 & 2 & 0.997 $\pm$ 0.007 & \textbf{0.997 $\pm$ 0.007} \\
    6 & 4 & 3 & 0.994 $\pm$ 0.005 & \textbf{0.994 $\pm$ 0.005} \\
    6 & 4 & 4 & 0.982 $\pm$ 0.005 & \textbf{0.982 $\pm$ 0.005} \\
    6 & 4 & 5 & 0.985 $\pm$ 0.003 & \textbf{0.985 $\pm$ 0.003} \\
    6 & 4 & 6 & 0.699 $\pm$ 0.184 & 0.696 $\pm$ 0.194 \\
    6 & 5 & 1 & 1.000 $\pm$ 0.000 & \textbf{1.000 $\pm$ 0.000} \\
    6 & 5 & 2 & 0.998 $\pm$ 0.004 & \textbf{0.998 $\pm$ 0.004} \\
    6 & 5 & 3 & 0.995 $\pm$ 0.004 & \textbf{0.995 $\pm$ 0.004} \\
    6 & 5 & 4 & 0.993 $\pm$ 0.003 & \textbf{0.994 $\pm$ 0.003} \\
    6 & 5 & 5 & 0.996 $\pm$ 0.001 & \textbf{0.996 $\pm$ 0.001} \\
    6 & 5 & 6 & 0.991 $\pm$ 0.002 & \textbf{0.991 $\pm$ 0.002} \\
    6 & 6 & 1 & 1.000 $\pm$ 0.000 & \textbf{1.000 $\pm$ 0.000} \\
    6 & 6 & 2 & 0.998 $\pm$ 0.004 & \textbf{0.998 $\pm$ 0.004} \\
    6 & 6 & 3 & 0.996 $\pm$ 0.004 & \textbf{0.996 $\pm$ 0.004} \\
    6 & 6 & 4 & 0.994 $\pm$ 0.004 & \textbf{0.994 $\pm$ 0.004} \\
    6 & 6 & 5 & 0.997 $\pm$ 0.002 & \textbf{0.996 $\pm$ 0.002} \\
    6 & 6 & 6 & 0.996 $\pm$ 0.002 & \textbf{0.996 $\pm$ 0.002} \\
    \midrule
    8 & 4 & 1 & 1.000 $\pm$ 0.001 & \textbf{1.000 $\pm$ 0.001} \\
    8 & 4 & 2 & 0.988 $\pm$ 0.098 & \textbf{0.988 $\pm$ 0.096} \\
    8 & 4 & 3 & 0.994 $\pm$ 0.006 & \textbf{0.993 $\pm$ 0.007} \\
    8 & 4 & 4 & 0.983 $\pm$ 0.033 & \textbf{0.984 $\pm$ 0.035} \\
    8 & 4 & 5 & 0.982 $\pm$ 0.011 & \textbf{0.982 $\pm$ 0.010} \\
    8 & 4 & 6 & 0.974 $\pm$ 0.008 & \textbf{0.974 $\pm$ 0.008} \\
    8 & 4 & 7 & 0.979 $\pm$ 0.005 & \textbf{0.979 $\pm$ 0.006} \\
    8 & 4 & 8 & 0.855 $\pm$ 0.079 & 0.856 $\pm$ 0.071 \\
    \bottomrule
  \end{tabular}
\end{table}

\begin{table}[ht]
  \centering
  \small
  \caption{Torus benchmark, condition (c). Exact $\varphi$ held fixed at full interaction order $q{=}d$; the only knob is per-coordinate coverage $M$ at $N{=}M^d$. OOD $R^2$ rises with $M$ once the sample budget covers each torus cell, locating the coverage threshold $M_{\mathrm{crit}} \approx 4$ (\Cref{sec:scope}). Cells with $N$ above budget are omitted. Best OOD $R^2 \geq 0.95$ per row group in \textbf{bold}; mean$\pm$std over 100 seeds.}
  \label{tab:exp42}
  \begin{tabular}{rrrrrr}
    \toprule
    $d$ & $M$ & $N$ & ID $R^2$ & OOD $R^2$ & OOD RMSE \\
    \midrule

    4 & 2 & 16 & -1.237 $\pm$ 0.734 & -1.251 $\pm$ 0.752 & 1.474 $\pm$ 0.235 \\
    4 & 3 & 81 & -0.789 $\pm$ 0.222 & -0.776 $\pm$ 0.204 & 1.328 $\pm$ 0.076 \\
    4 & 4 & 256 & 0.660 $\pm$ 0.161 & 0.661 $\pm$ 0.159 & 0.566 $\pm$ 0.134 \\
    4 & 5 & 625 & 0.988 $\pm$ 0.004 & \textbf{0.988 $\pm$ 0.004} & 0.110 $\pm$ 0.018 \\
    4 & 6 & 1{,}296 & 0.997 $\pm$ 0.002 & \textbf{0.997 $\pm$ 0.002} & 0.050 $\pm$ 0.014 \\
    \midrule
    6 & 2 & 64 & -0.648 $\pm$ 0.290 & -0.657 $\pm$ 0.304 & 1.279 $\pm$ 0.115 \\
    6 & 3 & 729 & -0.800 $\pm$ 0.155 & -0.793 $\pm$ 0.154 & 1.331 $\pm$ 0.058 \\
    6 & 4 & 4{,}096 & 0.696 $\pm$ 0.182 & 0.695 $\pm$ 0.172 & 0.531 $\pm$ 0.132 \\
    6 & 5 & 15{,}625 & 0.991 $\pm$ 0.002 & \textbf{0.991 $\pm$ 0.002} & 0.092 $\pm$ 0.012 \\
    6 & 6 & 46{,}656 & 0.996 $\pm$ 0.002 & \textbf{0.996 $\pm$ 0.002} & 0.061 $\pm$ 0.010 \\
    \midrule
    8 & 2 & 256 & -0.751 $\pm$ 0.238 & -0.765 $\pm$ 0.227 & 1.315 $\pm$ 0.092 \\
    8 & 3 & 6{,}561 & -0.672 $\pm$ 0.115 & -0.670 $\pm$ 0.126 & 1.291 $\pm$ 0.059 \\
    8 & 4 & 65{,}536 & 0.860 $\pm$ 0.051 & 0.863 $\pm$ 0.050 & 0.364 $\pm$ 0.062 \\
    \bottomrule
  \end{tabular}
\end{table}

\begin{table}[ht]
  \centering
  \small
  \caption{Exp.~4.4: Noise robustness (d) of the DGP commitment on the $d$-torus, fixed $N{=}4{,}096$. Training labels carry Gaussian noise $\eta \sim \mathcal{N}(0, \sigma^2)$; ID and OOD targets remain clean. The canonical $\varphi(x){=}(\sin x_j, \cos x_j)_{j=1}^d$ tracks Bayes-only scaling across all $\sigma$ (\Cref{lem:canon}); raw and learned-Fourier baselines are bias-floored in the vacuous regime of \Cref{prop:erm} ($\sigma \lesssim \varepsilon\sqrt{n}/2 \approx 0.9$) and rise with $\sigma$ in the informative regime, but the canonical commitment retains the OOD advantage throughout. OOD RMSE, mean$\pm$std over 100 seeds; best per row in \textbf{bold}.}
  \label{tab:exp44}
  \begin{tabular}{lrrr}
    \toprule
    $\sigma$ & raw MLP & learned-Fourier MLP & exact-$\varphi$ MLP \\
    \midrule

    \multicolumn{4}{l}{\emph{$d = 4$}} \\
    \midrule
    0 & 0.361 $\pm$ 0.038 & 0.434 $\pm$ 0.118 & \textbf{0.009 $\pm$ 0.003} \\
    0.01 & 0.365 $\pm$ 0.049 & 0.442 $\pm$ 0.131 & \textbf{0.010 $\pm$ 0.003} \\
    0.1 & 0.354 $\pm$ 0.048 & 0.476 $\pm$ 0.121 & \textbf{0.073 $\pm$ 0.015} \\
    0.3 & 0.575 $\pm$ 0.120 & 0.784 $\pm$ 0.178 & \textbf{0.407 $\pm$ 0.017} \\
    1 & 1.829 $\pm$ 0.303 & 2.090 $\pm$ 0.427 & \textbf{1.365 $\pm$ 0.046} \\
    3 & 5.708 $\pm$ 0.863 & 5.758 $\pm$ 1.107 & \textbf{3.956 $\pm$ 0.120} \\
    10 & 19.122 $\pm$ 2.801 & 18.247 $\pm$ 3.406 & \textbf{12.786 $\pm$ 0.379} \\
    30 & 41.921 $\pm$ 11.878 & 47.601 $\pm$ 11.138 & \textbf{33.017 $\pm$ 3.739} \\
    \midrule
    \multicolumn{4}{l}{\emph{$d = 6$}} \\
    \midrule
    0 & 0.245 $\pm$ 0.025 & 0.358 $\pm$ 0.091 & \textbf{0.019 $\pm$ 0.003} \\
    0.01 & 0.246 $\pm$ 0.021 & 0.374 $\pm$ 0.092 & \textbf{0.021 $\pm$ 0.002} \\
    0.1 & 0.303 $\pm$ 0.041 & 0.386 $\pm$ 0.078 & \textbf{0.132 $\pm$ 0.004} \\
    0.3 & 0.635 $\pm$ 0.103 & 0.584 $\pm$ 0.083 & \textbf{0.371 $\pm$ 0.009} \\
    1 & 1.710 $\pm$ 0.205 & 1.494 $\pm$ 0.161 & \textbf{1.199 $\pm$ 0.030} \\
    3 & 4.853 $\pm$ 0.529 & 4.349 $\pm$ 0.439 & \textbf{3.483 $\pm$ 0.086} \\
    10 & 16.200 $\pm$ 1.728 & 14.242 $\pm$ 1.380 & \textbf{11.452 $\pm$ 0.268} \\
    30 & 35.145 $\pm$ 6.632 & 39.223 $\pm$ 5.564 & \textbf{29.168 $\pm$ 3.390} \\
    \midrule
    \multicolumn{4}{l}{\emph{$d = 8$}} \\
    \midrule
    0 & 0.297 $\pm$ 0.041 & 0.285 $\pm$ 0.059 & \textbf{0.028 $\pm$ 0.002} \\
    0.01 & 0.297 $\pm$ 0.042 & 0.285 $\pm$ 0.057 & \textbf{0.030 $\pm$ 0.003} \\
    0.1 & 0.391 $\pm$ 0.055 & 0.317 $\pm$ 0.049 & \textbf{0.120 $\pm$ 0.003} \\
    0.3 & 0.961 $\pm$ 0.159 & 0.506 $\pm$ 0.063 & \textbf{0.335 $\pm$ 0.009} \\
    1 & 1.926 $\pm$ 0.210 & 1.372 $\pm$ 0.137 & \textbf{1.103 $\pm$ 0.028} \\
    3 & 5.109 $\pm$ 0.463 & 3.981 $\pm$ 0.331 & \textbf{3.276 $\pm$ 0.079} \\
    10 & 15.555 $\pm$ 1.486 & 12.990 $\pm$ 1.110 & \textbf{10.892 $\pm$ 0.249} \\
    30 & 37.122 $\pm$ 7.173 & 36.203 $\pm$ 5.387 & \textbf{26.384 $\pm$ 3.799} \\
    \bottomrule
  \end{tabular}
\end{table}

\FloatBarrier
\section{Architecture-Invariance Full Grid and Robustness Sweeps}
\label{app:exp33_sweeps}

\begin{figure}[ht]
  \centering
  \includegraphics[width=\linewidth]{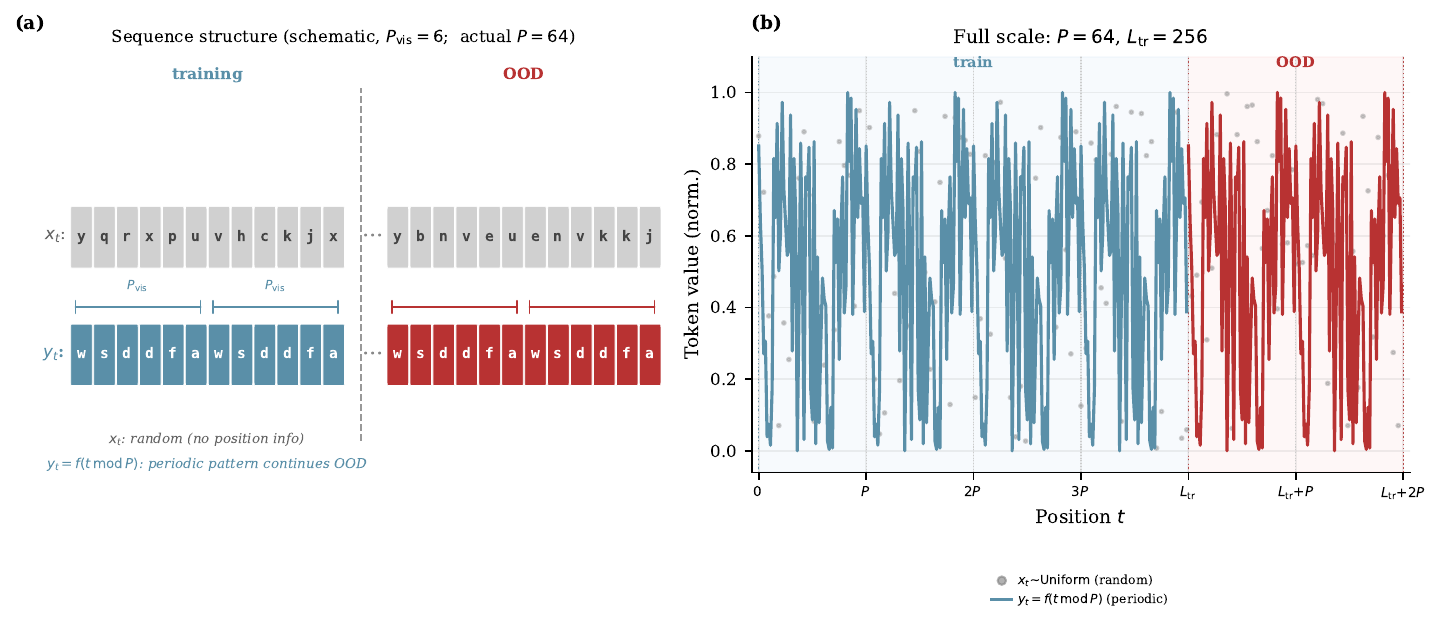}\\[4pt]
  \includegraphics[width=\linewidth]{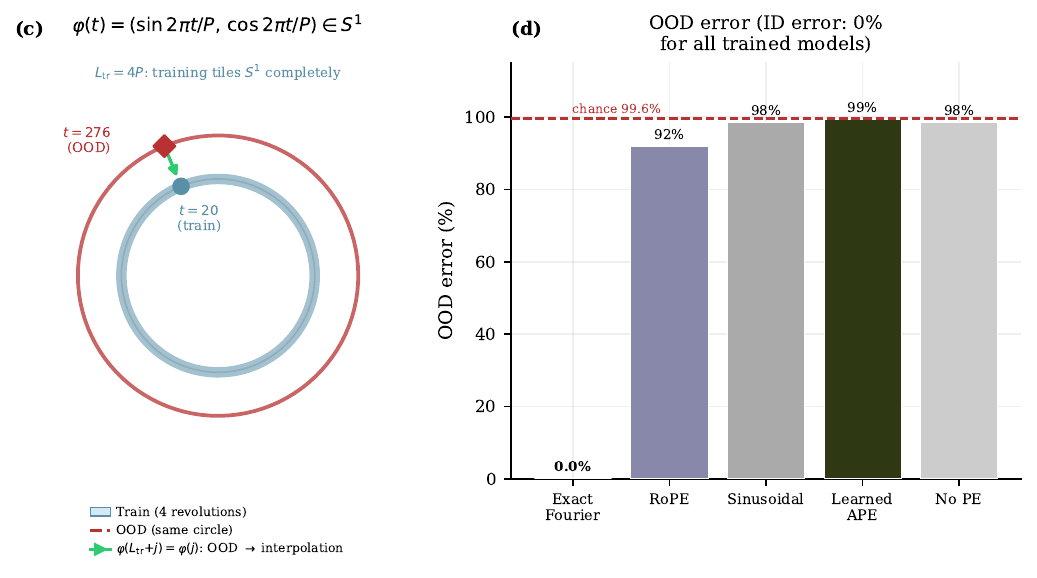}
  \caption{Architecture-invariance concept (\Cref{sec:exp33}).
    \emph{(a)} schematic ($P_{\mathrm{vis}}{=}6$): random input $x_t$, periodic target $y_t=f(t\bmod P)$; OOD continues the same pattern past $L_{\mathrm{train}}$.
    \emph{(b)} full scale ($P{=}64$).
    \emph{(c)} $\mathbb{S}^1$ geometry: exact-Fourier PE $\varphi(t)=(\sin 2\pi t/P,\cos 2\pi t/P)$ with $L_{\mathrm{train}}{=}4P$ tiles $\mathbb{S}^1$, so OOD reduces to interpolation (\Cref{lem:sin}).
    \emph{(d)} only exact-Fourier achieves near-zero OOD across Transformer, Mamba, and S4D; chance $\approx 99.61\%$. RoPE \citep{su2021roformer}; Sinusoidal \citep{vaswani2017attention}.\label{fig:exp33_concept}}
\end{figure}

Full $(\text{PE} \times P \times \text{architecture})$ grid for the periodic positional-encoding task of \Cref{sec:exp33}.
\textbf{Exact-Fourier PE achieves $\le 0.2\%$ OOD error in every cell} (\Cref{fig:exp33_concept}; $0.0\%$ in 11 of 12 (model, $P$) cells, $0.2\%$ on Transformer at $P{=}512$); all other PEs sit between ${\sim}5.5\%$ (a single high-variance learned-Fourier cell on S4D) and $99\%$ (chance baseline $99.61\%$).
The result is uniform across architectures: the OOD guarantee is a property of the feature map, not the backbone.

\begin{table}[ht]
  \centering
  \small
  \caption{Exp.~3.3 Arch-Inv: ID/OOD error (\%) on the periodic annotation task across Transformer, Mamba, S4D and $P \in \{64, 128, 256, 512\}$ ($L_\mathrm{train}{=}4P$, $L_\mathrm{OOD}{=}2P$; chance ${\approx}99.61\%$). \colorbox{gray!12}{\strut Shaded rows} use the correct \textbf{Exact Fourier} PE, which yields $\le 0.2\%$ OOD across all three backbones (the guarantee belongs to the feature map). Mean over 3 seeds; 264 runs total.}
  \label{tab:exp33_main}
  \resizebox{\linewidth}{!}{%
  \begin{tabular}{lrrrrrrrr}
    \toprule
    \textbf{PE} & \multicolumn{2}{c}{$P{=}64$} & \multicolumn{2}{c}{$P{=}128$} & \multicolumn{2}{c}{$P{=}256$} & \multicolumn{2}{c}{$P{=}512$} \\
    \cmidrule(lr){2-3} \cmidrule(lr){4-5} \cmidrule(lr){6-7} \cmidrule(lr){8-9}
     & ID & OOD & ID & OOD & ID & OOD & ID & OOD \\
    \midrule
    \multicolumn{9}{l}{\emph{Transformer}} \\
    \rowcolor{gray!12}
    \textbf{Exact Fourier} & 0.0 $\pm$ 0.0 & \textbf{0.0 $\pm$ 0.0} & 0.0 $\pm$ 0.0 & \textbf{0.0 $\pm$ 0.0} & 0.0 $\pm$ 0.0 & \textbf{0.0 $\pm$ 0.0} & 0.6 $\pm$ 0.5 & \textbf{0.2 $\pm$ 0.3} \\
    Learned Fourier ($k{=}1$) & 5.7 $\pm$ 7.4 & 66.4 $\pm$ 45.8 & 20.9 $\pm$ 15.2 & 78.2 $\pm$ 20.3 & 30.3 $\pm$ 24.3 & 84.4 $\pm$ 13.1 & 49.3 $\pm$ 28.7 & 90.3 $\pm$ 6.3 \\
    Learned Fourier ($k{=}4$) & 0.0 $\pm$ 0.0 & 66.7 $\pm$ 47.1 & 0.0 $\pm$ 0.0 & 32.9 $\pm$ 46.6 & 0.0 $\pm$ 0.0 & 66.3 $\pm$ 43.1 & 0.0 $\pm$ 0.0 & 65.0 $\pm$ 46.0 \\
    Learnable Sinusoidal & 0.0 $\pm$ 0.0 & 93.0 $\pm$ 3.9 & 0.0 $\pm$ 0.0 & 83.0 $\pm$ 11.6 & 0.0 $\pm$ 0.0 & 78.7 $\pm$ 5.1 & 0.0 $\pm$ 0.0 & 59.7 $\pm$ 6.6 \\
    Sinusoidal & 0.0 $\pm$ 0.0 & 98.4 $\pm$ 0.6 & 0.0 $\pm$ 0.0 & 99.0 $\pm$ 0.7 & 0.0 $\pm$ 0.0 & 99.1 $\pm$ 0.4 & 0.0 $\pm$ 0.0 & 99.3 $\pm$ 0.1 \\
    RoPE & 0.0 $\pm$ 0.0 & 90.3 $\pm$ 1.6 & 0.0 $\pm$ 0.0 & 94.8 $\pm$ 0.4 & 0.0 $\pm$ 0.0 & 92.0 $\pm$ 4.6 & 0.0 $\pm$ 0.0 & 98.5 $\pm$ 0.6 \\
    Learned APE & 0.0 $\pm$ 0.0 & 97.7 $\pm$ 1.2 & 0.0 $\pm$ 0.0 & 98.4 $\pm$ 1.1 & 0.0 $\pm$ 0.0 & 99.4 $\pm$ 0.3 & 0.0 $\pm$ 0.0 & 99.2 $\pm$ 0.2 \\
    No PE & 80.3 $\pm$ 0.7 & 98.4 $\pm$ 0.0 & 87.2 $\pm$ 0.2 & 99.2 $\pm$ 0.0 & 90.5 $\pm$ 0.2 & 99.2 $\pm$ 0.0 & 93.5 $\pm$ 0.4 & 99.5 $\pm$ 0.1 \\
    \midrule
    \multicolumn{9}{l}{\emph{Mamba}} \\
    \rowcolor{gray!12}
    \textbf{Exact Fourier} & 0.0 $\pm$ 0.0 & \textbf{0.0 $\pm$ 0.0} & 0.0 $\pm$ 0.0 & \textbf{0.0 $\pm$ 0.0} & 0.0 $\pm$ 0.0 & \textbf{0.0 $\pm$ 0.0} & 0.0 $\pm$ 0.0 & \textbf{0.0 $\pm$ 0.0} \\
    Learned Fourier ($k{=}1$) & 2.5 $\pm$ 3.6 & 66.5 $\pm$ 47.0 & 14.6 $\pm$ 20.2 & 66.3 $\pm$ 46.9 & 48.4 $\pm$ 34.0 & 66.9 $\pm$ 45.0 & 63.8 $\pm$ 15.1 & 83.0 $\pm$ 17.2 \\
    Learned Fourier ($k{=}4$) & 0.0 $\pm$ 0.0 & 66.8 $\pm$ 46.9 & 0.0 $\pm$ 0.0 & 66.5 $\pm$ 47.0 & 0.0 $\pm$ 0.0 & 66.3 $\pm$ 46.9 & 0.0 $\pm$ 0.0 & 99.6 $\pm$ 0.2 \\
    Learnable Sinusoidal & 0.0 $\pm$ 0.0 & 95.8 $\pm$ 2.9 & 0.0 $\pm$ 0.0 & 86.4 $\pm$ 2.7 & 0.0 $\pm$ 0.0 & 90.1 $\pm$ 3.3 & 0.0 $\pm$ 0.0 & 16.9 $\pm$ 3.4 \\
    Sinusoidal & 0.0 $\pm$ 0.0 & 99.0 $\pm$ 1.0 & 0.0 $\pm$ 0.0 & 98.4 $\pm$ 0.6 & 0.0 $\pm$ 0.0 & 98.3 $\pm$ 0.8 & 0.0 $\pm$ 0.0 & 99.0 $\pm$ 0.2 \\
    Learned APE & 0.0 $\pm$ 0.0 & 96.4 $\pm$ 1.1 & 0.0 $\pm$ 0.0 & 97.8 $\pm$ 0.7 & 0.0 $\pm$ 0.0 & 98.8 $\pm$ 0.3 & 0.0 $\pm$ 0.0 & 99.5 $\pm$ 0.1 \\
    No PE & 49.9 $\pm$ 4.1 & 97.9 $\pm$ 0.7 & 75.8 $\pm$ 2.9 & 99.1 $\pm$ 0.5 & 87.3 $\pm$ 2.1 & 98.8 $\pm$ 0.6 & 91.7 $\pm$ 0.8 & 98.9 $\pm$ 0.1 \\
    \midrule
    \multicolumn{9}{l}{\emph{S4D}} \\
    \rowcolor{gray!12}
    \textbf{Exact Fourier} & 0.0 $\pm$ 0.0 & \textbf{0.0 $\pm$ 0.0} & 0.0 $\pm$ 0.0 & \textbf{0.0 $\pm$ 0.0} & 0.0 $\pm$ 0.0 & \textbf{0.0 $\pm$ 0.0} & 0.0 $\pm$ 0.0 & \textbf{0.0 $\pm$ 0.0} \\
    Learned Fourier ($k{=}1$) & 16.4 $\pm$ 23.2 & 55.3 $\pm$ 28.6 & 0.0 $\pm$ 0.0 & 5.5 $\pm$ 7.8 & 9.6 $\pm$ 4.1 & 22.7 $\pm$ 10.2 & 33.5 $\pm$ 32.2 & 43.3 $\pm$ 32.2 \\
    Learned Fourier ($k{=}4$) & 0.0 $\pm$ 0.0 & 59.6 $\pm$ 42.3 & 0.0 $\pm$ 0.0 & 33.2 $\pm$ 47.0 & 0.0 $\pm$ 0.0 & 32.9 $\pm$ 46.5 & 0.0 $\pm$ 0.0 & 7.4 $\pm$ 10.5 \\
    Learnable Sinusoidal & 0.0 $\pm$ 0.0 & 89.2 $\pm$ 4.2 & 0.0 $\pm$ 0.0 & 95.7 $\pm$ 1.2 & 0.0 $\pm$ 0.0 & 96.7 $\pm$ 0.9 & 0.0 $\pm$ 0.0 & 83.1 $\pm$ 1.6 \\
    Sinusoidal & 0.0 $\pm$ 0.0 & 97.9 $\pm$ 0.3 & 0.0 $\pm$ 0.0 & 97.9 $\pm$ 0.7 & 0.0 $\pm$ 0.0 & 99.4 $\pm$ 0.2 & 0.0 $\pm$ 0.0 & 99.2 $\pm$ 0.2 \\
    Learned APE & 0.0 $\pm$ 0.0 & 98.4 $\pm$ 1.1 & 0.0 $\pm$ 0.0 & 99.0 $\pm$ 0.7 & 0.0 $\pm$ 0.0 & 98.7 $\pm$ 0.8 & 0.0 $\pm$ 0.0 & 99.4 $\pm$ 0.0 \\
    No PE & 1.9 $\pm$ 2.7 & 82.7 $\pm$ 2.0 & 53.1 $\pm$ 10.4 & 59.6 $\pm$ 5.4 & 19.8 $\pm$ 25.6 & 22.1 $\pm$ 28.2 & 79.4 $\pm$ 22.2 & 81.5 $\pm$ 21.8 \\
    \midrule
    Chance & --- & 99.61 & --- & 99.61 & --- & 99.61 & --- & 99.61 \\
    \bottomrule
  \end{tabular}%
  }
\end{table}

The headline is robust under independent variations of period at fixed context length ($L_{\mathrm{train}}=2048$, \Cref{app:exp33_sweeps_a}), OOD horizon ($L_{\mathrm{OOD}} \in \{256, 1024, 4096\}$ at $P{=}128$, \Cref{app:exp33_sweeps_b}), SSM state dimension ($d_{\mathrm{state}} \in \{16, 64, 256\}$ at $P{=}512$, \Cref{app:exp33_sweeps_c}), and learned-PE behaviour across $P$ (\Cref{app:exp33_sweeps_l}); per-cell sweep tables follow.

\subsection{Period sweep at fixed $L_{\mathrm{train}}$.}
\label{app:exp33_sweeps_a}
\Cref{tab:exp33_sweepA} reports ID and OOD error when $L_{\mathrm{train}}$ is held fixed at $2048$ and the period $P$ varies; the headline contrast is preserved when the train length is decoupled from the $4P$ baseline.
\begin{table}[ht]
  \centering
  \small
  \caption{Sweep A: $L_\mathrm{train}{=}2048$ held fixed, $P$ varied; decouples the result from the L/P ratio of the headline table.}
  \label{tab:exp33_sweepA}
  \resizebox{\textwidth}{!}{%
  \begin{tabular}{llrrrrrrrr}
    \toprule
    \textbf{Model} & \textbf{PE} & \multicolumn{2}{c}{$P{=}32$} & \multicolumn{2}{c}{$P{=}64$} & \multicolumn{2}{c}{$P{=}128$} & \multicolumn{2}{c}{$P{=}256$} \\
    \cmidrule(lr){3-4} \cmidrule(lr){5-6} \cmidrule(lr){7-8} \cmidrule(lr){9-10}
     & & ID & OOD & ID & OOD & ID & OOD & ID & OOD \\
    \midrule
    \rowcolor{gray!12}
    Transformer & \textbf{Exact Fourier} & 0.0 $\pm$ 0.0 & 0.0 $\pm$ 0.0 & 0.0 $\pm$ 0.0 & 0.0 $\pm$ 0.0 & 0.0 $\pm$ 0.0 & 0.0 $\pm$ 0.0 & 0.1 $\pm$ 0.1 & 0.0 $\pm$ 0.0 \\
    Transformer & Sinusoidal & 0.0 $\pm$ 0.0 & 90.4 $\pm$ 3.2 & 0.0 $\pm$ 0.0 & 90.1 $\pm$ 1.6 & 0.0 $\pm$ 0.0 & 96.6 $\pm$ 0.6 & 0.0 $\pm$ 0.0 & 98.6 $\pm$ 0.2 \\
    Transformer & RoPE & 0.0 $\pm$ 0.0 & 65.3 $\pm$ 8.1 & 0.0 $\pm$ 0.0 & 45.6 $\pm$ 9.4 & 0.0 $\pm$ 0.0 & 74.6 $\pm$ 5.9 & 0.0 $\pm$ 0.0 & 84.3 $\pm$ 5.9 \\
    Transformer & Learned APE & 0.0 $\pm$ 0.0 & 96.2 $\pm$ 0.6 & 0.0 $\pm$ 0.0 & 97.8 $\pm$ 0.5 & 0.0 $\pm$ 0.0 & 98.7 $\pm$ 0.6 & 0.0 $\pm$ 0.0 & 98.5 $\pm$ 0.3 \\
    Transformer & No PE & 90.8 $\pm$ 0.1 & 93.7 $\pm$ 0.0 & 93.6 $\pm$ 0.5 & 96.9 $\pm$ 0.0 & 93.8 $\pm$ 0.1 & 97.7 $\pm$ 0.0 & 94.0 $\pm$ 0.3 & 98.6 $\pm$ 0.4 \\
    \rowcolor{gray!12}
    Mamba & \textbf{Exact Fourier} & 0.0 $\pm$ 0.0 & 0.0 $\pm$ 0.0 & 0.0 $\pm$ 0.0 & 0.0 $\pm$ 0.0 & 0.0 $\pm$ 0.1 & 0.0 $\pm$ 0.1 & 0.1 $\pm$ 0.2 & 0.1 $\pm$ 0.2 \\
    Mamba & Sinusoidal & 0.0 $\pm$ 0.0 & 88.4 $\pm$ 1.3 & 0.0 $\pm$ 0.0 & 87.4 $\pm$ 1.1 & 0.0 $\pm$ 0.0 & 94.4 $\pm$ 1.0 & 0.0 $\pm$ 0.0 & 97.8 $\pm$ 0.3 \\
    Mamba & Learned APE & 0.0 $\pm$ 0.0 & 94.9 $\pm$ 1.3 & 0.0 $\pm$ 0.0 & 97.4 $\pm$ 1.2 & 0.0 $\pm$ 0.0 & 98.3 $\pm$ 0.6 & 0.0 $\pm$ 0.0 & 99.0 $\pm$ 0.1 \\
    Mamba & No PE & 92.8 $\pm$ 0.7 & 95.8 $\pm$ 0.3 & 92.5 $\pm$ 1.0 & 97.5 $\pm$ 0.4 & 92.0 $\pm$ 0.9 & 98.1 $\pm$ 0.4 & 90.1 $\pm$ 2.5 & 98.0 $\pm$ 0.0 \\
    \rowcolor{gray!12}
    S4D & \textbf{Exact Fourier} & 0.0 $\pm$ 0.0 & 0.0 $\pm$ 0.0 & 0.0 $\pm$ 0.0 & 0.0 $\pm$ 0.0 & 0.0 $\pm$ 0.0 & 0.0 $\pm$ 0.0 & 0.0 $\pm$ 0.0 & 0.0 $\pm$ 0.0 \\
    S4D & Sinusoidal & 0.0 $\pm$ 0.0 & 90.9 $\pm$ 0.8 & 0.0 $\pm$ 0.0 & 89.9 $\pm$ 1.9 & 0.0 $\pm$ 0.0 & 96.9 $\pm$ 0.7 & 0.0 $\pm$ 0.0 & 98.9 $\pm$ 0.5 \\
    S4D & Learned APE & 0.0 $\pm$ 0.0 & 96.2 $\pm$ 0.5 & 0.0 $\pm$ 0.0 & 98.3 $\pm$ 0.5 & 0.0 $\pm$ 0.0 & 99.0 $\pm$ 0.2 & 0.0 $\pm$ 0.0 & 99.3 $\pm$ 0.2 \\
    S4D & No PE & 19.2 $\pm$ 22.5 & 24.8 $\pm$ 21.0 & 57.0 $\pm$ 7.2 & 72.9 $\pm$ 13.7 & 67.2 $\pm$ 31.0 & 77.3 $\pm$ 27.0 & 89.0 $\pm$ 3.0 & 93.4 $\pm$ 2.5 \\
    \bottomrule
  \end{tabular}
  }
\end{table}

\subsection{OOD horizon sweep ($L_{\mathrm{OOD}}$).}
\label{app:exp33_sweeps_b}
\Cref{tab:exp33_sweepC} reports OOD error at increasing extrapolation horizons ($P{=}128$, $L_{\mathrm{OOD}}$ varied); exact-Fourier stays flat as predicted by \Cref{lem:sin}.
\begin{table}[ht]
  \centering
  \small
  \caption{Sweep C: extrapolation horizon ($P{=}128$, $L_\mathrm{OOD}$ varied). Exact Fourier stays flat (Lemma~\ref{lem:sin}); learned APE stays at chance.}
  \label{tab:exp33_sweepC}
  \begin{tabular}{llrrr}
    \toprule
    \textbf{Model} & \textbf{PE} & \multicolumn{1}{c}{$L_\mathrm{OOD}{=}256$} & \multicolumn{1}{c}{$L_\mathrm{OOD}{=}1024$} & \multicolumn{1}{c}{$L_\mathrm{OOD}{=}4096$} \\
    \cmidrule(lr){3-3} \cmidrule(lr){4-4} \cmidrule(lr){5-5}
     & & OOD & OOD & OOD \\
    \midrule
    \rowcolor{gray!12}
    Transformer & \textbf{Exact Fourier} & 0.0 $\pm$ 0.0 & 0.0 $\pm$ 0.0 & 0.0 $\pm$ 0.0 \\
    Transformer & Learned APE & 98.4 $\pm$ 1.1 & 98.6 $\pm$ 0.1 & 98.6 $\pm$ 0.1 \\
    \rowcolor{gray!12}
    Mamba & \textbf{Exact Fourier} & 0.0 $\pm$ 0.0 & 0.2 $\pm$ 0.4 & 0.0 $\pm$ 0.0 \\
    Mamba & Learned APE & 98.6 $\pm$ 0.3 & 98.4 $\pm$ 0.1 & 98.5 $\pm$ 0.3 \\
    \rowcolor{gray!12}
    S4D & \textbf{Exact Fourier} & 0.0 $\pm$ 0.0 & 0.0 $\pm$ 0.0 & 0.0 $\pm$ 0.0 \\
    S4D & Learned APE & 98.8 $\pm$ 0.8 & 98.6 $\pm$ 0.2 & 98.6 $\pm$ 0.0 \\
    \bottomrule
  \end{tabular}
\end{table}

\subsection{SSM state-dimension sweep ($d_{\mathrm{state}}$).}
\label{app:exp33_sweeps_c}
\Cref{tab:exp33_sweepD} sweeps $d_{\mathrm{state}}$ at $P{=}512$; the OOD result is independent of state size for the exact-Fourier rows.
\begin{table}[ht]
  \centering
  \small
  \caption{Sweep D: SSM capacity at $P{=}512$, varying $d_\mathrm{state}$ with all else fixed. Closes the ``Mamba was crippled by $d_\mathrm{state}{=}16$'' criticism.}
  \label{tab:exp33_sweepD}
  \resizebox{\textwidth}{!}{%
  \begin{tabular}{llrrrrrr}
    \toprule
    \textbf{Model} & \textbf{PE} & \multicolumn{2}{c}{$d_\mathrm{state}{=}16$} & \multicolumn{2}{c}{$d_\mathrm{state}{=}64$} & \multicolumn{2}{c}{$d_\mathrm{state}{=}256$} \\
    \cmidrule(lr){3-4} \cmidrule(lr){5-6} \cmidrule(lr){7-8}
     & & ID & OOD & ID & OOD & ID & OOD \\
    \midrule
    \rowcolor{gray!12}
    Mamba & \textbf{Exact Fourier} & 0.2 $\pm$ 0.3 & 0.2 $\pm$ 0.3 & 0.1 $\pm$ 0.1 & 0.1 $\pm$ 0.1 & 0.2 $\pm$ 0.1 & 0.1 $\pm$ 0.1 \\
    Mamba & Learned APE & 0.0 $\pm$ 0.0 & 99.5 $\pm$ 0.1 & 0.0 $\pm$ 0.0 & 99.3 $\pm$ 0.1 & 0.0 $\pm$ 0.0 & 99.2 $\pm$ 0.2 \\
    Mamba & No PE & 92.1 $\pm$ 0.8 & 98.9 $\pm$ 0.1 & 92.4 $\pm$ 0.9 & 99.2 $\pm$ 0.3 & 91.4 $\pm$ 0.5 & 99.0 $\pm$ 0.3 \\
    \rowcolor{gray!12}
    S4D & \textbf{Exact Fourier} & 0.0 $\pm$ 0.0 & 0.0 $\pm$ 0.0 & 0.1 $\pm$ 0.1 & 0.2 $\pm$ 0.1 & 1.5 $\pm$ 2.1 & 1.9 $\pm$ 2.7 \\
    S4D & Learned APE & 0.0 $\pm$ 0.0 & 99.4 $\pm$ 0.2 & 0.0 $\pm$ 0.0 & 99.4 $\pm$ 0.3 & 0.0 $\pm$ 0.0 & 99.2 $\pm$ 0.1 \\
    S4D & No PE & 0.0 $\pm$ 0.0 & 0.0 $\pm$ 0.0 & 79.4 $\pm$ 22.3 & 81.5 $\pm$ 21.8 & 98.0 $\pm$ 0.6 & 98.9 $\pm$ 0.0 \\
    \bottomrule
  \end{tabular}
  }
\end{table}

\subsection{Period-length sweep ($P$) for learned PE variants.}
\label{app:exp33_sweeps_l}
\Cref{tab:exp33_sweepL} reports the learned PE variants across $P\in\{64,128,256,512\}$ at the baseline configuration; SGD does not recover the correct frequency from a generic init, so the learned-Fourier and learnable-sinusoidal arms remain far from exact-Fourier across all four periods.
\begin{table}[ht]
  \centering
  \small
  \caption{Sweep L: learned-PE variants at baseline ($L_\mathrm{train}{=}4P$, $L_\mathrm{OOD}{=}2P$, $d_\mathrm{state}{=}64$); tests whether SGD recovers the correct frequency from random init and matches \textbf{exact Fourier}.}
  \label{tab:exp33_sweepL}
  \resizebox{\textwidth}{!}{%
  \begin{tabular}{llrrrrrrrr}
    \toprule
    \textbf{Model} & \textbf{PE} & \multicolumn{2}{c}{$P{=}64$} & \multicolumn{2}{c}{$P{=}128$} & \multicolumn{2}{c}{$P{=}256$} & \multicolumn{2}{c}{$P{=}512$} \\
    \cmidrule(lr){3-4} \cmidrule(lr){5-6} \cmidrule(lr){7-8} \cmidrule(lr){9-10}
     & & ID & OOD & ID & OOD & ID & OOD & ID & OOD \\
    \midrule
    Transformer & Learned Fourier ($k{=}1$) & 5.7 $\pm$ 7.4 & 66.4 $\pm$ 45.8 & 20.9 $\pm$ 15.2 & 78.2 $\pm$ 20.3 & 30.3 $\pm$ 24.3 & 84.4 $\pm$ 13.1 & 49.3 $\pm$ 28.7 & 90.3 $\pm$ 6.3 \\
    Transformer & Learned Fourier ($k{=}4$) & 0.0 $\pm$ 0.0 & 66.7 $\pm$ 47.1 & 0.0 $\pm$ 0.0 & 32.9 $\pm$ 46.6 & 0.0 $\pm$ 0.0 & 66.3 $\pm$ 43.1 & 0.0 $\pm$ 0.0 & 65.0 $\pm$ 46.0 \\
    Transformer & Learnable Sinusoidal & 0.0 $\pm$ 0.0 & 93.0 $\pm$ 3.9 & 0.0 $\pm$ 0.0 & 83.0 $\pm$ 11.6 & 0.0 $\pm$ 0.0 & 78.7 $\pm$ 5.1 & 0.0 $\pm$ 0.0 & 59.7 $\pm$ 6.6 \\
    Mamba & Learned Fourier ($k{=}1$) & 2.5 $\pm$ 3.6 & 66.5 $\pm$ 47.0 & 14.6 $\pm$ 20.2 & 66.3 $\pm$ 46.9 & 48.4 $\pm$ 34.0 & 66.9 $\pm$ 45.0 & 63.8 $\pm$ 15.1 & 83.0 $\pm$ 17.2 \\
    Mamba & Learned Fourier ($k{=}4$) & 0.0 $\pm$ 0.0 & 66.8 $\pm$ 46.9 & 0.0 $\pm$ 0.0 & 66.5 $\pm$ 47.0 & 0.0 $\pm$ 0.0 & 66.3 $\pm$ 46.9 & 0.0 $\pm$ 0.0 & 99.6 $\pm$ 0.2 \\
    Mamba & Learnable Sinusoidal & 0.0 $\pm$ 0.0 & 95.8 $\pm$ 2.9 & 0.0 $\pm$ 0.0 & 86.4 $\pm$ 2.7 & 0.0 $\pm$ 0.0 & 90.1 $\pm$ 3.3 & 0.0 $\pm$ 0.0 & 15.1 $\pm$ 4.3 \\
    S4D & Learned Fourier ($k{=}1$) & 16.4 $\pm$ 23.2 & 55.3 $\pm$ 28.6 & 0.0 $\pm$ 0.0 & 5.5 $\pm$ 7.8 & 9.6 $\pm$ 4.1 & 22.7 $\pm$ 10.2 & 33.5 $\pm$ 32.2 & 43.3 $\pm$ 32.2 \\
    S4D & Learned Fourier ($k{=}4$) & 0.0 $\pm$ 0.0 & 59.6 $\pm$ 42.3 & 0.0 $\pm$ 0.0 & 33.2 $\pm$ 47.0 & 0.0 $\pm$ 0.0 & 32.9 $\pm$ 46.5 & 0.0 $\pm$ 0.0 & 7.4 $\pm$ 10.5 \\
    S4D & Learnable Sinusoidal & 0.0 $\pm$ 0.0 & 89.2 $\pm$ 4.2 & 0.0 $\pm$ 0.0 & 95.7 $\pm$ 1.2 & 0.0 $\pm$ 0.0 & 96.7 $\pm$ 0.9 & 0.0 $\pm$ 0.0 & 83.1 $\pm$ 1.6 \\
    \bottomrule
  \end{tabular}
  }
\end{table}

\FloatBarrier
\section{Experimental Details}
\label{app:experiments}

\paragraph{Common configuration.}
Unless stated otherwise, results are mean $\pm$ std over $3$ random seeds.
The default MLP backbone is $5$ hidden layers with $256$ Tanh units, trained with Adam (lr~$10^{-3}$) on MSE; per-experiment epoch counts and any deviations are listed below.
OLS uses the closed-form normal equations with no regularisation.
SINDy uses \texttt{pysindy} v$2.1.0$ with STLSQ; per-experiment polynomial / Fourier libraries and sparsity thresholds are stated with each experiment.
The Tier-1 toy battery, Tier-3 selection-protocol study, and Tier-4 torus benchmark draw inputs from synthetic distributions on $\mathbb{R}$, $\mathbb{R}^+$, and $S^1\!\times\!\cdots\!\times\!S^1$ respectively; no external dataset is used.

\paragraph{Exp.~1.1: $\sin(x)$ vs.\ Taylor-9 (\Cref{sec:exp11}).}
\textbf{Dataset:} $f(x)=\sin(x)$ on $W=[-\pi, \pi]$ vs.\ the degree-$9$ Taylor expansion as the observationally-equivalent confound; OOD $[\pi, 3\pi]$. $N_{\mathrm{train}}=N_{\mathrm{OOD}}=128$, no observation noise (clean grid).
\textbf{Methods:} raw / Fourier $\varphi(x)=(\sin x, \cos x)$ feature maps crossed with $\{$OLS, MLP, SINDy$\}$. Foundation-model arms: TabPFN (cloud zero-shot tabular) and TimesFM-1.0-200M via \texttt{forecast\_with\_covariates} (context length $=$ horizon $=128$).
\textbf{Optimisation:} MLPs trained for $3{,}000$ epochs. SINDy uses a single library across both feature maps (\texttt{ps.PolynomialLibrary(degree=9, include\_bias=True)}, STLSQ threshold~$10^{-6}$, \texttt{max\_iter}~$=50$), applied identically on the raw input $x$ and on the Fourier-feature input $(\sin x, \cos x)$, matching the polynomial-library convention used by Exp.~1.2, 2.1, 2.2 (Exp.~1.3 is the only place SINDy ranges over Fourier-family inputs).

\paragraph{Exp.~1.2: power-law vs.\ exponential (\Cref{sec:exp12}).}
\textbf{Dataset:} $P_1(x)=x^2$ and $P_2(x)=e^{\alpha(x-1)}$ with $\alpha = 2\ln 2 \approx 1.386$ (chosen so $P_1, P_2$ agree at $x=1, 2$); $W=[1, 2]$, OOD $[2, 10]$. $n_{\mathrm{train}}=150$, $n_{\mathrm{test}}=300$ (both ID and OOD), Gaussian observation noise $\sigma=0.2$ on $y$.
\textbf{Methods:} three feature maps (\texttt{log-log}, \texttt{log-}$y$, \texttt{raw}) crossed with three model classes (OLS, MLP, SINDy), plus TabPFN and TimesFM in three input-format variants each. TimesFM uses a $128$-point context on $W$ and forecasts the full OOD window at the matching step ($\approx 1{,}016$ points).
\textbf{Optimisation:} MLPs trained for $2{,}000$ epochs. SINDy polynomial library degree~$1$, threshold~$0.05$ (the canonically-correct DGP is affine in transformed coordinates).

\paragraph{SINDy Diagnostic: Full SINDy diagnostic Battery (\Cref{app:exp13}).}
\textbf{Dataset:} eight univariate DGPs ($\sin x$, $\sin 2x$, $x^2$, $x^3+x$, $G_1=\sin x \cos 2x$, $G_2=\tanh(5\sin x \cos 2x)$, $I_1=\exp(2\log x - \log^2 x)$, $I_2=\exp(1 - 2 e^{-x})$) crossed with a candidate set of feature maps in the Fourier and log-polynomial families, yielding $35$ (DGP, $\varphi$) pairs. $N_{\mathrm{train}}=200$ on the per-DGP training range, $N_{\mathrm{OOD}}=400$ on a contiguous OOD interval.
\textbf{Method:} for each pair, SINDy fits a sparse model with PolynomialLibrary at degree~$2$ (Fourier family) or degree~$1$ (log-polynomial family), threshold~$0.05$, \texttt{max\_iter}~$=50$. The diagnostic $\delta_{\mathrm{OOD}}$ is the squared-NMSE of the OOD derivative residual, with derivatives evaluated by finite differences (\texttt{np.gradient}).

\paragraph{Exp.~2.1: MAK vector field (\Cref{sec:exp21}).}
\textbf{Dataset:} synthetic 2-node mass-action ODE $\dot{x}_i = F - B x_i - R x_i x_j$ with $F=0.5, B=0.1, R=1.0$~\citep{vasiliauskaite2024generalization}; training samples drawn uniformly from $(x_1, x_2)\in[0.4, 1.2]^2$, OOD from $[0, 2.0]^2 \setminus [0.4, 1.2]^2$.
\textbf{Method:} OLS / SINDy on the bilinear feature map $\varphi(x)=[1, x_i, x_i x_j]$ vs.\ raw / wrong baselines.

\paragraph{Exp.~2.2: Kepler (\Cref{sec:exp22}).}
\textbf{Dataset:} NASA Exoplanet Archive snapshot mirrored as the Hugging Face dataset \texttt{juliensimon/nasa-exoplanets} ($6{,}158$ confirmed exoplanets); we keep rows where $(a, T, M_{\mathrm{star}}, R_{\mathrm{star}}, T_{\mathrm{eff}})$ are all positive-valued and split by semi-major axis: $n_{\mathrm{train}}=1{,}881$ at $a<0.5$~AU, $n_{\mathrm{OOD}}=481$ at $0.5\le a<20$~AU (total $n=2{,}362$ used in the headline two-variable Kepler experiment).
\textbf{Method:} no ODE integration is involved; the data are catalogued $(T, a, M_{\mathrm{star}})$ tuples, and feature-conditioned OLS / SINDy fit the candidate coordinate transforms (correct: $(\log a, \log M_{\mathrm{star}}) \to \log T$; wrong-direction baselines as in \Cref{tab:exp22_main}).

\paragraph{Exp.~2.3: Cross-species CDS (\Cref{sec:exp23}).}
\textbf{Datasets:}
\textit{Train}: \textit{Saccharomyces cerevisiae} S288C reference genome R64-1-1 (Ensembl release-112), chromosomes I--IX; chr~XI held out as the matched-organism ID baseline.
\textit{Zero-shot OOD eval} ($5$ organisms, NCBI RefSeq, taxonomy in parentheses): \textit{Schizosaccharomyces pombe} ASM294v3 (\texttt{GCF\_000002945.2}, $36\%$ GC), \textit{Escherichia coli} K-12 MG1655 ASM584v2 (\texttt{GCF\_000005845.2}, $50\%$ GC), \textit{Bacillus subtilis} str.~168 ASM904v1 (\texttt{GCF\_000009045.1}, $44\%$ GC), \textit{Mycobacterium tuberculosis} H37Rv ASM19595v2 (\texttt{GCF\_000195955.2}, $65\%$ GC), \textit{Plasmodium falciparum} 3D7 (\texttt{GCF\_000002765.6}, $19\%$ GC); CDS coordinates from each organism's GFF3.
Reads are $300$\,bp, with $5{,}000$ training reads per class (CDS / non-CDS) and $1{,}000$ eval reads per class per organism, max token length $400$, with clean any-strand labels (reverse-complement of minus-strand positives).
\textbf{Architecture:} a random-init TinyTransformer encoder ($d_{\mathrm{model}}=128$, $2$ layers, $4$ heads, FFN $256$, vocabulary $\{A, C, G, T, N\}$ + special tokens; ${\sim}0.3$M, exactly $0.28$M trainable parameters) with one of two readouts on top: (i) the BERT-style \emph{phase-equivariant} learned-APE + [CLS] linear head and (ii) the \emph{phase-invariant} $|\mathrm{FFT}(h)|@P{=}3$ codon-amplitude head plus the universal-genetic-code feature stack (in-frame stop counts, per-frame 21-class amino-acid composition).
\textbf{Optimisation:} AdamW (lr~$10^{-3}$, weight decay $0.01$, batch $64$, $30$ epochs, gradient clip $1.0$).

\paragraph{Exp.~3.1: Near-boundary selection protocol (\Cref{sec:exp31}).}
\textbf{Dataset:} $f(x)=\sin(x)$ with Gaussian observation noise (default $\sigma=10^{-2}$). Training window $[-\pi, 0]$ ($n=200$), near-boundary validation window $[0, \pi]$ ($n=100$, \ie $n_{\mathrm{train}}/2$), OOD window $[\pi, 3\pi]$ ($n=200$).
\textbf{Method:} four OLS candidates compete: $\{$\texttt{fourier\_ols}, \texttt{poly7\_ols}, \texttt{poly9\_ols}, \texttt{raw\_ols}$\}$ (random-baseline accuracy $0.25$). The CV baseline is $5$-fold KFold scored on training data; near-boundary scores each candidate on the held-out $[0, \pi]$ window. Headline numbers average $100$ trials $\times$ $3$ seeds. The robustness sweep cycles $\sigma \in \{10^{-3}, 3{\cdot}10^{-3}, 10^{-2}, 3{\cdot}10^{-2}, 10^{-1}, 3{\cdot}10^{-1}, 1\}$ at $100$ trials each.

\paragraph{Exp.~3.2: $\delta_{\mathrm{OOD}}$ correlation diagnostic (\Cref{sec:exp32}).}
\textbf{Inputs:} the $35$ (DGP, $\varphi$) pairs of Exp.~1.3 (Fourier and log-polynomial families), augmented in the consolidated table by every SINDy-regime configuration of Exp.~1.1, Exp.~1.2, Exp.~2.1 (MAK), and Exp.~2.2 (Kepler), each documented with its own (DGP, $\varphi$) grid above.
\textbf{Method:} $\delta_{\mathrm{OOD}}$ is the squared-NMSE of the OOD derivative residual after STLSQ fitting (\texttt{pysindy} STLSQ, threshold~$0.05$, \texttt{max\_iter}~$=50$); derivatives via finite differences. The reported correlation is Spearman $\rho$ between $\delta_{\mathrm{OOD}}$ and a binary correctness label (canonically-correct vs.\ wrong) over the $35$-pair toy battery.

\paragraph{Exp.~3.3: Architecture-invariance (\Cref{sec:exp33}).}
\textbf{Dataset:} synthetic periodic-token task with vocabulary size $256$, target $y_t = f(t \bmod P)$ for a random per-condition target $f$ and i.i.d.\ random input tokens; $10{,}000$ training sequences of length $L_{\mathrm{train}}=4P$ and $2{,}000$ ID + $2{,}000$ OOD eval sequences of length $L_{\mathrm{train}}+L_{\mathrm{OOD}}$ with $L_{\mathrm{OOD}}=2P$, periods $P\in\{64, 128, 256, 512\}$.
\textbf{Architectures (three backbones, all $4$-layer, $n_{\mathrm{embd}}=256$):}
(i) \textit{Causal Transformer}: $4$ attention heads, learnable per-PE injection at the input;
(ii) \textit{Mamba} (selective SSM, \citealp{gu2023mamba}): default \texttt{mamba-ssm} block stack at $d_{\mathrm{model}}=256$;
(iii) \textit{S4D} \citep{gu2022s4d}: diagonal SSM with state dimension $d_{\mathrm{state}}=64$.
The eight PE arms (sinusoidal, RoPE, learned APE, learnable sinusoidal, learned-Fourier $k\in\{1, 4\}$, exact-Fourier $\varphi(t)=(\sin 2\pi t/P, \cos 2\pi t/P)$, no-PE) plug into a uniform input-injection slot.
\textbf{Optimisation:} AdamW (lr~$3{\times}10^{-4}$, weight decay $0.1$, batch $64$, $5{,}000$ steps with $200$-step warmup and cosine decay to $10\%$, early stopping at patience $10$ evaluations).
The $(\text{backbone} \times \text{PE} \times P \times \text{seed})$ grid (with RoPE only on Transformer) yields the $264$-run headline grid (= $156$ primary block + $108$ Sweep~L runs that populate the learned-Fourier and learnable-sinusoidal rows of \Cref{tab:exp33_main}); \Cref{app:exp33_sweeps} also reports the $L_{\mathrm{train}}$, $L_{\mathrm{OOD}}$, $d_{\mathrm{state}}$, and PE-init sweeps.

\paragraph{Exp.~4: Torus benchmark (\Cref{sec:scope}).}
\textbf{Common setup.} Periodic $d$-torus $\mathbb{T}^d = [0, 2\pi)^d$ with exact feature map $\varphi(x) = (\sin x_j, \cos x_j)_{j=1}^d$; OOD samples are obtained by an integer $2\pi$-block shift of every coordinate ($k_{\mathrm{block}}=1$), so OOD points map under $\varphi$ to the same $\mathbb{S}^1$ images as training points by construction. Backbone is the default MLP ($5$ Tanh layers, $256$ hidden units), Adam lr $10^{-3}$, MSE, $3{,}000$ epochs. Held-out test sets: $N_{\mathrm{ID}} = N_{\mathrm{OOD}} = 2{,}048$, $100$-seeds. $N_{\mathrm{train}}=4{,}096$, $d \in \{1, 2, 4, 6, 8, 12\}$. Target $g_{\mathrm{int}}(\theta) = (1-\lambda)\,\frac{1}{d}\sum_j \sin\theta_j + \lambda \prod_j \sin\theta_j$ with $\lambda = 0.35$. Baselines: raw MLP (input dim $d$), exact-$\varphi$ MLP (input dim $2d$), and a learned-Fourier MLP whose per-coordinate log-frequencies are initialised log-uniformly in $[\log 0.1, \log 10]$ (deliberately misspecified).
\textbf{critical coverage}: $d \in \{4, 6, 8\}$, coverage sweep $M \in \{2, 3, 4, 5, 6\}$ with $N_{\mathrm{train}} = \mathrm{round}(M^d)$ (up to ${\sim}2{\cdot}10^5$ samples). Target $g_{d, d}(\theta) = \binom{d}{d}^{-1/2} \sum_{|S|=d} \prod_{j \in S} \sqrt{2}\sin\theta_j$ (variance-normalised). Exact-$\varphi$ MLP only.
\textbf{residual interaction order}: $d \in \{4, 6, 8\}$, $M \in \{4, 5, 6\}$, interaction-order sweep $q \in \{1, \ldots, d\}$. Target $g_{d, q}(\theta) = \binom{d}{q}^{-1/2} \sum_{|S|=q} \prod_{j \in S} \sqrt{2}\sin\theta_j$. Exact-$\varphi$ MLP only; the feature map is held correct, so degradation as $q \!\to\! d$ at low $M$ is attributable to model-class capacity rather than representation.
\textbf{noise robustness across both regimes of \Cref{prop:erm}}: $d \in \{4, 6, 8\}$ at $N_{\mathrm{train}} = 4{,}096$ ($M_{\mathrm{eff}} = 8$ to $64$, both well above $M_{\mathrm{crit}} \approx 4$ so coverage is non-binding); target $g_{\mathrm{int}}$ as in the common setup. Training labels are corrupted with $\eta \sim \mathcal{N}(0, \sigma^2)$ per the \Cref{sec:exp31} convention; ID and OOD targets remain clean (oracle reference). $\sigma$ sweeps $\{0, 10^{-2}, 10^{-1}, 3{\cdot}10^{-1}, 1, 3, 10, 30\}$, populating both regimes of \Cref{prop:erm}: vacuous for $\sigma < \varepsilon\sqrt{n}/2 \approx 0.9$ (with empirically measured in-window agreement gap $\varepsilon \approx 0.03$) and informative for $\sigma$ above. All three baselines (raw, learned-Fourier, exact-$\varphi$) are evaluated, with $\varepsilon_{\mathrm{observed}}$ recorded per cell as the in-window gap between raw and exact-$\varphi$ predictions.

\FloatBarrier
\paragraph{Compute resources.}
\Cref{tab:resources} reports the per-experiment run count, median wall time per run, device, peak GPU memory, and total wall time, extracted directly from the saved JSON results.
The Tier-1 toy experiments (Exp.~1.1--1.3) and the cross-species CDS sweep (Exp.~2.3) run on a single laptop GPU (RTX 3000 Ada, $8$\,GB) within minutes; this includes the foundation-model conditions (TabPFN, TimesFM) of Exp.~1.1--1.2, the SINDy battery of Exp.~1.3 (CPU-only, NumPy/pysindy), and the $4$~PE/head variants $\times\,6$~organisms $\times\,3$~seeds CDS grid at peak ${\sim}0.45$\,GB. The near-boundary selection-protocol study (Exp.~3.1) is also CPU-only and runs in seconds. The remaining experiments, including the MAK vector field, the torus benchmarks, and the architecture-invariance primary block plus its sweeps, were submitted to a shared HPC cluster (RTX 4090, $24$\,GB), where the architecture-invariance experiment accounts for the bulk of the compute budget (peak ${\sim}22$\,GB on sweeps~A--D).

\begin{table}[ht]
  \centering
  \footnotesize
  \caption{Compute resources per experiment, split by venue: laptop (RTX 3000 Ada, 8\,GB) vs.\ HPC cluster (RTX 4090 24\,GB GPU or CPU node). Wall time, device, and peak GPU memory (max across runs of \texttt{torch.cuda.max\_memory\_allocated}) are read from each run's saved JSON; rows that never moved a tensor onto the GPU are reported as CPU even when CUDA was available. The Exp.~3.3 headline grid (\Cref{tab:exp33_main}) is the primary block plus Sweep~L (rows~1 and~2 of the Exp.~3.3 block below); their run counts sum to the $264$-run figure cited in the main text. Sweeps A--D are independent robustness checks (\Cref{app:exp33_sweeps}) and are not part of the headline.}
  \label{tab:resources}
  \resizebox{\textwidth}{!}{%
  \begin{tabular}{l r r l r r}
    \toprule
    \textbf{Experiment} & \textbf{\#runs} & \textbf{wall/run (med.)} & \textbf{Device} & \textbf{Peak GPU mem} & \textbf{Wall (total)} \\
    \midrule
    \multicolumn{6}{l}{\textit{Local laptop (RTX 3000 Ada Generation Laptop GPU)}} \\
    \cmidrule(l){1-6}
    Exp.~1.1: $\sin(x)$ vs.\ Taylor & 3 & 8 s & RTX 3000 Ada Laptop & 25\,MB & 23 s \\
    Exp.~1.2: power law vs.\ exponential & 3 & 36 s & RTX 3000 Ada Laptop & 28\,MB & 2.0 min \\
    Exp.~1.3: Correctness and SINDy battery & 3 & $<$1 s & CPU & -- & $<$1 s \\
    Exp.~2.2: Kepler multivariate & 3 & 51 s & RTX 3000 Ada Laptop & 57\,MB & 3.2 min \\
    Exp.~2.3 (CDS): cross-species CDS detection & 12 & 2.2 min & RTX 3000 Ada Laptop & 446\,MB & 26.9 min \\
    Exp.~3.1: near-boundary val.\ vs.\ CV & 3 & $<$1 s & CPU & -- & $<$1 s \\
    \cmidrule(l){1-6}
    \textit{Subtotal} & 27 &  &  &  & 32.5 min \\
    \midrule
    \multicolumn{6}{l}{\textit{Shared HPC cluster (RTX 4090 / CPU)}} \\
    \cmidrule(l){1-6}
    Exp.~2.1: mass-action kinetics ODE / vector field & 3 & 7 s & RTX 4090 & 21\,MB & 19 s \\
    Exp.~3.3 Arch-Inv: PE $\times$ architecture, primary block & 156 & 4.5 min & RTX 4090 & 11.2\,GB & 1.1 d \\
    \quad\,$\hookrightarrow$ Sweep~L: period-length sweep, learned PEs & 108 & 14.4 min & RTX 4090 & 11.2\,GB & 2.2 d \\
    \quad\,$\hookrightarrow$ sweeps A--D: $L_\mathrm{train}$ / $L_\mathrm{OOD}$ / $d_\mathrm{state}$ / PE-init & 317 & 27.9 min & RTX 4090 & 22.3\,GB & 13.3 d \\
    Exp.~4.1 (Tor-Repr): representation alignment & 100 & 1.0 min & RTX 4090 & 62\,MB & 1.7 h \\
    Exp.~4.2 (Tor-Cov): critical coverage scaling & 100 & 1.1 min & RTX 4090 & 542\,MB & 1.8 h \\
    Exp.~4.3 (Tor-Cap): capacity at varied coverage & 100 & 4.8 min & RTX 4090 & 542\,MB & 8.1 h \\
    Exp.~4.4 (Tor-Noise): noise robustness across $\sigma$ & 100 & 4.1 min & RTX 4090 & 61\,MB & 6.9 h \\
    \cmidrule(l){1-6}
    \textit{Subtotal} & 984 &  &  &  & 17.3 d \\
    \midrule
    \textbf{Total} & 1\,011 &  &  &  & 17.4 d \\
    \bottomrule
  \end{tabular}%
  }
\end{table}

\FloatBarrier

\end{document}